\documentclass[11pt]{article}

\usepackage[final]{acl}

\usepackage{times}
\usepackage{latexsym}

\usepackage[T1]{fontenc}

\usepackage[utf8]{inputenc}

\usepackage{microtype}

\usepackage{inconsolata}

\usepackage{graphicx}
\usepackage{amsmath}
\usepackage{amsthm}
\usepackage{bm}
\usepackage{amssymb}
\usepackage{booktabs}
\usepackage{multicol}
\usepackage{multirow}
\usepackage[table]{xcolor}
\usepackage{makecell}
\usepackage{hyperref}
\usepackage{tcolorbox}
\usepackage{algorithm}
\usepackage{algpseudocode}

\definecolor{nmgray}{RGB}{245,245,245}

\AtBeginDocument{%
  \setlength{\abovedisplayskip}{4pt plus 2pt minus 2pt}
  \setlength{\belowdisplayskip}{4pt plus 2pt minus 2pt}
  \setlength{\abovedisplayshortskip}{2pt plus 1pt minus 1pt}
  \setlength{\belowdisplayshortskip}{2pt plus 1pt minus 1pt}
}

\newtheorem{theorem}{Theorem}[section] %
\newtheorem{lemma}[theorem]{Lemma} %
\newtheorem{corollary}[theorem]{Corollary}

\newtheorem{definition}{Definition}[section] %
\newtheorem{assumption}{Assumption}[section] %

\usepackage{enumitem}
\usepackage{xspace}    
\newcommand{\mname}{{\fontfamily{lmtt}\selectfont \textbf{LCA}}\xspace} 
\title{The Weakest Link Tells It All: Outcome-Supervised Process Reward Modeling via Learnable Credit Assignment}

\newcommand{\equalcontrib}{\textsuperscript{*}}
\newcommand{\corrauthor}{\textsuperscript{\textdagger}}

\author{
\textbf{
Tianyu Jia\textsuperscript{1,2,3}\equalcontrib,
Yue Fang\textsuperscript{1,2,3}\equalcontrib,
Hongxin Ding\textsuperscript{1,2,3},
Rihong Qiu\textsuperscript{1,2,3},
Zhibang Yang\textsuperscript{1,2,3},}\\
\textbf{
Zhijing Wu\textsuperscript{4}, 
Xu Chu\textsuperscript{2,3,5}\corrauthor,
Junfeng Zhao\textsuperscript{2,3}\corrauthor,
Yasha Wang\textsuperscript{1,3,6}\corrauthor
}
\\[-0.2em]
\textsuperscript{1}National Engineering Research Center of Software Engineering, Peking University, Beijing, China\\[-0.2em]
\textsuperscript{2}School of Computer Science, Peking University, Beijing, China\\[-0.2em]
\textsuperscript{3}Key Laboratory of High Confidence Software Technologies, Ministry of Education, Beijing, China\\[-0.2em]
\textsuperscript{4}GRG Banking Equipment Co., Ltd., Guangzhou, China\\[-0.2em]
\textsuperscript{5}Center on Frontiers of Computing Studies, Peking University, Beijing, China\\[-0.2em]
\textsuperscript{6}Peking University Information Technology Institute (Tianjin Binhai), Tianjin, China\\[-0.2em]
   {\small
    \{jiatianyu,yuefang25\}@stu.pku.edu.cn, \{chu\_xu, zhaojf, wangyasha\}@pku.edu.cn 
}
}

\begin{document}
\maketitle
\begingroup
\renewcommand{\thefootnote}{\fnsymbol{footnote}}
\footnotetext[1]{Equal contribution.}
\footnotetext[2]{Corresponding authors.}
\endgroup
\begin{abstract}
Process reward models (PRMs) enhance the reasoning capabilities of large language models (LLMs)
by providing fine-grained feedback, yet training PRMs typically requires expensive stepwise annotations. 
Outcome-supervised PRMs offer a scalable alternative by learning from final-answer correctness alone, but this introduces a fundamental \emph{credit assignment} challenge, i.e.,
attributing outcomes to responsible reasoning steps.
Existing approaches rely on either uniform or causal assignment,
both of which fail to anchor credit in step correctness and thus hinder process error identification.

In this work, we propose Outcome-Supervised Process Reward Modeling via \underline{L}earnable \underline{C}redit \underline{A}ssignment (\mname{}), an outcome-supervised PRM framework that jointly learns credit assignment and reward modeling under the principle of \emph{Weakest Link Assignment: a reasoning chain is as strong as its weakest link}.
To address mutual dependence between credit assignment and reward modeling, we formalize outcome-supervised PRM as a Multiple Instance Learning (MIL) problem and introduce Softmax-Weighted-Sum (SWS) pooling, an MIL pooling technique tailored for strong dependence and redundancy among reasoning states. 
We prove Bayes consistency of our algorithm under mild assumptions.
Extensive experiments demonstrate that 
\mname{} consistently outperforms state-of-the-art outcome-supervised PRMs across multiple tasks and backbones. Code is available at \href{https://anonymous.4open.science/r/LCA}{https://anonymous.4open.science/r/LCA}.
\end{abstract}

\section{Introduction}

Process reward modeling (PRM) plays a crucial role in reasoning-intensive tasks for large language models (LLMs), such as 
mathematical problem-solving ~\cite{shao2024deepseekmathpushinglimitsmathematical, yu2024metamath} and code generation~\cite{li-etal-2025-codeprm, zhang2025dreamprmcodefunctionasstepprocessreward}.
Unlike outcome reward models (ORMs) ~\cite{cobbe2021trainingverifierssolvemath}, which evaluate only the final answer 
, 
PRMs~\cite{mathshepherd, letsverifystepbystep, luo2024improvemathematicalreasoninglanguage} assess the correctness of each intermediate reasoning step, providing dense, step-level signals that enhance test-time scaling~\cite{mathshepherd, snell2025scaling} and reinforcement learning~\cite{rewardingprogress, cheng2026stop}.

However, training PRMs requires step-level correctness labels, which are prohibitively expensive to collect from humans~\cite{letsverifystepbystep}. A natural alternative is to use final-answer correctness, cheap to verify and widely available, as ground-truth supervision. To recover step-level rewards from this trajectory-level signal, one must solve a \emph{credit assignment} problem: attributing trajectory-level correctness to the steps that cause it. This setting is termed \emph{outcome-supervised PRM}.

\begin{figure}[t]
  \includegraphics[width=\columnwidth]{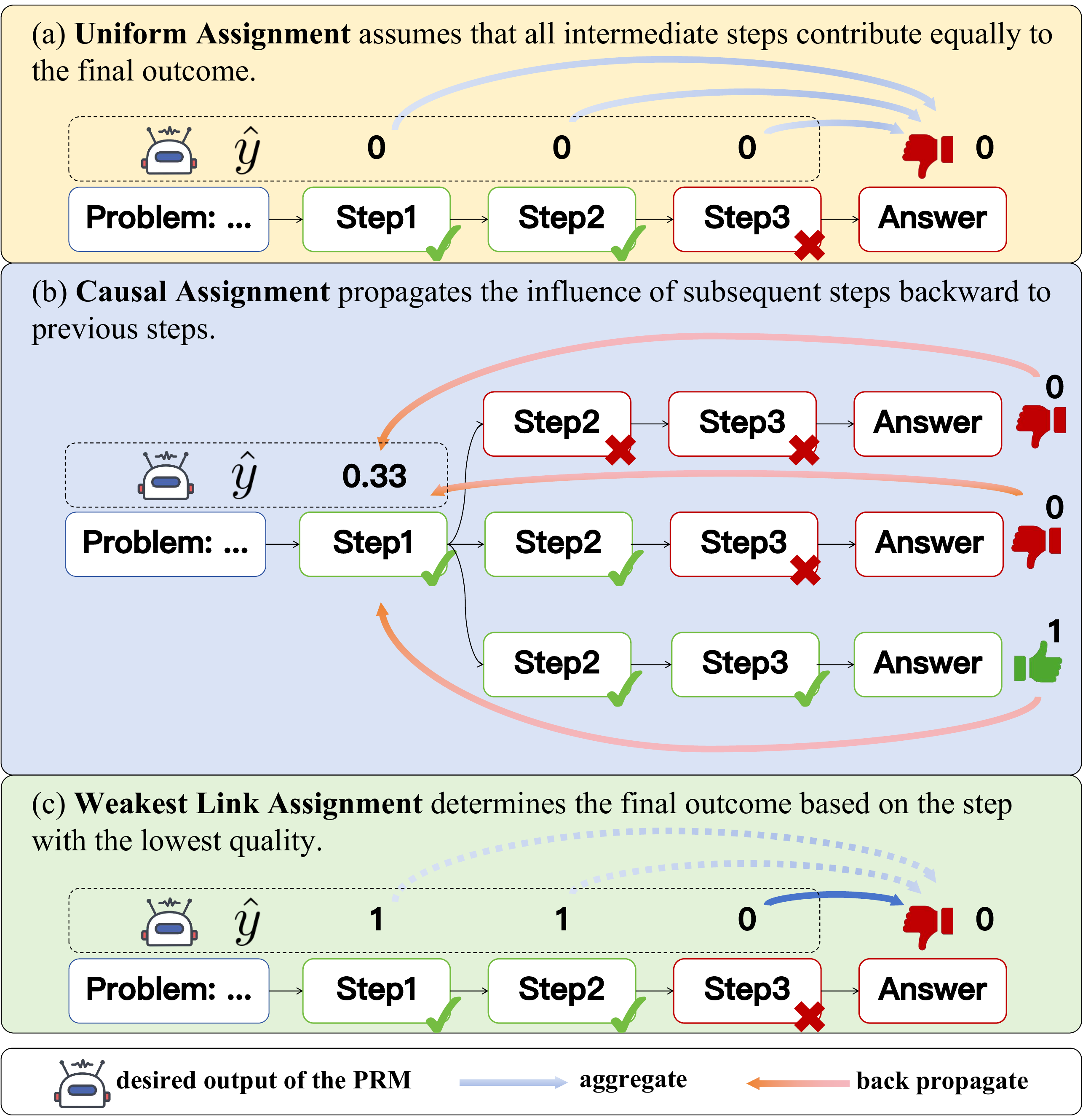}
  \caption{Illustration of different credit assignment strategies.
  }
  \label{fig:intro}
  \vspace{-1.2em}
\end{figure}

Existing methods mainly fall into two perspectives.
\begin{itemize}
[leftmargin=*,itemsep=2pt,parsep=0pt,topsep=2pt,partopsep=0pt]
    \item \textit{\textbf{Uniform Assignment} via Additive Aggregation}: As illustrated in Fig.~\ref{fig:intro}(a), this line~\cite{chen2025discriminativepolicyoptimizationfortokenlevelrewardmodels,implicitprm} aggregates step-level scores by summation or averaging into a single trajectory score for optimization. Every step thus receives equal credit, regardless of which is actually responsible for the outcome.
    \item \textit{\textbf{Causal Assignment} via Rollout Estimation}: As illustrated in Fig.~\ref{fig:intro}(b), this line~\cite{mathshepherd,luo2024improvemathematicalreasoninglanguage,rstarmath} pseudo-labels each step via Monte Carlo estimation of the probability of reaching a correct final answer. Each step's score thus reflects the expected success of its downstream rollouts rather than its own correctness, making the actual error hard to localize. The estimation further depends on rollout policy and sampling design, which risks introducing bias and noise.
\end{itemize}

Both approaches share a common deficiency: neither\textbf{ attributes credit based on whether each step is actually correct}. Uniform Assignment treats every step the same; Causal Assignment lets a step's score depend on what comes after it. We propose to anchor credit assignment directly in step correctness, drawing on a long-standing intuition: a reasoning chain is only as strong as its weakest link~\cite{prasad2023receval, jacovi-etal-2024-AChain-of-ThoughtIsasStrongasItsWeakestLink, li-etal-2023-MakingLanguageModelsBetterReasonerswithStepAwareVerifier, gilda2026StructuredAbductive-Deductive-InductiveReasoningforLLMsviaAlgebraicInvariants}. A chain is correct if and only if every step is correct, and incorrect if and only if any step is wrong. We term this \emph{Weakest Link Assignment} (Figure~\ref{fig:intro}(c)). 
Training under this principle directly targets the same first-error criterion widely adopted in PRM evaluation and applications~\cite{cheng2026stop, letsverifystepbystep, zheng-etal-2025-processbench}, thereby closing the training–test gap and producing a PRM that accurately attributes credit to the outcome-determining steps. 

Yet applying this principle to training poses a chicken-and-egg dilemma: assigning credit to specific steps requires knowing which step is wrong, while identifying wrong steps is precisely what the PRM is being trained to do. \textit{Credit assignment} and \textit{reward modeling} are thus mutually dependent.

To resolve this challenge, we propose Outcome-Supervised Process Reward Modeling via \underline{L}earnable \underline{C}redit \underline{A}ssignment (\mname{}), an outcome-supervised PRM framework that jointly learns credit assignment and reward modeling. 
Through a structural reduction, we formalize outcome-supervised PRM as a Multiple Instance Learning (MIL) problem~\cite{MILsurvey}: each trajectory prefix is an ``instance'', the whole trajectory is a ``bag'' (i.e., a set of instances), and the Weakest Link principle is the Standard Multi-Instance Assumption verbatim. This formalization places outcome-supervised PRM within the principled MIL framework, which jointly learns instance classification and the bag-to-instance attribution under bag-level supervision, thereby dissolving the chicken-and-egg dilemma between credit assignment and reward modeling. Through Softmax-Weighted-Sum (SWS) pooling, we resolve the degeneration of mainstream MIL methods under inter-instance dependence and redundancy, and embed a smoothed Weakest Link Assignment as a structural prior, yielding a learning objective that is Bayes consistent at the instance level.

To summarize, our contributions are threefold:
\begin{itemize}[leftmargin=*,noitemsep,topsep=2pt]
    \item \textbf{Insightfully}, through a systematic analysis of existing outcome-supervised PRM methods, we identify a shared deficiency: neither \textit{Uniform Assignment} nor \textit{Causal Assignment} anchors credit in step correctness. We propose \textit{Weakest Link Assignment} as a principled alternative, yielding PRMs that accurately attribute credit to the steps that cause the outcome.

    \item \textbf{Technically}, we propose \mname{}, an outcome-supervised PRM framework that (1) \textit{formalizes outcome-supervised credit assignment as a Multiple Instance Learning (MIL) problem} under the Weakest Link principle; (2) \textit{introduces Softmax-Weighted-Sum (SWS) pooling}, an MIL pooling technique that adaptively reweights reasoning steps and embeds a smoothed Weakest Link Assignment as a structural prior, addressing the degeneration of mainstream MIL methods under inter-instance dependence and redundancy; and (3) \textit{proves Bayes consistency} of the resulting learning objective at the instance level.

    \item \textbf{Experimentally}, on error identification and best-of-N test-time scaling, \mname{} consistently outperforms current state-of-the-art outcome-supervised PRM baselines across multiple LLM backbones and generator models.
    Ablation studies further validate SWS pooling against widely used MIL pooling techniques and characterize the role of the temperature parameter.
\end{itemize}
\section{Preliminaries}
\subsection{Multiple Instance Learning}\label{sec:mil}

Multiple Instance Learning (MIL) is a weakly-supervised learning method designed for settings where only coarse-grained labels are available but fine-grained classification is required. MIL assumes a hierarchical structure of instances and bags. Let $\mathcal{X}$ be the instance space. A bag $B$ is a finite set of instances $x_i \in \mathcal{X}$. The bag space is $\mathcal{B}$, with a data distribution $\mathbb{P}$ over $\mathcal{B}$.

In the standard binary MIL setting, each instance and each bag has a label $y \in \{0,1\}$ (0 = negative, 1 = positive). Instance labels are unobserved; only bag labels are given. Given a training set $D = \{ \langle B^k, y(B^k) \rangle \}_{k=1}^K$ drawn from $\mathbb{P}$, the goal is to learn an instance-level classifier $f: \mathcal{X} \to [0,1]$.

Most MIL algorithms rely on the \emph{Standard Multi-Instance Assumption}~\cite{standardMILassumption}:
\begin{assumption}[Standard Multi-Instance Assumption]
    A bag is positive if and only if it contains at least one positive instance:
    \begin{align*}
        y(B) = \max\{y(x_1),\dots,y(x_n)\}, \\
        \quad \forall B = \{x_1,\dots,x_n\} \in \mathcal{B}.
    \end{align*}
\end{assumption}
This assumption holds in many real-world applications, such as anomaly detection in long time series or whole-slide images (WSIs)~\cite{milforimage,milfortimeseries}. Anomalies typically appear in local regions, yet the label is assigned to the entire input. The goal of MIL in such cases is to pinpoint the anomaly location.

Modern MIL methods are predominantly pooling-based. These methods develop instance-level classifiers and use a pooling function to aggregate instance predictions into a bag-level prediction. Standard supervised learning optimizes the bag-level prediction, typically using a cross-entropy loss:
\begin{align*}
 L_{CE}\Big(y(B), \text{Pool}(\{p_1,\dots,p_n\})\Big), 
\end{align*}
where $p_1,\dots,p_n$ are instance predictions and $\text{Pool}$ is the pooling function.
In this way, instance classification and bag-to-instance attribution are jointly optimized in an end-to-end manner.

A more detailed discussion of MIL pooling techniques is deferred to Appendix~\ref{app:pooling}.

\subsection{Process Reward Model}

\paragraph{Reasoning trajectory.}
A reasoning trajectory $\tau$ consists of a question $q$ followed by a sequence of steps $s_1, \dots, s_T$:
\[
\tau := \langle q, s_1, \dots, s_T \rangle,
\]
where each step advances the reasoning based on the question and all previous steps.\footnote{For clarity, we consider only sequential reasoning without self-correction in the main text. 
Our method naturally applies to reasoning with reflection and backtracking 
through a simple conceptual transformation; see Appendix~\ref{app:reflection}.}
An incomplete trajectory is termed a \emph{prefix}:
\[
\tau_{:t} := \langle q, s_1, \dots, s_t \rangle, \quad t \le T.
\]

\paragraph{Correctness of reasoning prefixes.}
A reasoning step is logically erroneous if its conclusion is false or does not follow from the preceding context. Once such an error occurs, subsequent steps inherit the error, making their individual correctness ill-defined~\cite{letsverifystepbystep}. We therefore assign correctness labels to prefixes rather than to isolated steps.

Formally, define correctness of a prefix $\tau_{:t}$ as
\[
y(\tau_{:t}) \in \{0,1\},
\]
where $1$ denotes that $\tau_{:t}$ contains logical error(s) (positive) and $0$ denotes no error (negative), matching the standard MIL convention. By this definition, error monotonicity follows immediately:
\begin{equation}\label{eq:label-monotonic}
    y(\tau_{:t+1}) \ge y(\tau_{:t}),
\end{equation}
because any extension of an erroneous prefix remains erroneous. The correctness of the full trajectory is thus
\begin{equation}\label{eq:weakest-link}
    y(\tau) = y(\tau_{:T}) = \max\{y(\tau_{:1}), \dots, y(\tau_{:T})\},
\end{equation}
which is precisely the Weakest Link Assignment.

\paragraph{Process reward modeling.}
Process reward modeling learns a classifier $f$ that maps each prefix to a correctness score: $f(\tau_{:t}) \in [0,1]$,
with the goal that $f(\tau_{:t}) \approx y(\tau_{:t})$. This is equivalent to \emph{first-error detection}~\cite{letsverifystepbystep, zheng-etal-2025-processbench}: the first prefix for which $f$ signals an error identifies the earliest step that caused the failure.
\section{Analysis}

\begin{figure*}[t]
  \includegraphics[width=\textwidth]{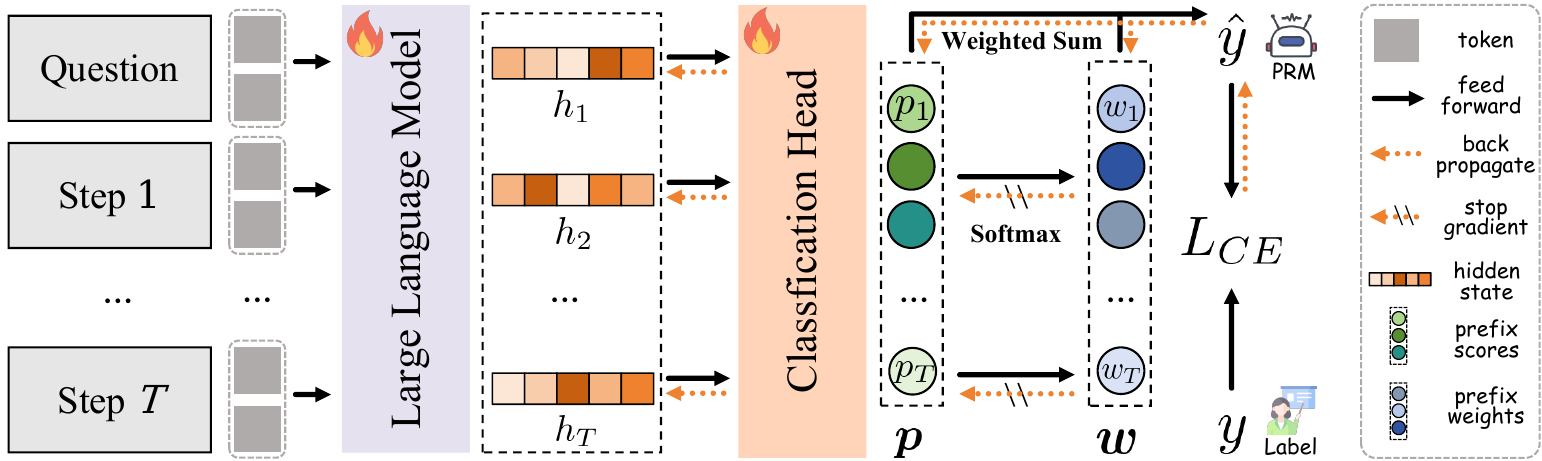}
  \vspace{-1.2em}
  \caption{Overview of \mname{}. The PRM outputs per-prefix error probabilities $\bm{p}$, which are then aggregated via Softmax-Weighted-Sum (SWS) pooling. The PRM is trained end‑to‑end with a trajectory‑level cross‑entropy loss.}
  \label{fig:method}
  \vspace{-1.0em}
\end{figure*}

\subsection{Process Reward Modeling as Multiple Instance Learning}

Under the Weakest Link principle, we formalize outcome-supervised process reward modeling as a Multiple Instance Learning (MIL) problem:
\begin{itemize}
[leftmargin=*,itemsep=2pt,parsep=0pt,topsep=2pt,partopsep=0pt]
    \item \textbf{Data structure.} A full trajectory $\tau$ corresponds to a bag, and each prefix $\tau_{:t}$ to an instance.
    We thus use ``bag'' and ``trajectory'' (resp.\ ``instance'' and ``prefix'') interchangeably when the context is clear. 
    \item \textbf{Assumption.} By the definition of prefix correctness, the bag-level label is the maximum over instance labels (Eq.~\ref{eq:weakest-link});
    this exactly matches the Standard Multi-Instance Assumption. 
    \item \textbf{Learning objective.} Given a dataset with trajectory-level labels: $D = \{\langle\tau^k, y(\tau^k)\rangle\}_{k=1}^K$,
    our goal is to learn a prefix-level classifier $f$ such that $f(\tau_{:t}) \approx y(\tau_{:t})$, consistent with the paradigm of MIL.
\end{itemize}

Despite this formal alignment, outcome-supervised PRM raises a unique challenge absent in classic MIL: \textbf{prefixes within a trajectory are both highly dependent and redundant, violating the common assumption of instance independence or weak dependence}~\cite{MILsurvey}. This violation stems from three structural properties of reasoning trajectories:
\begin{itemize}
[leftmargin=*,itemsep=2pt,parsep=0pt,topsep=2pt,partopsep=0pt]
    \item \textbf{Autoregressive}: each reasoning step is conditioned on all previous steps.
    \item \textbf{Nested}: each prefix includes previous ones.
    \item \textbf{Monotonic}: prefix labels are monotonic (Eq.~\ref{eq:label-monotonic}).
\end{itemize}
Together, these three properties render each instance highly informative about the others. Most critically, the last prefix alone suffices to determine the bag label, making all earlier prefixes ignorable for bag-level classification.

Consequently, conventional MIL pooling techniques struggle in our setting. 
In preliminary experiments, we applied max-pooling-based MIL, 
the most direct realization of the Standard Multi-Instance Assumption. 
Yet the PRM rapidly degenerated into an ORM,
producing meaningful predictions only at the final prefix while outputting a trivial value for all preceding prefixes. Detailed setup and results are provided in Appendix~\ref{app:preliminary-experiments}. 
Attention-based MIL methods, though most prevalent in the deep learning era, are prone to a similar failure mode.
The model could focus its attention exclusively on the last prefix, thus offering no guarantee of instance-level classification performance.

\subsection{Multiple Instance Learning with Softmax-Weighted-Sum Pooling}

To handle the strong dependence and redundancy among instances, we propose Softmax-Weighted-Sum (SWS) pooling. Given instance-level predictions 
$\bm{p} = (p_1,\dots,p_T) \in [0,1]^T$, the bag-level prediction is
\begin{equation}
    \text{SWS}_\alpha(\bm{p}) = \sum_{t=1}^T w_t p_t,\quad
    w_t = \frac{e^{p_t/\alpha}}{\sum_{j=1}^T e^{p_j/\alpha}},
\end{equation}
where the temperature $\alpha>0$ controls the entropy of the weight distribution. 
As $\alpha\to 0$, SWS approaches max pooling; as $\alpha\to\infty$, it approaches 
average pooling.


SWS pooling offers two key properties for our setting.
First, because every weight $w_t$ admits a strictly positive lower bound under bounded bag sizes, each instance is guaranteed to receive a non-negligible credit. This prevents the model from basing its predictions on an overly narrow subset of instances and thus promotes more generalizable instance classification.
Second, the softmax weighting naturally encodes the Standard Multi-Instance Assumption as a structural prior: the weight of each instance grows with its predicted positivity, aligning with the Weakest Link Assignment where positive instances dominate the bag prediction.

SWS-pooling-based MIL provides an adaptive yet principled credit assignment mechanism. The model learns instance predictions $p_t$, which in turn determine the softmax weights $w_t$ and thus each instance’s contribution to the bag-level 
loss. In this way, the model \emph{learns} which instances drive the optimization objective under pure bag-level supervision, while the softmax-based architecture embeds the Standard MIL assumption to constrain the solution space.

\subsection{Theoretical Analysis}\label{sec:theory}
We provide theoretical results about the Bayes consistency of SWS pooling. The analysis applies not only to our settings of outcome-supervised PRM but to any MIL setting that satisfies Assumption~\ref{ass:M}. We therefore adopt the general MIL notation introduced in Section~\ref{sec:mil}.
\begin{definition}[Bag cover set]\label{def:bag_cover_set}
    Let $S \subseteq \mathcal{X}$ be a set of instances. 
    The \emph{bag cover set} of $S$ is the set of all bags that contain at least one instance from $S$:
    \[
        \mathcal{C}(S) := \{ B \in \mathcal{B} \mid B \cap S \neq \emptyset \}.
    \]
    Similarly, the \emph{positive bag cover set} $\mathcal{C}^+(S)$ is the set of all positive bags containing at least one instance from $S$; the \emph{negative bag cover set} $\mathcal{C}^-(S)$ is defined analogously.
\end{definition}
\begin{assumption}[$M$-Assumption]\label{ass:M}
    There exists a constant $M > 0$ such that for any set $S$ of negative instances, if
    \[
        \mathbb{P}(\mathcal{C}(S)) = \mathbb{P}(\mathcal{C}^+(S)) + \mathbb{P}(\mathcal{C}^-(S)) > 0,
    \]
    then
    \[
        \frac{\mathbb{P}(\mathcal{C}^+(S))}{\mathbb{P}(\mathcal{C}^-(S))} \le M.
    \]
\end{assumption}
Assumption~\ref{ass:M} is a mild assumption that prevents the probability ratio of a negative instance appearing in positive bags versus negative bags from being arbitrarily large.
\begin{theorem}[Unique global minimizer]\label{thm:mil-minimize}
Let Assumption~\ref{ass:M} hold. 
Let $N$ be the maximum allowed bag size.
Let $f(x) \in [0,1]$ be an instance classifier. 
For a bag $B = (x_1,\dots,x_n)$ with label $y(B) \in \{0,1\}$, define the softmax weighted sum
\[
\hat{p}(B)=\text{SWS}_\alpha(f(x_1), \cdots, f(x_n)),
\]
and the expected loss
\[
\mathcal{L}(f)=\mathbb{E}_{B\sim\mathbb{P}} L_{\text{CE}}(\hat{p}(B), y(B)).
\]
Then there exists a sufficiently small temperature $\alpha > 0$ such that, up to a set of instances $S$ whose bag cover set has probability zero, $\mathcal{L}$ is uniquely minimized by the true label function:
\[
f^*(x) = y(x) \in \{0,1\}, \qquad \forall x \notin S.
\]
\end{theorem}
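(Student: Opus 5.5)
The plan is to compare an arbitrary classifier $f$ with $f^*=y$ bag by bag. Then I would show that any deviation of $f$ from $y$ on a set of instances whose bag cover set has positive probability strictly increases $\mathcal{L}$ once $\alpha$ is small enough. First I would record elementary facts about $\text{SWS}_\alpha$ on $[0,1]^n$ with $n\le N$:
\begin{itemize}
\item $\text{SWS}_\alpha(\bm p)=0$ iff $\bm p=\bm 0$.
\item Every weight satisfies $w_t\ge 1/(1+(N-1)e^{1/\alpha})$.
\item If some coordinate equals $1$ and the others lie in $[0,1]$, then $1-\text{SWS}_\alpha(\bm p)\le (N-1)e^{-1/\alpha}$.
\end{itemize}
Also $\partial\,\text{SWS}_\alpha/\partial p_t=w_t\bigl(1+(p_t-\text{SWS}_\alpha)/\alpha\bigr)$. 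With these facts, $f^*$ gives loss exactly $0$ on every negative bag. On a positive bag, which by the Standard Multi-Instance Assumption contains at least one positive instance, $f^*$ gives loss at most $-\log\bigl(1-(N-1)e^{-1/\alpha}\bigr)=O(e^{-1/\alpha})$.

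Next I would split the deviation of $f$ into two parts: its behaviour on positive instances and its behaviour on negative instances. Positive instances occur only in positive bags. So for them I only need that, with the other coordinates fixed, moving a positive instance's score to $1$ does not increase $-\log\text{SWS}_\alpha$ and strictly decreases it when the score is below $1$. The derivative can be negative when $p_t<\text{SWS}_\alpha-\alpha$, so I would not argue by monotonicity. Instead I would compare endpoints directly: setting $p_t=1$ gives $\text{SWS}_\alpha\ge 1-(N-1)e^{-1/\alpha}$, while any configuration with all entries $\le 1-\delta$ has $\text{SWS}_\alpha\le 1-\delta$. For the negative instances, fix $\varepsilon>0$ and the set $S_\varepsilon=\{x: y(x)=0,\ f(x)\ge\varepsilon\}$. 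If $\mathbb{P}(\mathcal{C}(S_\varepsilon))>0$, the case $\mathbb{P}(\mathcal{C}^-(S_\varepsilon))=0$ is excluded by Assumption~\ref{ass:M}. Hence $\mathcal{C}^-(S_\varepsilon)$ has positive mass. On each such negative bag the loss is $-\log(1-\text{SWS}_\alpha)\ge \text{SWS}_\alpha\ge w_{\min}\varepsilon$, so the total penalty is at least $\mathbb{P}(\mathcal{C}^-(S_\varepsilon))\,w_{\min}\varepsilon$.

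The gain $f$ can extract from $S_\varepsilon$ lies on the positive bags in $\mathcal{C}^+(S_\varepsilon)$. Relative to $f^*$, it is at most the $f^*$-loss there, namely $\mathbb{P}(\mathcal{C}^+(S_\varepsilon))\cdot O(e^{-1/\alpha})$. By Assumption~\ref{ass:M} this is at most $M\,\mathbb{P}(\mathcal{C}^-(S_\varepsilon))\cdot O(e^{-1/\alpha})$. The comparison therefore reduces to checking that
\[
w_{\min}\varepsilon \;>\; M\,C_N\,e^{-1/\alpha}, \qquad w_{\min}\approx e^{-1/\alpha}/N,
\]
and this is exactly the main obstacle. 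Both sides decay like $e^{-1/\alpha}$, so a naive bound fails for small $\varepsilon$. I would sharpen the upper bound on the gain. In a positive bag that already contains a score-$1$ entry, raising a negative coordinate from $0$ to $p$ changes $1-\text{SWS}_\alpha$ only by about $e^{-(1-p)/\alpha}(1-p)$ relative to $e^{-1/\alpha}$. That is, the marginal gain is at most of order $p\,e^{-1/\alpha}/\alpha$, which is linear in $p$ with a $1/\alpha$ factor. I would then bound the penalty more carefully as $-\log(1-\text{SWS}_\alpha)$ with the largest coordinate dominating, so that it is of order $p$ rather than $p\,e^{-1/\alpha}$. This needs a lower bound on $\text{SWS}_\alpha$ by $\max_t p_t/N$ rather than by $w_{\min}p$, which holds because larger entries receive larger weights.

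With the inequality ``gain $\lesssim M N p\,e^{-1/\alpha}/\alpha$ versus penalty $\gtrsim p/N$'' in hand, one fixed $\alpha$ with $MN^2e^{-1/\alpha}/\alpha<1$ works uniformly in $\varepsilon$. Letting $\varepsilon\downarrow 0$ then shows that any minimizer vanishes on negative instances outside a set whose bag cover set is null. The positive-instance step shows it equals $1$ on positive instances outside such a set. Combined with the strictness of both comparisons, this yields uniqueness of $f^*$ up to the exceptional set $S$.
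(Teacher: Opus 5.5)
Your proposal has two genuine gaps, one in each half.

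\textbf{The positive-instance step rests on a false fact.} Your third ``elementary fact'' fails when one coordinate is $1$ and the others are arbitrary in $[0,1]$: then $1-\text{SWS}_\alpha$ need not be $O(e^{-1/\alpha})$. For $\bm p=(1,1-\alpha)$ the second weight is $1/(e+1)$, so $1-\text{SWS}_\alpha(\bm p)=\alpha/(e+1)$. The correct uniform bound is $1-\text{SWS}_\alpha\le\alpha u$ with $u=W_0\bigl((N-1)/e\bigr)$ (Lemma~\ref{lem:minimum-advanced} with $\varepsilon=1$). Your bound holds only when the other coordinates lie in $\{0,1\}$. That suffices to bound the loss of $f^*$, but not your endpoint comparison, where the other coordinates are whatever $f$ assigns. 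That comparison also says nothing when another entry of the bag is already at or near $1$. What you need is Corollary~\ref{cor:maximum}: for fixed $p_{-t}$, the map $x\mapsto\text{SWS}_\alpha(x,p_{-t})$ has its unique maximum at $x=1$. By Lemma~\ref{lem:derivative}, $g(x)=\alpha+x-\hat p$ has $g'=1>0$ at any interior zero, so $\hat p$ is decreasing-then-increasing in $x$. A direct computation using $\sum_{j\neq t}p_je^{p_j/\alpha}\le\sum_{j\neq t}e^{p_j/\alpha}$ then gives $\text{SWS}_\alpha(1,p_{-t})>\text{SWS}_\alpha(0,p_{-t})$.

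\textbf{The negative-instance step does not close.} Applying Assumption~\ref{ass:M} to the single set $S_\varepsilon$ balances only bag masses, but scores in $S_\varepsilon$ range over $[\varepsilon,1]$. Negative bags in $\mathcal C^-(S_\varepsilon)$ may have maximum score $\approx\varepsilon$, hence penalty only $\approx\varepsilon/N$. Positive bags in $\mathcal C^+(S_\varepsilon)$ may contain negatives at score $1$ and extract the full $\Theta(e^{-1/\alpha})$. Positive bags outside $\mathcal C^+(S_\varepsilon)$ are not accounted for at all. So ``gain $\lesssim MNp\,e^{-1/\alpha}/\alpha$ versus penalty $\gtrsim p/N$'' does not follow, and uniformity in $\varepsilon$ is asserted rather than shown; your heuristic linear-in-$p$ gain estimate is also unproven.

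The missing idea is the paper's derivative-sign observation. If a bag contains a score-$1$ entry, then $\hat p\ge 1-\alpha u$. By Lemma~\ref{lem:derivative}, lowering to $0$ any coordinate of score at most $\varepsilon_1=1-\alpha(u+1)$ therefore never decreases $\hat p$, so low-score false positives give positive bags no gain whatsoever. The paper zeroes all false positives at once and splits them at $\varepsilon_1$:
\begin{itemize}
\item for high-score ones, zeroing gains at least $-\log(\alpha(2u+1))$ on each negative bag and loses at most $-\log(1-\alpha u)$ on each positive bag, and $M$ balances the two;
\item low-score ones strictly hurt negative bags and cannot help positive ones.
\end{itemize}

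Your direct comparison with $f^*$ can also be rescued, in two moves. First, let $\tilde f$ be $f$ with all positives raised to $1$. The same observation gives the per-bag bound $\log\hat p_{\tilde f}(B)-\log\hat p_{f^*}(B)\le C_Ne^{-1/\alpha}\max_{B\cap\mathcal X^-}\tilde f$. Second, aggregate over the level sets $S_t=\{x\in\mathcal X^-:\tilde f(x)\ge t\}$:
\[
\mathbb E\bigl[\mathbf 1_{y(B)=1}\max_{B\cap\mathcal X^-}\tilde f\bigr]=\int_0^1\mathbb P(\mathcal C^+(S_t))\,dt\le M\int_0^1\mathbb P(\mathcal C^-(S_t))\,dt .
\]
The penalty is at least $\frac1N\mathbb E[\mathbf 1_{y(B)=0}\max_B\tilde f]=\frac1N\int_0^1\mathbb P(\mathcal C^-(S_t))\,dt$. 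Together these yield
\[
\mathcal L(\tilde f)-\mathcal L(f^*)\ge\Bigl(\frac1N-C_NMe^{-1/\alpha}\Bigr)\int_0^1\mathbb P(\mathcal C^-(S_t))\,dt .
\]
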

\begin{theorem}[Convergence in probability]\label{thm:mil-converge}
Let Assumption~\ref{ass:M} hold. 
Let $N$ be the maximum allowed bag size.
Let $f^*(x) = y(x)$ be the Bayes optimal classifier, and let $\{f_l\}$ be a sequence of instance classifiers.
Define the expected loss $\mathcal{L}$ as in Theorem~\ref{thm:mil-minimize}.
Then there exists a sufficiently small temperature $\alpha > 0$ such that, if $\{\mathcal{L}(f_l)\}$ converges to $\mathcal{L}(f^*)$:
\[
    \lim_{l \to \infty} \mathcal{L}(f_l) = \arg\min_f \mathcal{L}(f) = \mathcal{L}(f^*),
\]
then $\{f_l\}$ converges to $f^*$ in probability.
\end{theorem}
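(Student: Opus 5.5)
We need to prove Theorem~\ref{thm:mil-converge}. The plan is to reduce convergence in probability to a quantitative ``loss gap'' estimate built from the same ingredients as Theorem~\ref{thm:mil-minimize}. We read convergence in probability as follows: for every $\varepsilon>0$, the probability (under $\mathbb{P}$) of drawing a bag that contains an instance $x$ with $|f_l(x)-y(x)|>\varepsilon$ tends to zero. Equivalently, $\mathbb{P}(\mathcal{C}(S_l^\varepsilon))\to 0$, where $S_l^\varepsilon=\{x:|f_l(x)-y(x)|>\varepsilon\}$. We argue by contradiction. Suppose that for some $\varepsilon$ and $\delta>0$ we have $\mathbb{P}(\mathcal{C}(S_l^\varepsilon))\ge\delta$ along a subsequence. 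We then aim to show that $\mathcal{L}(f_l)-\mathcal{L}(f^*)\ge c(\varepsilon,\delta,M,N,\alpha)>0$, which contradicts $\mathcal{L}(f_l)\to\mathcal{L}(f^*)$.

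The key steps, in order, are as follows.

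\emph{Step 1 (baseline).} Under $f^*$, every bag is scored exactly as follows. A negative bag has all $p_t=0$, so $\hat p=0$ and its loss is $0$. A positive bag with $k\ge1$ positive instances out of $n\le N$ has
\[
\hat p=\frac{ke^{1/\alpha}}{ke^{1/\alpha}+(n-k)}\ge \frac{e^{1/\alpha}}{e^{1/\alpha}+N-1}=:1-\eta_\alpha .
\]
Hence $\mathcal{L}(f^*)\le -\log(1-\eta_\alpha)$, and $\eta_\alpha\to0$ as $\alpha\to0$.

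\emph{Step 2 (split the bad set).} We write $S_l^\varepsilon=S_l^{-}\cup S_l^{+}$, separating negative instances predicted above $\varepsilon$ from positive instances predicted below $1-\varepsilon$. At least one of $\mathcal{C}(S_l^-)$ and $\mathcal{C}(S_l^+)$ has probability at least $\delta/2$.

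\emph{Step 3 (negative instances overpredicted).} If $\mathbb{P}(\mathcal{C}(S_l^-))\ge\delta/2$, we use Assumption~\ref{ass:M}: from $\mathbb{P}(\mathcal{C}^+)\le M\,\mathbb{P}(\mathcal{C}^-)$ we get $\mathbb{P}(\mathcal{C}^-(S_l^-))\ge \delta/(2(1+M))$. On such a negative bag, $\hat p\ge w_t p_t\ge \varepsilon\cdot e^{\varepsilon/\alpha}/(e^{\varepsilon/\alpha}+(N-1)e^{1/\alpha})$. This is positive but depends on $\alpha$, so it is small when $\alpha$ is small. A cleaner bound is $\hat p\ge \min_t$-free: since SWS is at least the average $\ge \varepsilon/N$ (the softmax weights favor larger values, so SWS $\ge$ mean), the bag contributes a loss of at least $-\log(1-\varepsilon/N)$. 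Positive bags contribute nonnegative loss. This gives $\mathcal{L}(f_l)\ge \frac{\delta}{2(1+M)}(-\log(1-\varepsilon/N))$.

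\emph{Step 4 (positive instances underpredicted).} If $\mathbb{P}(\mathcal{C}(S_l^+))\ge\delta/2$, then every bag in this cover is positive. There we need $\hat p$ to be bounded away from $1$: because SWS is a convex combination, $\hat p=1$ would require every $p_t=1$, which fails for the instance with $p_t<1-\varepsilon$. Since $w_t\ge 1/(N e^{1/\alpha})$, we get $1-\hat p\ge \varepsilon/(Ne^{1/\alpha})$, and the loss is at least $-\log(1-\varepsilon e^{-1/\alpha}/N)$. This gives $\mathcal{L}(f_l)\ge \frac{\delta}{2}(-\log(1-\varepsilon e^{-1/\alpha}/N))$.

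\emph{Step 5 (choose $\alpha$).} The main obstacle is that the lower bound in Step 4 and the baseline $-\log(1-\eta_\alpha)\approx (N-1)e^{-1/\alpha}$ both decay like $e^{-1/\alpha}$. So they must be compared at the same scale rather than by taking $\alpha\to0$ naively. Moreover, the statement requires a single $\alpha$ that works for every $\varepsilon,\delta$.

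I would first try a sharper comparison that avoids comparing against the crude baseline. For each positive bag in $\mathcal{C}(S_l^+)$, I would compare directly with $f^*$ on that same bag. The loss of $f^*$ on that bag is $-\log\hat p^*$. The loss of $f_l$ is larger by an amount of order $\varepsilon\, e^{-1/\alpha}$ times a factor that stays bounded away from $0$, via the monotonicity of SWS in each coordinate near the all-$\{0,1\}$ configuration.

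For the remaining bags, where $f_l$ might do \emph{better} than $f^*$, I would invoke the mechanism from Theorem~\ref{thm:mil-minimize}. On a positive bag, loss can only be gained below $f^*$ by raising negative instances, and Assumption~\ref{ass:M} together with a small $\alpha$ makes this raising cost more on negative bags than it gains on positive bags. In other words, I would use the fixed $\alpha$ supplied by Theorem~\ref{thm:mil-minimize}, below which the trade-off ratio is dominated by $M$. I would then show that the excess $\mathcal{L}(f_l)-\mathcal{L}(f^*)$ is bounded below by a continuous, strictly positive function of $(\varepsilon,\delta)$ at that fixed $\alpha$, using the compactness of $[0,1]^N$ pointwise per bag size.

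Making this per-bag comparison uniform is the step I expect to be hardest.
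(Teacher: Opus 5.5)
\textbf{There is a genuine gap: the proposal does not yet prove the theorem.} Your Steps 1--4 give only absolute lower bounds on $\mathcal{L}(f_l)$, and these cannot close the argument. The temperature $\alpha$ is fixed before $\varepsilon,\delta$. At that $\alpha$, $\mathcal{L}(f^*)$ is a fixed positive number whenever positive bags contain negative instances with positive probability, as in the PRM setting. Both your Step 3 and Step 4 bounds tend to $0$ as $\delta\to0$, so they never exceed it.

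Everything therefore rests on the Step 5 sketch, and that sketch fails. You claim that on a positive bag in $\mathcal{C}(S_l^+)$ the loss of $f_l$ exceeds that of $f^*$. This is false, because the same bag may contain negatives that $f_l$ overpredicts. Take one positive predicted at $1-\eta$ (with $\eta>\varepsilon$) and $N-1$ negatives predicted at $1$. Then $1-\hat p_l=\eta/\bigl(1+(N-1)e^{\eta/\alpha}\bigr)\le\eta/N$. This is below $1-\hat p^*=(N-1)/(e^{1/\alpha}+N-1)$ as soon as $\eta<N(N-1)/(e^{1/\alpha}+N-1)$. So for small $\varepsilon$, $f_l$ is strictly better than $f^*$ on that bag.

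The compensating loss sits in the negative bags that share those overpredicted negatives; their mass is what the $M$-assumption controls. This is a coupling across bags through shared instances. Neither a per-bag comparison nor compactness of $[0,1]^N$ per bag size captures it. The qualitative Theorem~\ref{thm:mil-minimize} also gives no bound that is uniform in $f$.

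\textbf{The missing idea} is to stop comparing with $f^*$ bag by bag and instead telescope through an intermediate classifier, using quantitative versions of the two correction steps. Let $S_{\mathrm{FN}}=\{x\in\mathcal{X}^+:f(x)\le1-\varepsilon\}$, $S_{\mathrm{FP}}=\{x\in\mathcal{X}^-:f(x)\ge\varepsilon\}$ and $u=W_0((N-1)/e)$.

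\emph{Step A: raise every positive instance to $1$.} By Lemma~\ref{lem:increment} this never lowers $\hat p$ on any bag, whatever the negatives do. On $\mathcal{C}(S_{\mathrm{FN}})$ it raises $\hat p$ by at least $\delta_0=\min\{1/(1+(N-1)e^{1/\alpha}),\,\varepsilon/(1+(N-1)e^{\varepsilon/\alpha})\}$. The loss therefore drops by at least $\mathbb{P}(\mathcal{C}(S_{\mathrm{FN}}))\log(1+\delta_0)$.

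\emph{Step B: zero out all false positives.} Every positive bag now has an instance at $1$, hence $\hat p\ge1-\alpha u$ on it by Lemma~\ref{lem:minimum-advanced}. Split the false positives at $\varepsilon_1=1-\alpha u-\alpha$.
Negatives above $\varepsilon_1$ cost at most $-\log(1-\alpha u)$ per positive bag. They gain at least $-\log(\alpha(2u+1))$ per negative bag, and $M$ bounds the ratio of the two masses.
Zeroing negatives at most $\varepsilon_1$ never lowers $\hat p$ on a positive bag, since there $\partial\hat p/\partial p_x=\frac{w_x}{\alpha}(\alpha+p_x-\hat p)\le0$.
Each remaining negative bag meeting $S_{\mathrm{FP}}$ gains at least $-\log(1-\hat p_{\min}(\varepsilon,\alpha,N))$.
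Combined with $\mathbb{P}(\mathcal{C}^-(S_{\mathrm{FP}}))\ge\mathbb{P}(\mathcal{C}(S_{\mathrm{FP}}))/(1+M)$, this gives a drop proportional to $\mathbb{P}(\mathcal{C}(S_{\mathrm{FP}}))$.

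\emph{Conclusion.} Summing the two steps gives $\mathcal{L}(f)-\mathcal{L}(f^*)\ge\delta\cdot c(\varepsilon)>0$. This holds for a single $\alpha$ depending only on $N$ and $M$, namely any $\alpha$ with $\alpha<1/(2(u+1))$ and $-\log(\alpha(2u+1))+M\log(1-\alpha u)>0$.

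In this formulation your $e^{-1/\alpha}$-scale obstacle disappears. At fixed $\alpha$ you only need $\log(1+\delta_0)>0$, and you never compare against $\mathcal{L}(f^*)$ itself.
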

Together, Theorem~\ref{thm:mil-minimize} and Theorem~\ref{thm:mil-converge} guarantee a desirable property of SWS-pooling-based MIL algorithms: by minimizing the expected loss $\mathcal{L}$, the instance classifier $f$ approximates the Bayes optimal classifier $f^*$. Consequently, even without instance-level supervision, the learned classifier is expected to perform well on instance classification.

Our theoretical analysis reveals the central role of temperature $\alpha$ in SWS pooling. Intuitively, a larger $\alpha$ makes the credit assignment more uniform, risking misclassification of negative instances that frequently appear in positive bags. So Bayes consistency requires $\alpha$ to be sufficiently small, as stated in Theorem~\ref{thm:mil-minimize} and Theorem~\ref{thm:mil-converge}.
On the other hand, however, an excessively small $\alpha$ concentrates the credit assignment on a single instance, thereby starving co-occurring positive instances of adequate optimization signals. Consequently, the choice of $\alpha$ involves a trade-off between theoretical soundness and empirical effectiveness.

A more formal statement of the definitions, assumptions, theorems, and their detailed proofs can be found in Appendix~\ref{app:theorem}.
\section{Method}

Building on the above analysis, we propose
\mname{}, an outcome-supervised PRM framework that jointly learns credit assignment and reward modeling via SWS-pooling-based MIL algorithm.
Figure~\ref{fig:method} illustrates the overall pipeline.

We adopt the standard discriminative PRM architecture~\cite{zheng2026surveyprocessrewardmodels},
which appends a binary classification head to a pretrained LLM backbone.
Given a full reasoning trajectory $\tau = \langle q, s_1, \dots, s_T\rangle$,
the LLM backbone $\mathcal{M}$ extracts per-prefix representations in a single forward pass and the classification head $\psi$
outputs the predicted error probability for each prefix:
\begin{equation}
    f(\tau) = \psi(\mathcal{M}(\tau)) = (p_1,\dots, p_T) \in [0,1]^T,
\end{equation}
where $p_t$ denotes the probability that prefix $\tau_{:t}$ is \emph{incorrect}.

We then aggregate these prefix-level predictions into a trajectory-level score
using SWS pooling:
\begin{equation}
    \hat{p}(\tau) = \text{SWS}_\alpha(\bm{p}) = \text{SWS}_\alpha(p_1,\cdots, p_T).
\end{equation}

We train the PRM end-to-end by minimizing the empirical cross-entropy loss at trajectory level:
\begin{equation}
\begin{aligned}
\hat{\mathcal{L}}_K
&= \frac{1}{K}\sum_{k=1}^{K}
L_{CE}\bigl(\hat{p}(\tau^k), y(\tau^k)\bigr) \\
&= -\frac{1}{K} \sum_{k=1}^{K}
\Bigl(
y(\tau^k)\log \hat{p}(\tau^k) \\
&\qquad\qquad
+ (1-y(\tau^k))\log(1-\hat{p}(\tau^k))
\Bigr).
\end{aligned}
\end{equation}
where $y(\tau^k)\in\{0,1\}$ indicates whether the final answer is correct ($0$)
or incorrect ($1$). No step-level supervision is required.
In practice, we apply a stop-gradient operation to the softmax weights $w_t$ to stabilize training.

\begin{table*}[t]
\centering
\fontsize{7.5pt}{8pt}\selectfont
\resizebox{\textwidth}{!}{
\begin{tabular}{lccccc}
\toprule
\textbf{Model}
& \textbf{GSM8K} 
& \textbf{MATH} 
& \textbf{OlympiadBench} 
& \textbf{Omni-MATH} 
& \textbf{Avg.} \\
\midrule
\multicolumn{6}{c}{\textit{Large Language Models as Critic}} \\
\midrule
Llama-3-8B-Instruct           & 13.1 & 13.8 & 4.8  & 12.6 & 11.1 \\
Llama-3.1-8B-Instruct         & 10.9 & 5.1  & 2.8  & 1.6  & 5.1  \\
Qwen2.5-7B-Instruct           & 36.5 & 36.6 & 29.7 & 27.4 & 32.6 \\
Qwen2.5-Math-7B-Instruct      & 26.8 & 25.7 & 14.2 & 12.7 & 19.9 \\
Qwen2.5-Coder-7B-Instruct     & 14.3 & 6.5  & 4.1  & 1.8  & 6.7  \\
\midrule
\multicolumn{6}{c}{\textit{Open Source Process Reward Models}} \\
\midrule
Math-Shepherd-PRM-7B          & 47.9 & 29.5 & 24.8 & 23.8 & 31.5 \\
RLHFlow-PRM-Mistral-8B        & 50.4 & 33.4 & 13.8 & 15.8 & 28.4 \\
RLHFlow-PRM-Deepseek-8B       & 38.8 & 33.8 & 16.9 & 16.9 & 26.6 \\
EurusPRM-Stage1               & 44.3 & 35.6 & 21.7 & 23.1 & 31.2 \\
EurusPRM-Stage2               & 47.3 & 35.7 & 21.2 & 20.9 & 31.3 \\
Qwen2.5-Math-7B-Math-Shepherd-PRM & 62.5 & 31.6 & 13.7 & 7.7  & 28.9 \\
\midrule
\multicolumn{6}{c}{\textit{Process Reward Models Trained on \textbf{Llama3.2-3B-Instruct}}} \\
\midrule
ImplicitPRM & \textbf{54.8} & 34.1 & \underline{22.2} & 12.4  & \underline{30.9} \\
MathShepherd & 38.8 & 22.7 & 14.5 & 11.4  & 21.8 \\
SCAN & 47.6 & 27.6 & 16.2 & 12.2  & 25.9 \\
OmegaPRM & 38.7 & 30.1 & 18.2 & 14.1  & 25.3 \\
PQM & 34.4 & \underline{38.3} & 20.2 & \underline{27.9}  & 30.2 \\
\rowcolor{blue!10}
\mname{} (Ours, $\alpha=1$) & \underline{48.4} & \textbf{42.1} & \textbf{30.3} & \textbf{29.4} & \textbf{37.6} \\
\midrule
\multicolumn{6}{c}{\textit{Process Reward Models Trained on \textbf{Qwen2.5-Math-7B-Instruct}}} \\
\midrule
ImplicitPRM & 45.2 & 36.3 & 23.5 & \textbf{25.1}  & 32.5 \\
MathShepherd & 59.2 & 31.1 & 12.7 & 8.1  & 27.8 \\
SCAN & \underline{60.6} & 33.9 & 15.0 & 6.8  & 29.1 \\
OmegaPRM & 44.8 & 37.1 & 22.9 & 18.4  & 30.8 \\
PQM & 55.7 & \underline{43.9} & \textbf{28.5} & \underline{24.5}  & \underline{38.1} \\
\rowcolor{blue!10}
\mname{} (Ours, $\alpha=1$) & \textbf{65.7} & \textbf{48.7} & \underline{26.2} & 19.9  & \textbf{40.2} \\
\bottomrule
\end{tabular}
}
\caption{F1 scores of LLM-as-Critic models and PRMs on ProcessBench.}
\label{tab:processbench}
\vspace{-1.2em}
\end{table*}

\begin{table*}[t]
\centering
\fontsize{7.5pt}{8pt}\selectfont
\resizebox{\textwidth}{!}{
\begin{tabular}{@{}l|ccccccccc|c@{}}
\toprule

\multirow{2}{*}{\bf Reward Model}
& \multicolumn{3}{c|}{\makecell{\bf Mistral-7B-Inst-v0.2\\ \bf Pass@1: 9.6}}
& \multicolumn{3}{c|}{\makecell{\bf Llama-3.1-8B-Inst\\ \bf Pass@1: 44.6}}
& \multicolumn{3}{c|}{\makecell{\bf Qwen-2.5-7B-Inst\\ \bf Pass@1: 65.8}}
& \multirow{2}{*}{\bf Avg.}
\\

\cmidrule(lr){2-10}

& @8 & @32 & @64
& @8 & @32 & @64
& @8 & @32 & @64
&
\\

\midrule

Majority Vote & 18.2 & 21.0 & 22.8 & 56.6 & 62.2 & 63.2 & 67.8 & 70.2 & 70.6 & 50.2 \\ 

\midrule

ImplicitPRM & 19.0 & 18.4 & 17.6 & \underline{59.4} & 61.0 & 62.8 & \textbf{70.4} & 68.2 & 69.2 & 49.6 \\ 

MathShepherd & 24.6 & 30.0 & 33.2 & 59.0 & \underline{65.6} & 65.4 & 67.2 & 69.6 & 69.6 & 53.8 \\ 

SCAN & \textbf{26.4} & \underline{31.0} & \underline{33.6} & \textbf{59.8} & 65.2 & \underline{66.4} & 67.4 & 69.2 & 69.2 & \underline{54.2} \\ 

OmegaPRM & 13.4 & 15.2 & 17.2 & 53.4 & 55.4 & 57.4 & 66.2 & 68.0 & 68.4 & 46.1 \\ 

PQM & 21.8 & 27.0 & 28.8 & 58.2 & 60.6 & 60.4 & \underline{69.8} & \underline{70.6} & \underline{70.4} & 52.0 \\ 

\rowcolor{blue!10}
\mname{} (Ours, $\alpha=1$) & \underline{25.0} & \textbf{31.2} & \textbf{35.4} & \underline{59.4} & \textbf{66.0} & \textbf{67.8} & 69.0 & \textbf{71.4} & \textbf{71.4} & \textbf{55.2} \\

\bottomrule
\end{tabular}
}
\caption{Different PRMs' best-of-N sampling performance on MATH-500 with three different generation models.
}
\label{tab:best_of_n}
\vspace{-1.2em}
\end{table*}

\section{Experiments}\label{sec:experiment}

We conduct a series of experiments to answer four research questions:
\begin{itemize}
[leftmargin=*,itemsep=2pt,parsep=0pt,topsep=2pt,partopsep=0pt]
    \item \textbf{RQ1}: Can PRMs trained with \mname{} reliably locate errors in reasoning trajectories?
    \item \textbf{RQ2}: Can PRMs trained with \mname{} enhance LLM reasoning capability via test-time scaling?
    \item \textbf{RQ3}: How does the temperature $\alpha$ affect SWS pooling's performance?
    \item \textbf{RQ4}: Does SWS pooling outperform other pooling techniques on process reward modeling?
\end{itemize}

Our experiments focus on math reasoning, where the correctness of intermediate steps and final answers is relatively well-defined and easy to verify. Unless otherwise stated, we use the correctness of the final answer as the trajectory-level label, without any process-level annotation.

Implementation details for all experiments are given in Appendix~\ref{app:implement-details}.

\subsection{Experiments on Error Identification (RQ1)}\label{sec:rq1}

\noindent\textbf{Datasets and Models.} We use the publicly available Math-Shepherd dataset~\cite{mathshepherd} as training set. Though this dataset provides step-level labels generated by Monte Carlo estimation, we discard these process labels and use only the problems and their final answers for \mname{}. We use Qwen2.5-Math-7B-Instruct~\cite{yang2024qwen25mathtechnicalreportmathematical} and Llama3.2-3B-Instruct~\cite{grattafiori2024llama3herdmodels} as the LLM backbone for training PRMs.

\noindent\textbf{Evaluation Metrics.} We use ProcessBench~\cite{zheng-etal-2025-processbench} to measure the PRM’s ability to identify the first error location in a response. The benchmark evaluates recognition of fully correct responses and precise error identification in incorrect responses. We calculate accuracies on correct and erroneous samples and report their harmonic mean as the final F1 score.

\noindent\textbf{Baselines.} We compare our method with the current state-of-the-art baselines:
\begin{itemize}
[leftmargin=*,itemsep=2pt,parsep=0pt,topsep=2pt,partopsep=0pt]
    \item \textbf{ImplicitPRM}~\cite{implicitprm} parameterizes the reward model as a log-likelihood ratio of policy and reference models, training with trajectory-level correctness labels.
    \item \textbf{MathShepherd}~\cite{mathshepherd} uses brute-force Monte Carlo estimation to generate hard pseudo labels for every step.
    \item \textbf{SCAN}~\cite{ding2026scan} denoises Monte Carlo pseudo labels using the confidence of the data generator.
    \item \textbf{OmegaPRM}~\cite{luo2024improvemathematicalreasoninglanguage} uses Monte Carlo tree search to selectively generate high-quality reasoning trajectories with soft pseudo labels.
    \item \textbf{PQM}~\cite{pqm} formulates process reward modeling as a Q-value ranking problem and trains PRMs with a ranking loss, which requires estimation of the first error location.
\end{itemize}
We also incorporate several 7B-scale open-source PRMs and LLM-as-critic models as baselines; their results are taken directly from previous research~\cite{zhang-etal-2025-lessons}.

\paragraph{Main Results.}
Table~\ref{tab:processbench} reports evaluation results on ProcessBench. 
\textbf{Our method achieves strong performance, outperforming the second-best by 21.7\% on the 3B backbone and 5.5\% on the 7B backbone.}
On the 7B backbone, \mname{} surpasses all baselines and establishes a new state of the art. On the 3B backbone, \mname{} outperforms all competing 3B methods by a substantial margin. Notably, our method avoids the heavy computational cost of Monte Carlo sampling required by MathShepherd, SCAN, OmegaPRM and PQM,
yet consistently surpasses them in performance.

More detailed results can be found in Table~\ref{tab:all_processbench} in Appendix~\ref{app:full-processbench-results}.

\subsection{Experiments on Test-Time Scaling (RQ2)}
\label{sec:rq2}
\noindent\textbf{Datasets and Models.}
We train PRMs on the Math-Shepherd dataset, using Qwen2.5-Math-7B-Instruct as the backbone. 
We evaluate test-time scaling on MATH-500~\cite{letsverifystepbystep}, a subset of MATH dataset. 
To test generalizability, we select three generator models of different sizes, families, and reasoning capabilities: Mistral-7B-Instruct-v0.2~\cite{jiang2023mistral7b}, Meta-Llama-3-8B-Instruct~\cite{grattafiori2024llama3herdmodels}, and Qwen2.5-7B-Instruct~\cite{qwen2025qwen25technicalreport}.

\noindent\textbf{Evaluation Metrics.}
Following prior work~\cite{mathshepherd,letsverifystepbystep,luo2024improvemathematicalreasoninglanguage}, we evaluate PRMs by their verification ability using best-of-N sampling. 
The metric BON@n measures the correctness of the most preferred trajectory selected by the PRM from $n$ candidates per question. 

\noindent\textbf{Baselines.}
We compare our method with the same baselines as in Section~\ref{sec:rq1}:
ImplicitPRM, MathShepherd, SCAN, OmegaPRM, and PQM.

\noindent\textbf{Main Results.}
Table~\ref{tab:best_of_n} reports the best-of-N performance for $N=8$, $32$, and $64$.
\textbf{Our method achieves the highest average score across generators.} Moreover, \mname{} attains the best results on all three generation models at $N=32$ and $N=64$, demonstrating strong scalability and generalization.
Although our method is primarily designed for intra-trajectory credit assignment rather than for validating the final answer, it still delivers consistent gains on best-of-N accuracy.
We attribute this to the Weakest Link Assignment, which anchors credit assignment directly in step correctness. Therefore, the PRM learns to focus on per-step correctness instead of relying on spurious correlations or shortcuts, which mitigates overfitting and improves generalization.


\subsection{Ablation Study (RQ3\&4)}
\label{sec:rq34}
We conduct ablation studies to examine the effect of temperature $\alpha$ and the advantage of SWS pooling over other common MIL pooling techniques. 

\paragraph{Effect of Temperature(RQ3).}
Following the settings of our main experiments, we train PRMs with different $\alpha$ using Qwen2.5-Math-7B-Instruct as backbone and Math-Shepherd as training set, and report their F1 scores on ProcessBench splits.

\begin{figure}[t]
  \includegraphics[width=\columnwidth]{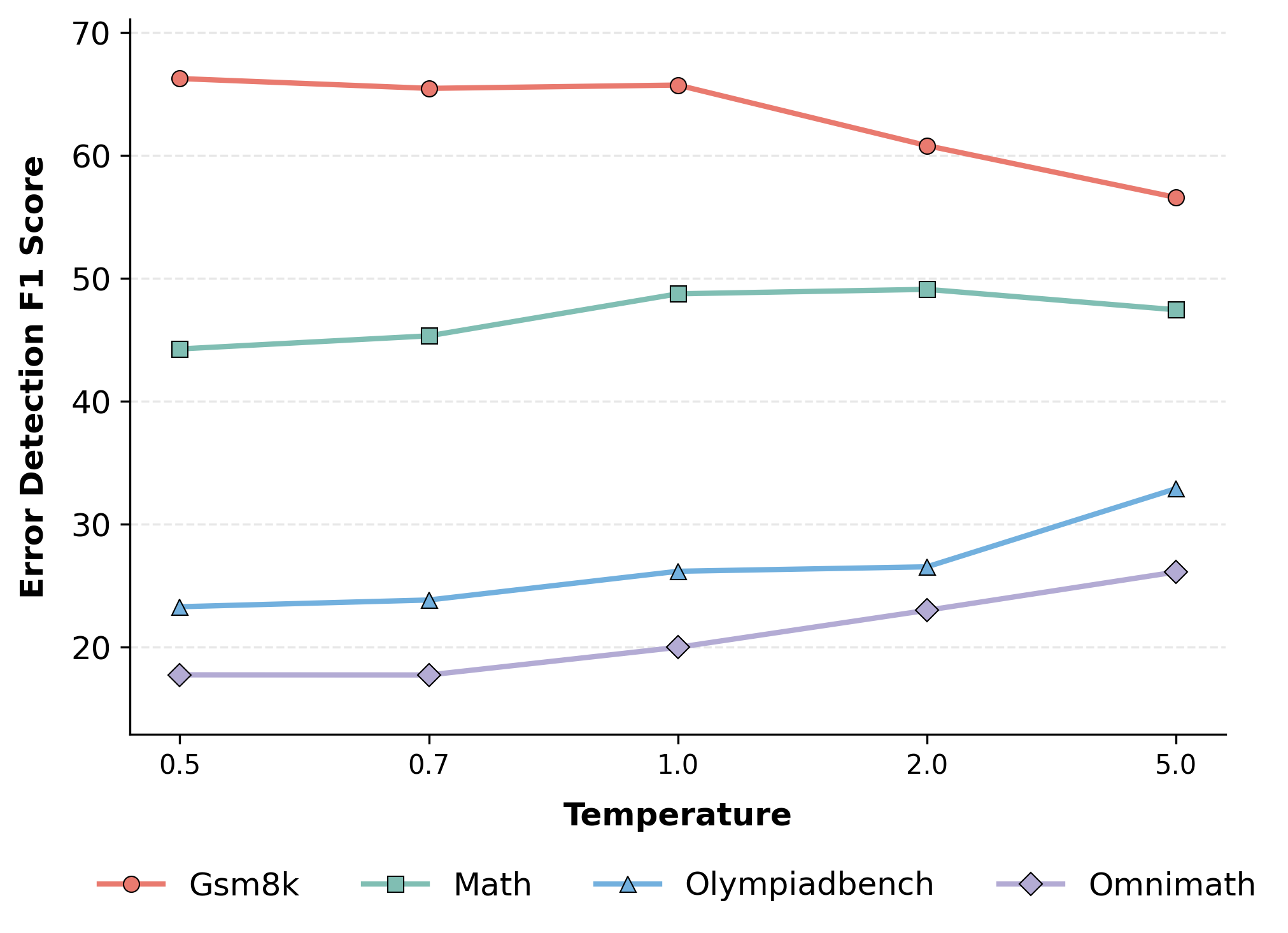}
  \vspace{-1.2em}
  \caption{Performance on ProcessBench splits with varying temperature $\alpha$. }
  \label{fig:f1_vs_temperature}
  \vspace{-1.2em}
\end{figure}

Figure~\ref{fig:f1_vs_temperature} shows the results. As $\alpha$ increases, performance on GSM8K decreases, while performance on OlympiadBench and OmniMath increases. This finding aligns with our theoretical insights. A larger $\alpha$ makes credit assignment more uniform across instances, which encourages the model to identify as many positive instances as possible within a positive bag, and thus benefits process error detection. However, this also risks misclassifying negative instances that appear in positive bags, e.g., correct prefixes of eventually wrong trajectories. Consequently, as $\alpha$ increases, performance drops on simple datasets with fewer process errors (e.g., GSM8K) but improves on complex datasets with substantial process errors (e.g., OlympiadBench and OmniMath)~\cite{zhang-etal-2025-lessons}. This again highlights that choosing $\alpha$ requires a trade-off between theoretical consistency and practical effectiveness.

\begin{figure}[t]
  \includegraphics[width=\columnwidth]{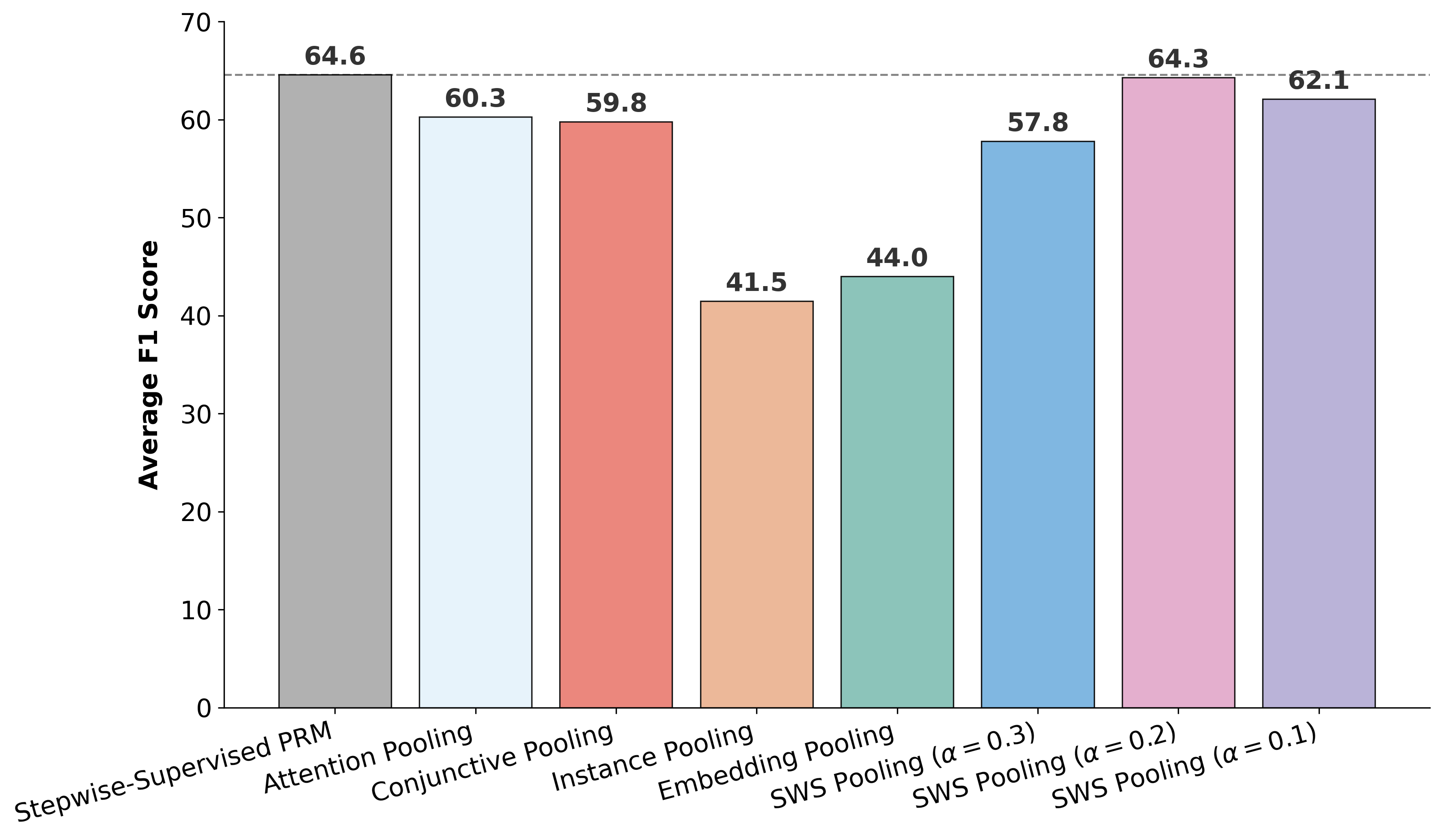}
  \vspace{-1.2em}
  \caption{Average F1 scores of different pooling techniques on ProcessBench.}
  \label{fig:avg_f1_comparison}
  \vspace{-1.0em}
\end{figure}

\paragraph{Comparing SWS Pooling with Other Pooling Techniques (RQ4).}
We compare SWS pooling against commonly used MIL pooling functions: instance pooling~\cite{instancepool1}, embedding pooling~\cite{embeddingpool1}, attention pooling~\cite{attnpool1}, and conjunctive pooling~\cite{conjpool1}. 
Appendix~\ref{app:pooling} describes these techniques in detail. 
We also include a fully supervised PRM that uses all step-level ground-truth labels as an upper bound baseline.
To avoid potential label noise and ensure consistency with standard MIL settings, we curate a training set from the human-annotated PRM800K dataset~\cite{letsverifystepbystep} and aggregate step labels to obtain trajectory labels (see Appendix~\ref{app:ablation-details} for details). We use Qwen3-4B~\cite{yang2025qwen3technicalreport} as the backbone to train PRMs.

Figure~\ref{fig:avg_f1_comparison} reports the average F1 scores on ProcessBench. SWS pooling with temperature $\alpha=0.2$ achieves the best score, surpassing other pooling techniques and closely matching the stepwise-supervised baseline. Temperatures higher or lower than 0.2 yield suboptimal performance, while the optimal temperature depends on data and model.

\section{Conclusion}

In this paper, we present a systematic analysis of outcome-supervised process reward modeling from the perspective of credit assignment, identifying the fundamental limitation of existing uniform and causal assignment strategies. To address this, we propose \emph{Weakest Link Assignment} as a principled alternative and introduce \mname{}, an outcome-supervised PRM framework that jointly learns credit assignment and reward modeling. By formulating outcome-supervised PRM as a Multiple Instance Learning (MIL) problem and introducing Softmax-Weighted-Sum pooling, we resolve the chicken-and-egg
dilemma between credit assignment and reward modeling under strong inter-instance dependence and redundancy. Both theoretical analysis and extensive experiments demonstrate the superiority of \mname{}, offering a principled alternative for fine-grained process supervision in complex reasoning tasks.

\section*{Limitations}
\phantomsection
\label{sec:limitations}

Despite achieving state-of-the-art performance among outcome-supervised
process reward modeling methods, our approach still lags behind PRMs trained on
human-annotated step-level labels. This gap highlights room for further
improvement in weakly supervised reward modeling.

Furthermore, using final answer correctness as a proxy for trajectory-level labels
may introduce inherent noise. LLMs sometimes arrive at the correct answer via flawed
reasoning, in which case the Standard Multi-Instance Assumption does not strictly
hold. While our method proves robust in practice, integrating techniques for
learning with noisy labels into the MIL framework is a promising direction for
future work.

Beyond the above, integrating PRMs into reinforcement learning (RL) remains another
important open problem. PRM-guided RL is susceptible to reward hacking, making this
a non-trivial challenge. Existing explorations include reward design, optimization
algorithms, and co-evolution of the reward model with the policy model. Notably, our
method requires only outcome supervision to train PRMs and incurs no additional
Monte Carlo sampling overhead, rendering it particularly suitable for reward-policy
co-evolution. We see this as a valuable avenue for future investigation.

\section*{Ethical Considerations}
All experiments in this work were carried out on publicly available datasets, models, and benchmarks, following their respective licenses and usage terms. The study does not involve human or animal subjects, and we did not collect, use, or disclose any personally identifiable information.

In this work, Large Language Models (LLMs) were used for language polishing and coding assistance. Specifically, LLMs supported refining the clarity and grammar of the manuscript, improving stylistic quality, and suggesting code snippets or troubleshooting strategies. All content generated by LLMs was carefully reviewed and verified by the authors before inclusion.  
The research design, critical analyses, and all final decisions were independently conducted by the authors. LLMs were not involved in generating new research ideas or conclusions.


\bibliography{custom}

\clearpage
\appendix
\section{Related Work}\label{app:related-work}
\subsection{Multiple Instance Learning}

Multiple Instance Learning (MIL) is a weakly supervised learning paradigm in which labels are assigned to bags of instances rather than to individual instances. The task is to predict instance labels from bag labels alone. MIL has been applied to medical image analysis, text processing, time-series analysis, and video anomaly detection~\cite{milforimage, milfortext, milfortimeseries, milforvideo, conjpool1}. To the best of our knowledge, we are the first to apply MIL to process reward modeling.

In the deep learning era, most MIL methods are pooling-based. Proposed pooling techniques include instance pooling~\cite{instancepool1}, embedding pooling~\cite{embeddingpool1}, attention pooling~\cite{addictivepool1,attnpool2}, additive pooling~\cite{addictivepool1}, max pooling~\cite{maxpool1}, and conjunctive pooling~\cite{milfortimeseries,milfortext,conjpool1}. The use of softmax operation as pooling functions remains largely unexplored. The only prior work, an early study on sound event detection~\cite{Salamon2017MIL}, employs a non-parametric softmax and has seen little subsequent adoption in the MIL literature. In this work, we propose Softmax-Weighted-Sum (SWS) pooling, parameterized by a temperature $\alpha$ and tailored for bags with interdependent instances. Our theoretical analysis and experiments both demonstrate the crucial role of this hyperparameter for instance-level learnability.

\subsection{Process Reward Models}
Reward models are essential for improving large language models (LLMs) in complex reasoning tasks such as mathematical problem-solving~\cite{grattafiori2024llama3herdmodels,yang2024qwen25mathtechnicalreportmathematical} and competitive programming~\cite{hui2024qwen25codertechnicalreport}. Unlike outcome reward models (ORMs), which evaluate only the final answer, process reward models (PRMs)~\cite{uesato2022solvingmathwordproblems, letsverifystepbystep} assess each reasoning step, providing finer-grained feedback. Recent advancements~\cite{mathshepherd,luo2024improvemathematicalreasoninglanguage,wang2024qimprovingmultistepreasoning} have demonstrated the significant potential of PRMs in scaling test-time compute~\cite{rewardingprogress,zhang-etal-2025-lessons} and post-training~\cite{rstarmath,cheng2026stop}.

The main difficulty in training a PRM is obtaining reliable step-level annotations. Human annotation~\cite{letsverifystepbystep} is the primary source but is expensive and does not scale.  Monte Carlo methods~\cite{mathshepherd,luo2024improvemathematicalreasoninglanguage,rstarmath,ding2026scan} offer a promising alternative, estimating a step's quality by the empirical probability of reaching a correct final answer from that step. We refer to this as \emph{causal credit assignment}: a step's score reflects the expected success of its downstream completions rather than its own logical validity. 
This coupling causes errors from later steps to be incorrectly attributed to earlier ones, making it difficult to localize the true error source. 
Moreover, the estimates depend crucially on the choice of rollout policy and sampling strategy, which introduces noise and hampers generalization.
Another line of research~\cite{implicitprm,cui2025processreinforcementimplicitrewards} adopts DPO-style implicit reward functions~\cite{dpo}, which are trained as ORMs but used as PRMs. Because the trajectory-level score decomposes into a sum over steps, this approach implicitly enforces \emph{uniform credit assignment}: every step within the same trajectory receives the same optimization pressure, regardless of its actual contribution to the outcome. 
As a consequence, the model cannot differentiate genuinely erroneous steps from correct precursor steps, limiting its capacity for precise error identification. Moreover, these implicit reward functions are known to suffer from limited generalization~\cite{razin2026why}.

In this work, we advocate \emph{Weakest Link Assignment} as a principled alternative, drawing on the long-standing intuition: a reasoning chain is correct if and only if every step is correct, and incorrect if and only if any step is wrong.
This principle already underlies standard PRM evaluation and downstream usage:
ProcessBench~\cite{zheng-etal-2025-processbench} evaluates PRMs via first-error detection; best-of-N sampling~\cite{letsverifystepbystep} typically aggregates step scores by taking the minimum (for scalar rewards) or the product (for probabilities); and recent work has shown that a min-form reward aggregation in reinforcement learning substantially reduces reward hacking\cite{cheng2026stop}. 
Building on this, we propose an outcome-supervised process reward modeling framework that jointly learns credit assignment and reward modeling under the Weakest Link principle, thereby closing the training–test gap and producing a PRM that accurately
attributes credit to the outcome-determining steps.

\section{Pooling Techniques in Multiple Instance Learning}
\label{app:pooling}

This appendix provides a concise overview of the pooling techniques mentioned in this paper. We assume a standard deep MIL architecture: given a bag $B = \{x_1, \dots, x_n\}$, a feature extractor $\phi$ (e.g., a neural network backbone) maps each instance to a $d$-dimensional embedding $z_i = \phi(x_i) \in \mathbb{R}^d$. A classification head $\psi$ then produces instance-level predictions $p_i \in [0,1]$. The pooling function aggregates the instance-level information to yield a bag-level prediction $\hat{y}(B)$.

We describe six pooling methods commonly used in the MIL literature.

\subsection{Embedding Pooling}

Embedding pooling aggregates instance embeddings before classification. The bag embedding is obtained by averaging the instance embeddings:
\[
z_{\text{bag}} = \frac{1}{n} \sum_{i=1}^n z_i,
\]
and the bag-level prediction is then $\hat{y}(B) = \psi(z_{\text{bag}})$. 
This approach is simple and widely used for bag-level classification, but it does not produce instance-level predictions and can lose discriminative instance information. 

\subsection{Instance Pooling}

Instance pooling performs classification on each instance independently, then aggregates the resulting instance predictions. The bag-level prediction is the average of the instance probabilities:
\[
p_i = \psi(z_i), \qquad \hat{y}(B) = \frac{1}{n} \sum_{i=1}^n p_i.
\]
Instance pooling naturally yields instance-level predictions, but the uniform averaging does not discriminate between more and less relevant instances.

\subsection{Max Pooling}

Max pooling selects the maximum instance prediction as the bag-level prediction:
\[
\hat{y}(B) = \max\{p_1, \dots, p_n\}.
\]
This directly implements the standard multi-instance assumption: a bag is positive if at least one instance is positive. However, it assigns all credit to a single instance, which can lead to unstable training and poor generalization.

\subsection{Attention Pooling}

Attention pooling learns instance-specific weights via an attention mechanism, then aggregates the instance embeddings accordingly. An attention head $\omega$ computes normalized attention weights:
\[
a_i = \frac{\exp(\omega(z_i))}{\sum_{j=1}^n \exp(\omega(z_j))},
\]
and the bag-level prediction is $\hat{y}(B) = \psi\left( \sum_{i=1}^n a_i z_i \right)$.
This method allows learnable, instance-specific weighting, but the attention weights are not directly constrained by the standard multi-instance  assumption, which can lead to degenerate solutions.

\subsection{Additive Pooling}

Additive pooling combines attention pooling and instance pooling. Instance embeddings are first scaled by their attention weights before classification:
\[
a_i = \frac{\exp(\omega(z_i))}{\sum_{j=1}^n \exp(\omega(z_j))}, \qquad
p_i = \psi(a_i z_i).
\]
The bag-level prediction is then:
\[
\hat{y}(B) = \frac{1}{n} \sum_{i=1}^n p_i.
\]
This hybrid approach encourages the model to attend to relevant instances while still performing instance-level classification.

\subsection{Conjunctive Pooling}

Conjunctive pooling computes attention weights and instance predictions independently, then combines them multiplicatively:
\[
a_i = \frac{\exp(\omega(z_i))}{\sum_{j=1}^n \exp(\omega(z_j))}, \qquad
p_i = \psi(z_i).
\]
The bag-level prediction is:
\[
\hat{y}(B) = \frac{1}{n} \sum_{i=1}^n a_i p_i.
\]
By training the attention and classification heads in parallel, this method avoids the potential degeneracy where the classifier relies on the attention head to pre-process the input, making the model more robust.

\subsection{Discussion}

Despite the empirical success of MIL across diverse applications, most common pooling techniques lack theoretical guarantees for instance-level classification accuracy. Attention-based pooling, currently the most widely used approach, has been shown not to be PAC-learnable~\cite{jang2024milpaclearnable}. Average-pooling methods do not align with the multi-instance assumption. Max-pooling, while most consistent with the assumption, suffers from low sample efficiency and training instability, which limits its practical adoption.

In contrast, our proposed temperature-parameterized SWS pooling exhibits a favorable property—instance-level Bayes consistency of the learning objective—under mild assumptions, and performs well in outcome-supervised PRM tasks. However, we note that the bag sizes in our setting are modest (typically under a few dozen), whereas some MIL applications involve bags with hundreds to thousands of instances~\cite{zheng2024dynamicpolicydrivenadaptivemultiinstance}. Whether SWS pooling remains effective in such large-bag regimes is uncertain and lies outside the scope of this work.

\section{Dataset and Implementation Details}\label{app:implement-details}

\subsection{Dataset Details}
\label{app:datasets}

We provide details of the datasets and benchmarks used in this work.

\begin{itemize}[leftmargin=*,itemsep=2pt,parsep=0pt,topsep=2pt,partopsep=0pt]

\item \textbf{Math-Shepherd}~\cite{mathshepherd}\footnote{\href{https://huggingface.co/datasets/peiyi9979/Math-Shepherd}{https://huggingface.co/datasets/peiyi9979/Math-Shepherd}} serves as the training set in our main experiments (Sections~\ref{sec:rq1} and~\ref{sec:rq2}).
It contains 445k English math reasoning trajectories with their associated problems.
Each reasoning step is labeled via Monte Carlo (MC) estimation without human annotation;
the MC rollouts are not included.
We use the entire dataset for training and do not perform evaluation on it.

\item \textbf{PRM800K}~\cite{letsverifystepbystep}\footnote{\href{https://github.com/openai/prm800k}{https://github.com/openai/prm800k}} is used as the training set for the ablation study on pooling techniques (Section~\ref{sec:rq34}).
The dataset comprises roughly 40k English math reasoning trajectories and their problems.
Human annotators label the location of the first error in each trajectory.
We adopt the original train/test split, yielding 37.5k training and 3.7k test trajectories.
The training split is further processed as described in Appendix~\ref{app:ablation-details};
the test split is not used for evaluation.

\item \textbf{MATH-500}~\cite{letsverifystepbystep}\footnote{\href{https://huggingface.co/datasets/HuggingFaceH4/MATH-500}{https://huggingface.co/datasets/HuggingFaceH4/MATH-500}}provides the problem set for best-of-N evaluation (Section~\ref{sec:rq2}).
It is a subset of the MATH dataset~\cite{MATH} comprising 500 English math problems.
We use all 500 problems as the test set.

\item \textbf{ProcessBench}~\cite{zheng-etal-2025-processbench}\footnote{\href{https://huggingface.co/datasets/Qwen/ProcessBench}{https://huggingface.co/datasets/Qwen/ProcessBench}} is the evaluation benchmark for our main experiments (Section~\ref{sec:rq1}).
It consists of four splits—GSM8K (0.4k), MATH (1k), OlympiadBench (1k), and Omni-MATH (1k)—which differ in difficulty.
Each problem comes with a reasoning chain, and all text is in English.
Every trajectory is manually annotated with the first error location.
We evaluate on all four splits. Evaluation details are given in Appendix~\ref{app:evaluation-details}.

\end{itemize}

\subsection{PRM Training Details}
\label{app:training-details}
\paragraph{Model architecture.}
For our method(\mname{}), MathShepherd~\cite{mathshepherd}, OmegaPRM~\cite{luo2024improvemathematicalreasoninglanguage}, and SCAN\cite{ding2026scan}, we remove the language modeling head from the LLM backbone and initialize a randomly initialized classification head, implemented as a two-layer MLP with the same hidden dimension as the LLM backbone. 
ImplicitPRM\cite{implicitprm} uses the log-probability ratio between the policy model and the reference model (the same LLM backbone without fine-tuning) as an implicit reward model. 
For PQM\cite{pqm}, we replace the language modeling head with a linear value head following the original paper.

\paragraph{Hyperparameters.}
For Qwen2.5-Math-7B-Instruct and Llama3.2-3B-Instruct backbones, we set the learning rate to $1\times10^{-6}$; for the Qwen3-4B backbone, we use $1\times10^{-5}$. 
All experiments employ a linear learning rate scheduler. 
For all methods except PQM, we use a global batch size (GPU count × gradient accumulation steps × per-GPU batch size) of 384 on Qwen2.5-Math-7B-Instruct and Llama3.2-3B-Instruct backbones, and 512 on Qwen3-4B. 
For PQM, we adopt the more fine-grained batching strategy described in its original paper\cite{pqm}. 
All models are trained for a single epoch.

\paragraph{Infrastructure.}
All experiments are conducted on four 80GB A800 GPUs using DeepSpeed ZeRO-3 parallel training\cite{deepspeed}.

\subsection{Baseline Implementation Details}
\label{app:baseline-details}
MathShepherd, PQM, and ImplicitPRM are trained directly on the Math-Shepherd dataset. For ImplicitPRM, we use cross-entropy loss with $\beta = 0.05$. For PQM, we set $\zeta = 2$. Other implement details follow the original papers.

OmegaPRM and SCAN improve the data collection and labeling strategy on top of MathShepherd and thus cannot be trained directly on the Math-Shepherd dataset. We therefore generate training data following their respective proposals. To ensure a fair comparison, we use the same set of problems as in Math-Shepherd dataset and adopt MetaMath-Llemma-7B\cite{yu2024metamath} as the generator model, which is used for Monte Carlo rollouts in Math-Shepherd dataset. For both methods, the amount of generated data is comparable to that of Math-Shepherd.

For OmegaPRM, as no official code is available, we use the open-source implementation openr\footnote{\href{https://github.com/openreasoner/openr}{https://github.com/openreasoner/openr}}\cite{wang2024openropensourceframework} for MCTS-based data generation. We set the search limit to 200 per question, $\alpha = 0.5$, $\beta = 0.9$, $L = 500$, and $c_{\text{puct}} = 0.125$. For each Monte Carlo estimation, we sample $k = 16$ rollouts. See the original paper for details\cite{wang2024qimprovingmultistepreasoning}.

For SCAN, we follow the proposed denoising procedure on top of the Math-Shepherd dataset, with a tolerant distance $d = 2$. For each problem, we first generate $k = 16$ rollouts to estimate the confidence of the completor model. For each trajectory in the dataset, we locate the first error (with label $=0$), then consider the first and second steps before that error, generate $k = 8$ completions for each, compute the Monte Carlo value, and combine it with the problem-level confidence to produce denoised labels. See the original paper for details\cite{ding2026scan}.

\subsection{Evaluation Details}
\label{app:evaluation-details}
\paragraph{ProcessBench evaluation.}
We follow the evaluation protocol of ProcessBench\cite{zheng-etal-2025-processbench}. For our method(\mname{}), MathShepherd, and SCAN, PRMs are trained as standard binary classifiers; we therefore extract the earliest erroneous step directly from its correctness predictions for reasoning steps. 
OmegaPRM, ImplicitPRM, and PQM produce scalar scores for each step; we first transform these scores into binary correctness predictions using a threshold. The threshold is chosen as the one that yields the highest F1 score on the MATH subset (see Table~\ref{tab:threshold}).

\begin{table}[t]
\centering
\small
\begin{tabular}{lcc}
\toprule
Method & \makecell{Qwen2.5-Math\\7B-Instruct} &
\makecell{Llama3.2\\3B-Instruct} \\
\midrule
ImplicitPRM & 0.5 & 0.4 \\
OmegePRM & 0.4 & 0.2 \\
PQM & 0.2 & 0.2 \\
\bottomrule
\end{tabular}
\caption{Thresholds for evaluating real-valued PRMs on ProcessBench across different backbones.}
\label{tab:threshold}
\vspace{-1em}
\end{table}

\paragraph{Best-of-N evaluation.}
We use Mistral-7B-Instruct-v0.2, Meta-Llama-3-8B-Instruct, and Qwen2.5-7B-Instruct as the generator models to produce evaluation data for best-of-N sampling on the MATH500 problem set. The data for Mistral-7B-Instruct-v0.2 and Meta-Llama-3-8B-Instruct are taken directly from~\cite{implicitprm}. For Qwen2.5-7B-Instruct, we collect 64 rollouts per problem in MATH500 using temperature $0.7$, top-$p$ $0.95$, and a maximum generation length of $1024$. For binary classification models (\mname{}, MathShepherd, SCAN), we take the product of the step-wise scores as the trajectory score, consistent with earlier studies~\cite{letsverifystepbystep,zhang-etal-2025-lessons}. For models that output real-valued scores (OmegaPRM, ImplicitPRM, PQM), we take the minimum of the step-wise scores, following their original papers.

\subsection{Detailed Settings of Ablation on Pooling Techniques}
\label{app:ablation-details}
We implement embedding pooling, instance pooling, attention pooling, and 
conjunctive pooling following the descriptions in Appendix~\ref{app:pooling}. 
For attention and conjunctive pooling, the attention head is realized as a 
simple linear value head, as the attention‑based LLM backbone already provides sufficient nonlinearity.

As noted in \nameref{sec:limitations}, using final answer correctness as 
bag‑level labels may introduce label noise when correct answers are reached 
through flawed reasoning, violating the Standard Multi‑Instance Assumption. 
To eliminate this confounding factor and adhere strictly to the standard MIL 
setting, we construct a training set from the human‑annotated PRM800K dataset 
\cite{letsverifystepbystep}. This dataset marks the first error location for each 
trajectory ($-1$ for error‑free trajectories). We derive trajectory‑level labels by checking whether the trajectory contains any error, and then discard the 
original step‑level annotations. The resulting dataset is further balanced to 
ensure equal proportions of erroneous and correct trajectories, yielding a 
final set of 30k trajectories (50\% containing errors), which we refer to as 
\textsc{PRM800K-balanced}. We then train PRMs on this dataset with different 
pooling techniques, using Qwen3‑4B as the backbone. Training details are 
provided in Appendix~\ref{app:training-details}.

For the stepwise-supervised baseline, we retain the step-level labels from \textsc{PRM800K-balanced}. Steps before the annotated first error are labeled 
correct; all others are labeled incorrect. The PRM is trained with step-level cross-entropy loss, with all other settings unchanged.

\begin{table*}[t]
\centering
\resizebox{\textwidth}{!}{
\begin{tabular}{lccccccccccccc}
\toprule
\multirow{2}[2]{*}{\textbf{Model}} & \multicolumn{3}{c}{\textbf{GSM8K}} & \multicolumn{3}{c}{\textbf{MATH}} & \multicolumn{3}{c}{\textbf{OlympiadBench}} & \multicolumn{3}{c}{\textbf{Omni-MATH}} & \multirow{2}[2]{*}{\begin{tabular}[c]{@{}c@{}} \bf Avg. \\ \bf F1 \end{tabular}} \\
\cmidrule(lr){2-4} \cmidrule(lr){5-7} \cmidrule(lr){8-10} \cmidrule(lr){11-13}
& Err. & Corr. & \textbf{F1} & Err. & Corr. & \textbf{F1} & Err. & Corr. & \textbf{F1} & Err. & Corr. & \textbf{F1} \\
\midrule
\multicolumn{14}{c}{\textit{Large Language Models as Critic}} \\
\midrule
Llama-3-8B-Instruct           & 42.5 & 7.8   & 13.1 & 28.6 & 9.1  & 13.8 & 27.1 & 2.7  & 4.8  & 26.1 & 8.3  & 12.6 & 11.1 \\
Llama-3.1-8B-Instruct         & 44.4 & 6.2   & 10.9 & 41.9 & 2.7  & 5.1  & 32.4 & 1.5  & 2.8  & 32.0 & 0.8  & 1.6  & 5.1  \\
Qwen2.5-7B-Instruct           & 40.6 & 33.2  & 36.5 & 30.8 & 45.1 & 36.6 & 26.5 & 33.9 & 29.7 & 26.2 & 28.6 & 27.4 & 32.6 \\
Qwen2.5-Math-7B-Instruct      & 15.5 & 100.0 & 26.8 & 14.8 & 96.8 & 25.7 & 7.7  & 91.7 & 14.2 & 6.9  & 88.0 & 12.7 & 19.9 \\
Qwen2.5-Coder-7B-Instruct     & 7.7  & 100.0 & 14.3 & 3.4  & 98.3 & 6.5  & 2.1  & 99.1 & 4.1  & 0.9  & 98.3 & 1.8  & 6.7  \\
\midrule
\multicolumn{14}{c}{\textit{Open Source Process Reward Models}} \\
\midrule
Math-Shepherd-PRM-7B          & 32.4 & 91.7  & 47.9 & 18.0 & 82.0 & 29.5 & 15.0 & 71.1 & 24.8 & 14.2 & 73.0 & 23.8 & 31.5 \\
RLHFlow-PRM-Mistral-8B        & 33.8 & 99.0  & 50.4 & 21.7 & 72.2 & 33.4 & 8.2  & 43.1 & 13.8 & 9.6  & 45.2 & 15.8 & 28.4 \\
RLHFlow-PRM-Deepseek-8B       & 24.2 & 98.4  & 38.8 & 21.4 & 80.0 & 33.8 & 10.1 & 51.0 & 16.9 & 10.9 & 51.9 & 16.9 & 26.6 \\
EurusPRM-Stage1               & 46.9 & 42.0  & 44.3 & 33.3 & 38.2 & 35.6 & 23.9 & 19.8 & 21.7 & 21.9 & 24.5 & 23.1 & 31.2 \\
EurusPRM-Stage2               & 51.2 & 44.0  & 47.3 & 36.4 & 35.0 & 35.7 & 25.7 & 18.0 & 21.2 & 23.1 & 19.1 & 20.9 & 31.3 \\
Qwen2.5-Math-7B-Math-Shepherd-PRM & 46.4 & 95.9  & 62.5 & 18.9 & 96.6 & 31.6 & 7.4  & 93.8 & 13.7 & 4.0  & 95.0 & 7.7  & 28.9 \\
\midrule
\multicolumn{14}{c}{\textit{Process Reward Models Trained on \textbf{Llama3.2-3B-Instruct}}} \\
\midrule
ImplicitPRM & 41.1 & 82.4  & \textbf{54.8} & 22.9 & 66.5 & 34.1 & 13.9  & 55.2 & \underline{22.2} & 7.0  & 57.7 & 12.4  & \underline{30.9} \\
MathShepherd & 24.2 & 98.4  & 38.8 & 13.0 & 90.9 & 22.7 & 7.9  & 90.0 & 14.5 & 6.1  & 90.0 & 11.4  & 21.8 \\
SCAN & 31.4 & 97.9  & 47.6 & 16.3 & 88.9 & 27.6 & 8.9  & 87.0 & 16.2 & 6.6  & 86.3 & 12.2  & 25.9 \\
OmegaPRM & 37.2 & 40.4  & 38.7 & 21.9 & 48.0 & 30.1 & 10.9  & 54.9 & 18.2 & 7.9  & 63.5 & 14.1  & 25.3 \\
PQM & 21.3 & 90.7  & 34.4 & 26.3 & 70.7 & \underline{38.3} & 14.2  & 34.8 & 20.2 & 20.0  & 46.1 & \underline{27.9}  & 30.2 \\
\rowcolor{blue!10}
\mname{}(Ours, $\alpha=1$) & 35.7 & 75.1  & \underline{48.4} & 30.6 & 67.5 & \textbf{42.1} & 23.4  & 42.8 & \textbf{30.3} & 20.7 & 50.6  & \textbf{29.4} & \textbf{37.6} \\
\midrule
\multicolumn{14}{c}{\textit{Process Reward Models Trained on \textbf{Qwen2.5-Math-7B-Instruct}}} \\
\midrule
ImplicitPRM & 51.2 & 40.4  & 45.2 & 41.1 & 32.5 & 36.3 & 26.8  & 20.9 & 23.5 & 24.1  & 26.1 & \textbf{25.1}  & 32.5 \\
MathShepherd & 42.5 & 97.4  & 59.2 & 18.5 & 96.3 & 31.1 & 6.8  & 93.8 & 12.7 & 4.2  & 91.3 & 8.1  & 27.8 \\
SCAN & 95.3 & 44.4  & \underline{60.6} & 20.5 & 96.6 & 33.9 & 8.2  & 92.6 & 15.0 & 3.6  & 90.0 & 6.8  & 29.1 \\
OmegaPRM & 41.1 & 49.2  & 44.8 & 29.1 & 51.0 & 37.1 & 15.0  & 49.0 & 22.9 & 11.5  & 46.1 & 18.4  & 30.8 \\
PQM & 39.1 & 96.4  & 55.7 & 29.3 & 87.9 & \underline{43.9} & 17.7  & 73.2 & \textbf{28.5} & 14.8  & 71.4 & \underline{24.5}  & \underline{38.1} \\
\rowcolor{blue!10}
\mname{}(Ours, $\alpha=1$) & 51.2 & 91.7 & \textbf{65.7} & 33.4 & 90.6  & \textbf{48.7} & 15.4  & 86.1 & \underline{26.2} & 11.3  & 84.6 & 19.9  & \textbf{40.2} \\
\bottomrule
\end{tabular}
}
\caption{Full results of ProcessBench evaluation.}
\label{tab:all_processbench}
\end{table*}

\begin{table*}[t]
\centering
\small

\caption{
Case study (1). 
The first error occurs in step 3, which uses repeated numbers (2, 2) despite the requirement for distinct integers.
Blue values indicate the probability of \emph{incorrect} predicted by \mname{}.
}
\label{tab:case-1}

\raggedright
\textbf{Question}

\begin{tcolorbox}[
colback=nmgray,
colframe=white,
width=\linewidth,
arc=0mm
]
The product of a set of distinct positive integers is 84. What is the least possible sum of these integers?
\newline(first wrong step: 3)
\end{tcolorbox}

\vspace{0.3em}

\raggedright
\textbf{Step 1}

\begin{tcolorbox}[
colback=cyan!10,
colframe=white,
width=\linewidth,
arc=0mm
]
To find the least possible sum of distinct positive integers whose product is 84, let's start by prime factorizing 84: \[84 = 2^2 \times 3 \times 7\] Our goal is to minimize the sum of these factors while ensuring they multiply to 84. To do this, we want to use as few numbers as possible because larger numbers generally lead to larger sums.
\newline\textcolor{blue}{(0.18)}
\end{tcolorbox}

\raggedright
\textbf{Step 2}

\begin{tcolorbox}[
colback=cyan!10,
colframe=white,
width=\linewidth,
arc=0mm
]
Start with the largest prime factor: The largest prime factor here is 7. If we include 7 in our set, we have \(84/7 = 12\). Now, we need to factorize 12: \(12 = 2^2 \times 3\).
\newline\textcolor{blue}{(0.03)}
\end{tcolorbox}

\raggedright
\textbf{Step 3}

\begin{tcolorbox}[
colback=pink!20,
colframe=white,
width=\linewidth,
arc=0mm
]
Include the necessary factors: We already have a \(2^2\) from our prime factorization. Since we're trying to minimize the sum, we should include the smallest possible numbers that multiply to the remaining factors. First, for \(2^2\), we already have one 2 (from \(2^2\)) included. Second, for the remaining \(2^1\) and \(3\), adding these directly gives us the necessary components. So, the set of integers that multiply to 84 and have the least possible sum would be \(7, 2, 2, 3\).
\newline \textcolor{blue}{(0.84)}
\end{tcolorbox}

\raggedright
\textbf{Step 4}

\begin{tcolorbox}[
colback=pink!20,
colframe=white,
width=\linewidth,
arc=0mm
]
Calculating the sum: \[7 + 2 + 2 + 3 = 14\] Therefore, the least possible sum of these integers is **14**.
\newline \textcolor{blue}{(0.94)}
\end{tcolorbox}

\vspace{-1em}
\end{table*}

\begin{table*}[t]
\centering
\small

\caption{
Case study (2). 
The first error occurs in step 3 when multiplying the numbers: 6×518400=3110400, not 3104400.
Blue values indicate the probability of \emph{incorrect} predicted by \mname{}.
}
\label{tab:case-2}

\raggedright
\textbf{Question}

\begin{tcolorbox}[
colback=nmgray,
colframe=white,
width=\linewidth,
arc=0mm
]
The Coventry School's European debate club has 6 German delegates, 5 French delegates, and 3 Italian delegates.  In how many ways can these 14 delegates sit in a row of 14 chairs, if each country's delegates insist on all sitting next to each other?
\newline(first wrong step: 3)
\end{tcolorbox}

\vspace{0.3em}

\raggedright
\textbf{Step 1}

\begin{tcolorbox}[
colback=cyan!10,
colframe=white,
width=\linewidth,
arc=0mm
]
To find the total number of ways the delegates can sit in a row of 14 chairs with each country's delegates sitting together, we can break down the problem into smaller steps. First, consider the three groups of delegates as single units. Since there are three countries, we have 3 units (groups) that need to be arranged in a row. The number of ways to arrange these 3 units is 3!. This is because for the first position, we have 3 choices, for the second position, we have 2 choices left, and for the last position, we have only 1 choice left. 3! = 3 * 2 * 1 = 6.
\newline\textcolor{blue}{(0.14)}
\end{tcolorbox}

\raggedright
\textbf{Step 2}

\begin{tcolorbox}[
colback=cyan!10,
colframe=white,
width=\linewidth,
arc=0mm
]
Next, let's consider the internal arrangement of each group. For the 6 German delegates, they can be arranged in 6! ways. For the 5 French delegates, they can be arranged in 5! ways. And for the 3 Italian delegates, they can be arranged in 3! ways. 6! = 6 * 5 * 4 * 3 * 2 * 1 = 720, 5! = 5 * 4 * 3 * 2 * 1 = 120, and 3! = 3 * 2 * 1 = 6.
\newline\textcolor{blue}{(0.07)}
\end{tcolorbox}

\raggedright
\textbf{Step 3}

\begin{tcolorbox}[
colback=pink!20,
colframe=white,
width=\linewidth,
arc=0mm
]
To find the total number of arrangements, we multiply the number of ways to arrange the units (Step 2) by the number of ways to arrange each group internally (Step 3). Total number of arrangements = (number of ways to arrange units) * (number of ways to arrange Germans) * (number of ways to arrange French) * (number of ways to arrange Italians) = 6 * 720 * 120 * 6 = 6 * 518400 = 3104400.
\newline \textcolor{blue}{(0.99)}
\end{tcolorbox}

\raggedright
\textbf{Step 4}

\begin{tcolorbox}[
colback=pink!20,
colframe=white,
width=\linewidth,
arc=0mm
]
Therefore, the final answer is: \boxed{3104400}.
\newline \textcolor{blue}{(0.99)}
\end{tcolorbox}

\vspace{-1em}
\end{table*}

\section{Additional Results}
\label{app:additional-results}
\begin{figure}[t]
  \includegraphics[width=\columnwidth]{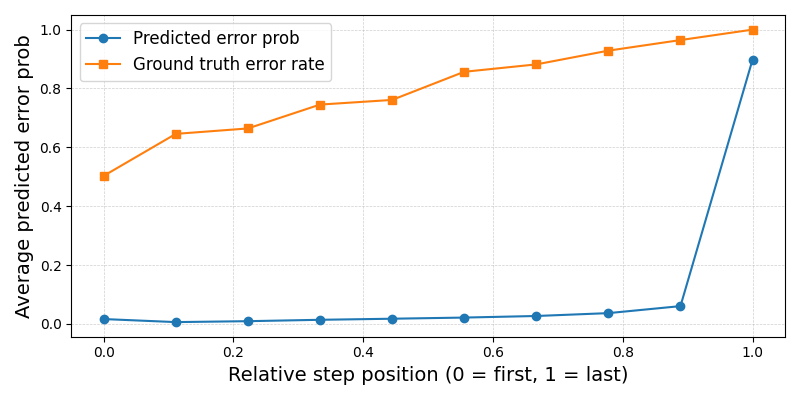}
  \caption{Average predicted error probability and ground‑truth error rate across relative step positions, computed over 1024 incorrect trajectories sampled from MATH split of Math-Shepherd. The PRM is trained on Math‑Shepherd with max pooling for 1 epoch.}
  \label{fig:max_pool_trend}
\end{figure}

\begin{figure}[t]
  \includegraphics[width=\columnwidth]{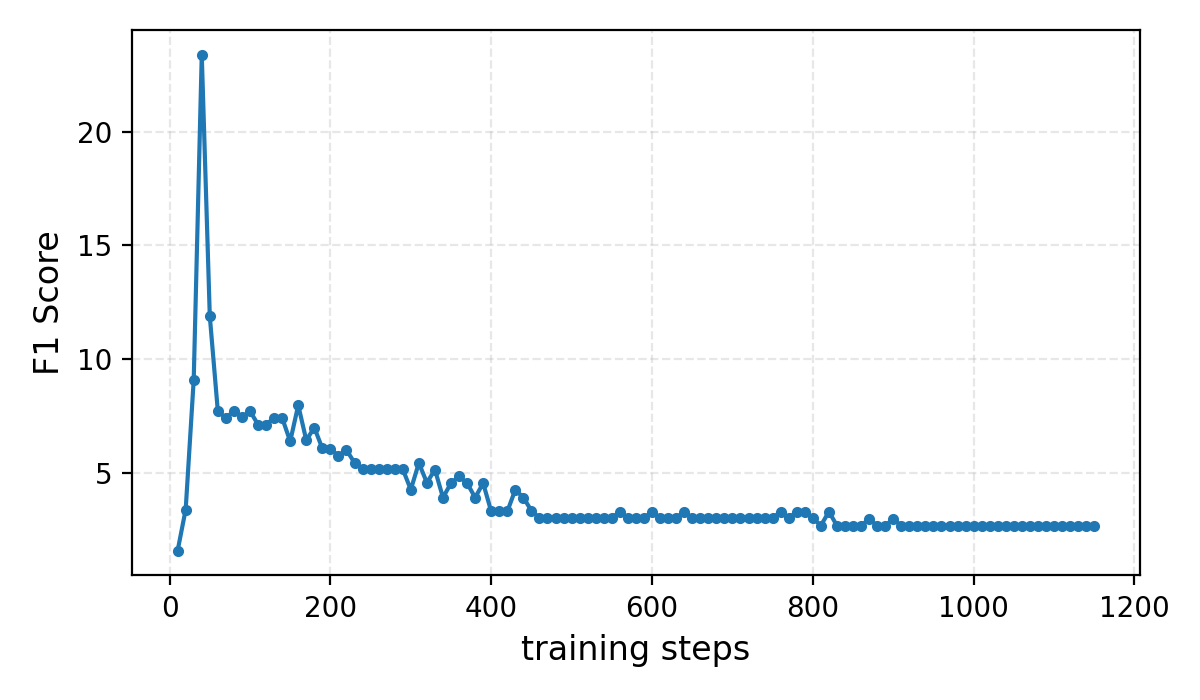}
  \caption{F1 score on MATH split of ProcessBench during training the PRM on Math‑Shepherd with max pooling.}
  \label{fig:max_pool_performance}
\end{figure}

\subsection{Preliminary Experiments}
\label{app:preliminary-experiments}
In early experiments, we applied max pooling as the MIL aggregation 
mechanism, since it directly instantiates the Standard Multi-Instance 
Assumption. Using Qwen2.5-Math-7B-Instruct as the backbone, we trained 
a PRM on the Math-Shepherd dataset. However, the PRM rapidly degenerated 
and performed poorly on downstream evaluations. 
Figure~\ref{fig:max_pool_trend} 
shows the average prediction values of the resulting PRM across different 
relative positions in erroneous reasoning trajectories (0 denotes the first step, 
1 the last step). The predictions remain near zero for all early positions but rise sharply at the final step. 
Figure~\ref{fig:max_pool_performance} shows that the  performance first improves and then collapses during training. This confirms our hypothesis that the strong dependence and redundancy among prefixes cause the classifier to quickly settle into degenerate solutions, failing to learn robust instance-level classification. In contrast, our proposed SWS pooling is provably Bayes consistent under mild assumptions and thus avoids such degenerate optima.

\subsection{Full Results of ProcessBench Evaluation}
\label{app:full-processbench-results}
We provide full results of ProcessBench evaluation in Table~\ref{tab:all_processbench}.

\subsection{Case Studies}
\label{app:case-study}
We show some qualatative examples taken from ProcessBench in Table 6. The blue values represent the predicted probability of \emph{incorrect} provided by the PRM trained with \mname{}. The probability is high once encountering the first wrong step.

\section{Extension to Reasoning with Reflection and Backtracking}
\label{app:reflection}

In the main text, we restrict our discussion to sequential reasoning without reflection or backtracking. 
Under this setting, any logical error, once introduced, is inherited by all subsequent steps and ultimately leads to an incorrect final answer. 
The Weakest Link principle therefore holds: a chain is correct iff every step is correct, and incorrect iff any step contains an error.

For reasoning that allows reflection and backtracking, however, a later step may correct an earlier mistake. 
Consequently, a trajectory that arrives at a correct final answer may temporarily contain intermediate errors, violating the Weakest Link principle in its strict sequential form. 
Below, we show that by lifting the formulation from linear \emph{prefixes} to graph-structured \emph{reasoning states}, sequential reasoning and reasoning with reflection can be treated within a unified framework, and our method applies seamlessly to both.

\paragraph{Reasoning state.}
A \emph{reasoning state} is a directed acyclic graph (DAG)
\[
G = (\mathcal{V}, \mathcal{E}),
\]
where each node $v \in \mathcal{V}$ represents a premise, an intermediate conclusion, or the final answer, and each directed edge $(u \to v) \in \mathcal{E}$ indicates that $v$ is logically derived from $u$.

\paragraph{Correctness of reasoning states.}
A reasoning state $G$ is said to contain an error if there exists a node whose stated result is factually false or cannot be validly inferred from its parent nodes.
We define the correctness label of $G$ as
\[
y(G) \in \{0,1\},
\]
where $1$ denotes the presence of at least one logical error and $0$ denotes an error‑free state.
Under this definition, correctness is monotone with respect to graph extension:
\begin{equation}\label{eq:label-monotonic-graph}
G_1 \subseteq G_2 \;\Longrightarrow\; y(G_2) \ge y(G_1),
\end{equation}
since any error present in a subgraph $G_1$ is inherited by every supergraph $G_2$ that extends it.

\paragraph{Reasoning as graph construction.}
A reasoning process can be viewed as the step-by-step construction of a reasoning state, starting from the initial premises and adding, modifying, or deleting nodes and edges until a node representing the final answer is produced.

For sequential reasoning without reflection, each step adds new nodes and edges to the current graph without removing any existing ones.
Hence, every prefix of the trajectory corresponds to a subgraph of the subsequent prefix, and the monotonicity of prefix correctness (Eq.~\ref{eq:label-monotonic}) follows immediately as a special case of Eq.~\ref{eq:label-monotonic-graph}.

For reasoning with reflection or backtracking, a step may revise or retract earlier conclusions, i.e., it may delete nodes or edges.
In this case, earlier reasoning states are not necessarily subgraphs of later ones.
Nevertheless, the monotonicity relation $G_1 \subseteq G_2 \Rightarrow y(G_2) \ge y(G_1)$ still holds for any pair where $G_1$ is a subgraph of $G_2$.

\paragraph{Applying our PRM to reasoning with reflection.}
Our method, \mname{}, trains the PRM on sequential reasoning trajectories and thus learns a classifier $f$ that maps a reasoning state to its error probability:
\[
f(G) \in [0,1],
\]
with the objective $f(G) \approx y(G)$.
Given a trajectory with reflection $\tau = \langle q, s_1, \dots, s_T\rangle$, we first convert each prefix $\tau_{:t}$ into its corresponding reasoning state $G_t$ (the DAG constructed up to step $t$) and linearize $G_t$ into a sequential form, i.e., a prefix without backtracking.
We then evaluate $f(G_t)$ for each step.

The step‑wise quality of reasoning can be assessed by the change in the corretness of reasoning states:
\begin{itemize}[leftmargin=*,itemsep=2pt,parsep=0pt,topsep=2pt,partopsep=0pt]
    \item If $f(G_{t}) < f(G_{t-1})$, step $t$ improves the reasoning state (e.g., by correcting an earlier mistake).
    \item If $f(G_{t}) > f(G_{t-1})$, step $t$ introduces or amplifies an error.
    \item If $f(G_{t}) \approx f(G_{t-1})$, step $t$ is neutral.
\end{itemize}
Thus, although trained solely on sequential trajectories without reflection, our PRM provides fine‑grained, per‑step feedback for reasoning with reflection and backtracking without any architectural modification.

\section{Proof of Theorems}
\label{app:theorem}

\subsection{Formulations and Assumptions}
\begin{definition}[Instance space]
    Let $\mathcal{X}$ denote the \emph{instance space}. 
    Each instance $x \in \mathcal{X}$ is associated with a label
    \[
        y(x) \in \{0,1\},
    \] 
    which induces a partition of $\mathcal{X}$ into the \emph{positive instance set} and the \emph{negative instance set}:
    \begin{align*}
        \mathcal{X}^+ := \{ x \in \mathcal{X} \mid y(x) = 1 \}, \\
        \mathcal{X}^- := \{ x \in \mathcal{X} \mid y(x) = 0 \}.
    \end{align*}
\end{definition}

\begin{definition}[Bag space]
    A \emph{bag} (or multi-instance) $B$ is defined as a finite set of instances from $\mathcal{X}$:
    \[
        B = (x_1, x_2, \dots, x_n), \quad x_i \in \mathcal{X}, \quad 1 \le n \le N,
    \]
    where $N$ is the maximum allowed bag size. The \emph{bag space} $\mathcal{B}$ is the set of all possible bags:
    \[
        \mathcal{B} = \bigcup_{n=1}^N \mathcal{X}^n.
    \]
    Each bag $B$ is associated with a label $y(B) \in \{0,1\}$, defined via the instance labels:
    \[
        y(B) = \mathbf{1}\Big( \exists i \in \{1,\dots,n\} \text{ s.t. } y(x_i)=1 \Big),
    \]
    i.e., a bag is labeled positive if it contains at least one positive instance, and negative otherwise.
\end{definition}

\begin{definition}[Bag distribution]
    Let $\mathcal{B}$ be the bag space as defined above. 
    A \emph{bag distribution} is a probability measure $\mathbb{P}$ on $\mathcal{B}$ that describes how bags are generated. 
    In general, the generation process can be described in two steps:
    \begin{enumerate}
        \item The bag size $n$ is drawn from a discrete distribution over $\{1,2,\dots,N\}$:
        \[
            n \sim P_N(n),
        \]
        where $N$ is the maximum allowed bag size.
        \item Conditional on the bag size $n$, the instances $(x_1, \dots, x_n)$ are jointly drawn according to some distribution $\mathbb{P}_n$ on $\mathcal{X}^n$:
        \[
            (x_1, \dots, x_n) \sim \mathbb{P}_n.
        \]
    \end{enumerate}
    Therefore, the overall distribution of a bag $B = (x_1, \dots, x_n)$ is
    \[
        \mathbb{P}(B) = P_N(n) \cdot \mathbb{P}_n(x_1, \dots, x_n),
    \]
    where no independence assumption on the instances is made; they may follow an arbitrary joint distribution.
\end{definition}

\begin{definition}[Bag cover set (Restated)]
    Let $S \subseteq \mathcal{X}$ be a set of instances. 
    The \emph{bag cover set} of $S$ is the set of all bags in $\mathcal{B}$ that contain at least one instance from $S$:
    \[
        \mathcal{C}(S) := \{ B \in \mathcal{B} \mid B \cap S \neq \emptyset \}.
    \]
    That is, a bag $B$ belongs to $\mathcal{C}(S)$ if and only if it contains at least one instance from $S$.

    Similarly, we define the \emph{positive bag cover set} and \emph{negative bag cover set} of $S$ as follows:
    \[
        \mathcal{C}^+(S) := \{ B \in \mathcal{B} \mid B \cap S \neq \emptyset \wedge y(B) = 1 \};
    \]
    \[
        \mathcal{C}^-(S) := \{ B \in \mathcal{B} \mid B \cap S \neq \emptyset \wedge y(B) = 0 \}.
    \]

    A bag cover set $\mathcal{C}(S)$ (or $\mathcal{C}^+(S), \mathcal{C}^-(S)$) is called \emph{non-trivial} if its probability under the bag distribution $\mathbb{P}$ is strictly positive:
    \[
        \mathbb{P}(\mathcal{C}(S)) > 0 \quad (\text{or } \mathbb{P}(\mathcal{C}^+(S)) > 0, \, \mathbb{P}(\mathcal{C}^-(S)) > 0).
    \]
\end{definition}

\begin{assumption}[$M$-Assumption (Restated)]\label{ass:M-restated}
    There exists a constant $M > 0$ such that for any set of negative instances $S \subseteq \mathcal{X}^-$, if
    \[
        \mathbb{P}(\mathcal{C}(S)) = \mathbb{P}(\mathcal{C}^+(S)) + \mathbb{P}(\mathcal{C}^-(S)) > 0,
    \]
    then
    \[
        \frac{\mathbb{P}(\mathcal{C}^+(S))}{\mathbb{P}(\mathcal{C}^-(S))} \le M,
    \]
    where $\mathbb{P}(\mathcal{C}^+(S))$ and $\mathbb{P}(\mathcal{C}^-(S))$ denote the probabilities of the positive and negative bag cover sets under the bag distribution, respectively.
\end{assumption}

\subsection{Mathematical Properties of Softmax Weight Sum}
We first give a few lemmas about the mathematical properties of the Softmax Weight Sum (SWS) function.

\begin{lemma}[Partial derivative of SWS]\label{lem:derivative}
Let the temperature parameter be $\alpha>0$ and let $\bm{p} = (p_1,\dots,p_n)\in[0,1]^n$.
Define the softmax weights $w_i$ and the softmax weighted sum $\hat{p}$ as
\[
\hat{p}(\bm{p})=\sum_{i=1}^n w_i p_i,\qquad
w_i=\frac{e^{p_i/\alpha}}{\sum_{j=1}^n e^{p_j/\alpha}}.
\]
Then for any $i\in\{1,\dots,n\}$ the partial derivative admits the representation
\[
\frac{\partial \hat{p}}{\partial p_i} = \frac{w_i}{\alpha}\bigl( \alpha + p_i - \hat{p} \bigr).
\]
In particular, the sign of the derivative is completely determined by the sign of $(\alpha + p_i - \hat{p})$:
\[
\operatorname{sgn}\left(\frac{\partial \hat{p}}{\partial p_i}\right)
= \operatorname{sgn}\bigl(\alpha + p_i - \hat{p}\bigr).
\]
\end{lemma}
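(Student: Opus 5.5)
The plan is a direct quotient-rule computation followed by a sign argument. Write $Z=\sum_{j=1}^n e^{p_j/\alpha}$, so that $w_j = e^{p_j/\alpha}/Z$. The first step is the standard softmax Jacobian:
\[
\frac{\partial w_j}{\partial p_i}=\frac{1}{\alpha}\,w_j(\delta_{ij}-w_i).
\]
It follows because $\partial e^{p_j/\alpha}/\partial p_i=\delta_{ij}e^{p_j/\alpha}/\alpha$ and $\partial Z/\partial p_i = e^{p_i/\alpha}/\alpha$.

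Next, differentiate $\hat p=\sum_j w_j p_j$ with the product rule. The explicit dependence of $p_j$ on $p_i$ contributes $w_i$. The weights contribute
\[
\sum_j p_j\,\frac{w_j(\delta_{ij}-w_i)}{\alpha}=\frac{1}{\alpha}\Bigl(w_ip_i - w_i\sum_j w_jp_j\Bigr)=\frac{w_i}{\alpha}(p_i-\hat p).
\]
Adding the two pieces gives
\[
\frac{\partial \hat p}{\partial p_i}=w_i+\frac{w_i}{\alpha}(p_i-\hat p)=\frac{w_i}{\alpha}(\alpha+p_i-\hat p),
\]
which is the claimed representation.

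For the sign statement, note that $w_i>0$ always, since it is a ratio of strictly positive exponentials, and $\alpha>0$ by assumption. So the prefactor $w_i/\alpha$ is strictly positive, and the sign of the derivative equals $\operatorname{sgn}(\alpha+p_i-\hat p)$.

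There is no real obstacle here. The only point needing care is bookkeeping the Kronecker delta so that the $-w_i\sum_j w_jp_j$ term collapses to $-w_i\hat p$. The hypothesis $\bm p\in[0,1]^n$ is not needed for this lemma.
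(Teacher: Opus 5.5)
Your proof is correct. The paper states this lemma with no proof at all, so your softmax-Jacobian and product-rule computation, $\frac{\partial \hat p}{\partial p_i} = w_i + \frac{w_i}{\alpha}(p_i-\hat p)$, together with the positivity of $w_i/\alpha$, is exactly the routine verification it omits (and you are right that $\bm{p}\in[0,1]^n$ plays no role here).
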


\begin{lemma}[Minimum of SWS under $p_1 = \varepsilon$]\label{lem:minimum-base}
Let $\alpha>0$, $\varepsilon\in(0,1]$, and let $n\in\mathbb{N}$, $n\ge 1$.
For any $\bm{p}=(p_1,\dots,p_n)\in[0,1]^n$ with $p_1 = \varepsilon$, define the softmax weighted sum $\hat{p}(\bm{p})$ as in Lemma~\ref{lem:derivative}.
Then the minimum of $\hat{p}$ over all such $\bm{p}$ is
\[
\hat{p}_{\min}=
\begin{cases}
\dfrac{\varepsilon\, e^{\varepsilon/\alpha}}{e^{\varepsilon/\alpha}+n-1}, 
& \text{if } \varepsilon \le \alpha(u+1), \\[1.6em]
\varepsilon - \alpha\, u, 
& \text{if } \varepsilon > \alpha(u+1),
\end{cases}
\]
where $u = W_0\!\big(\frac{n-1}{e}\big)$ and $W_0$ is the principal branch of the Lambert $W$ function.
For $n=1$ the expression reduces to $\hat{p}_{\min}=\varepsilon$.
\end{lemma}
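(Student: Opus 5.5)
The plan is to combine a coordinatewise analysis based on Lemma~\ref{lem:derivative} with a reduction to a one-variable problem. For $n=1$ we have $\hat p=p_1=\varepsilon$ and there is nothing to prove, so assume $n\ge 2$. The feasible set $\{\varepsilon\}\times[0,1]^{n-1}$ is compact and $\hat p$ is continuous, so a minimizer $\bm p^\star$ exists. The aim is to pin down its free coordinates $p_2,\dots,p_n$.

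First I would show that, with all other coordinates fixed, $\hat p$ is unimodal in each free coordinate $p_i$. Set $g(p_i)=\alpha+p_i-\hat p(p_i)$. At any zero of $g$, Lemma~\ref{lem:derivative} gives $\partial\hat p/\partial p_i=0$, hence $g'=1-\partial\hat p/\partial p_i=1>0$. So $g$ crosses zero at most once, and only upward. Since $\operatorname{sgn}(\partial\hat p/\partial p_i)=\operatorname{sgn}(g)$, $\hat p$ first decreases and then increases in $p_i$ (one of the two phases may be empty). It follows that at $\bm p^\star$ every free coordinate equals $\max(0,\hat p_{\min}-\alpha)$. The upper boundary $p_i=1$ cannot occur: there $g=\alpha+1-\hat p\ge\alpha>0$, because $\hat p\le 1$. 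Consequently all free coordinates share a common value $c=\max(0,\hat p_{\min}-\alpha)$, and the problem reduces to
\[
\hat p(c)=\frac{\varepsilon e^{\varepsilon/\alpha}+(n-1)c\,e^{c/\alpha}}{e^{\varepsilon/\alpha}+(n-1)e^{c/\alpha}}.
\]

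This splits into two cases. If $c=0$, the value is $\varepsilon e^{\varepsilon/\alpha}/(e^{\varepsilon/\alpha}+n-1)$, which gives the first branch. If $c=\hat p-\alpha>0$, substitute $\hat p=c+\alpha$ into the display and clear denominators. This gives $\alpha(n-1)e^{c/\alpha}=(\varepsilon-c-\alpha)e^{\varepsilon/\alpha}$. With $u:=(\varepsilon-c-\alpha)/\alpha$ we have $(c-\varepsilon)/\alpha=-u-1$, so the equation becomes $(n-1)e^{-u-1}=u$, i.e.\ $ue^u=(n-1)/e$. Because the right-hand side is nonnegative, $u=W_0\big((n-1)/e\big)$, and the minimum equals $c+\alpha=\varepsilon-\alpha u$. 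The condition $c>0$ is equivalent to $\varepsilon>\alpha(u+1)$, which gives the second branch.

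The step I expect to be the main obstacle is justifying the case split exactly at $\varepsilon=\alpha(u+1)$, i.e.\ showing that $c=0$ holds precisely when $\varepsilon\le\alpha(u+1)$. By the unimodality step, $c=0$ is optimal iff $g(0)\ge 0$ at the symmetric point with $c=0$, i.e.\ iff $\varepsilon e^{\varepsilon/\alpha}/(e^{\varepsilon/\alpha}+n-1)\le\alpha$. I would rewrite this as $(\varepsilon/\alpha-1)e^{\varepsilon/\alpha}\le n-1$, equivalently $(\varepsilon/\alpha-1)e^{\varepsilon/\alpha-1}\le (n-1)/e$. The function $t\mapsto te^t$ is increasing for $t\ge -1$, and for $t<-1$ the left side is negative, so the inequality holds trivially. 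Hence the condition is $\varepsilon/\alpha-1\le W_0\big((n-1)/e\big)=u$. As a consistency check, both branches equal $\alpha$ at the threshold.
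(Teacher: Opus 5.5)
Your proposal is correct and follows essentially the same route as the paper's proof: a coordinatewise sign analysis via Lemma~\ref{lem:derivative}, ruling out $p_i=1$, reducing to all free coordinates being either $0$ or a common interior value $c=\hat p-\alpha$, solving $c=\varepsilon-\alpha(u+1)$ with the Lambert $W$ function, and using feasibility to select the branch. Your additions tighten this same argument rather than change it: compactness guarantees that a minimizer exists, the single-crossing argument yields $c=\max(0,\hat p_{\min}-\alpha)$ in one step, and the threshold comes from the monotonicity of $t e^{t}$ rather than from the continuity and monotonicity of $\varphi$.
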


\begin{proof}
Fix $p_1=\varepsilon$ and write
\[
\hat p(\bm{p})=\frac{\varepsilon e^{\varepsilon/\alpha}+ \sum_{i=2}^n p_i e^{p_i/\alpha}}
                 {e^{\varepsilon/\alpha}+ \sum_{i=2}^n e^{p_i/\alpha}} .
\]
By Lemma~\ref{lem:derivative}, for every $i\ge 2$,
\[
\frac{\partial \hat p}{\partial p_i}= \frac{w_i}{\alpha}(\alpha+p_i-\hat p).
\]
Since $w_i>0$, a minimizer has each $p_i$ either satisfying $p_i=\hat p-\alpha$ or lying on the boundary $\{0,1\}$.  
If some $p_i=1$, then $\alpha+1-\hat p>0$, so the derivative is positive; decreasing that coordinate would lower $\hat p$, contradicting minimality. Hence every $p_i\;(i\ge2)$ is either $0$ or satisfies $p_i=\hat p-\alpha$.

Suppose a minimizer contains both $0$ and an interior point.
For $p_i=0$ the inward derivative must be non‑negative, giving $\hat p\le\alpha$.
For an interior point $p_j$ we have $p_j=\hat p-\alpha>0$, yielding $\hat p>\alpha$, a contradiction.
Therefore a minimizer consists either entirely of zeros or entirely of a common value $c\in(0,1)$ obeying $c=\hat p-\alpha$.

\noindent\textbf{Case 1:} $p_2=\dots=p_n=0$.
\[
\hat p_{\text{I}}=\frac{\varepsilon e^{\varepsilon/\alpha}}{e^{\varepsilon/\alpha}+n-1}.
\]

\noindent\textbf{Case 2:} $p_2=\dots=p_n=c$ with $c=\hat p-\alpha$.
Substituting into the definition of $\hat p$ and rearranging gives
\[
\frac{\varepsilon-c-\alpha}{\alpha}\, e^{(\varepsilon-c)/\alpha}=n-1.
\]
Let $u=W_0\!\bigl(\frac{n-1}{e}\bigr)$, i.e.\ $ue^u=\frac{n-1}{e}$. Then
\[
c=\varepsilon-\alpha(u+1),\qquad
\hat p_{\text{II}}=\varepsilon-\alpha u .
\]
Feasibility of the interior solution requires $c>0$, which is equivalent to
$\varepsilon>\alpha(u+1)$.

Now define the function
\[
\varphi(\varepsilon)=
\begin{cases}
\hat p_{\text{I}}, & \varepsilon\le\alpha(u+1),\\[2pt]
\hat p_{\text{II}}, & \varepsilon>\alpha(u+1).
\end{cases}
\]
Both expressions equal $\alpha$ at $\varepsilon=\alpha(u+1)$, and $\varphi$ is strictly increasing on each interval (the derivative of $\hat p_{\text{I}}$ is positive, while $\hat p_{\text{II}}$ has derivative $1$). Thus $\varphi$ is continuous and increasing on $(0,1]$. For $\varepsilon\le\alpha(u+1)$ the interior candidate is infeasible, while for $\varepsilon>\alpha(u+1)$ the all‑zero candidate violates the necessary boundary condition $\hat p\le\alpha$, hence cannot be a local minimum. Consequently the global minimum is precisely $\varphi(\varepsilon)$, which yields the claimed formula. The case $n=1$ gives $u=0$ and both branches reduce to $\hat p_{\min}=\varepsilon$.
\end{proof}

\begin{lemma}[Minimum of SWS under $\max_i p_i \ge \varepsilon$]\label{lem:minimum-advanced}
Let $\alpha>0$, $\varepsilon\in(0,1]$, and $n\in\mathbb{N}$, $n\ge 1$.  
For any $\bm{p}=(p_1,\dots,p_n)\in[0,1]^n$ with $\max_i p_i \ge \varepsilon$, define the softmax weighted sum $\hat{p}(\bm{p})$ as in Lemma~\ref{lem:derivative}.
Then the minimum of $\hat{p}$ over all such $\bm{p}$ is 
\[
\hat{p}_{\min}=
\begin{cases}
\dfrac{\varepsilon\, e^{\varepsilon/\alpha}}{e^{\varepsilon/\alpha}+n-1}, 
& \text{if } \varepsilon \le \alpha(u+1), \\[1.6em]
\varepsilon - \alpha\, u, 
& \text{if } \varepsilon > \alpha(u+1),
\end{cases}
\]
where $u = W_0\!\bigl(\frac{n-1}{e}\bigr)$ and $W_0$ is the principal branch of the Lambert $W$ function.  
For $n=1$ this reduces to $\hat{p}_{\min}=\varepsilon$.
\end{lemma}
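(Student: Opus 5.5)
The plan is to reduce Lemma~\ref{lem:minimum-advanced} to Lemma~\ref{lem:minimum-base} by a symmetry argument followed by a monotonicity argument in the value of the largest coordinate. Since $\hat{p}$ is invariant under permutations of its arguments, we may assume without loss of generality that $p_1=\max_i p_i=:m\ge\varepsilon$. The feasible set then splits as $\bigcup_{m\in[\varepsilon,1]}\{\bm{p}: p_1=m\}$. It is in fact enough to relax the constraint to $p_1=m$ with the other coordinates arbitrary in $[0,1]$. This relaxation can only lower the minimum. It also still contains a point with $p_1=\varepsilon$ as maximum: the minimizer from Lemma~\ref{lem:minimum-base} has all other coordinates equal to $0$ or to $c=\varepsilon-\alpha(u+1)<\varepsilon$. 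Therefore
\[
\min_{\max_i p_i\ge\varepsilon}\hat{p}\;=\;\min_{m\in[\varepsilon,1]}\varphi_n(m),
\]
where $\varphi_n(m)$ denotes the value given by Lemma~\ref{lem:minimum-base} with $\varepsilon$ replaced by $m$.

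The second step is to invoke the monotonicity already established inside the proof of Lemma~\ref{lem:minimum-base}. There, $\varphi$ was shown to be continuous and increasing in its argument. On the first branch, $\tfrac{d}{dm}\bigl[m e^{m/\alpha}/(e^{m/\alpha}+n-1)\bigr]>0$, which is directly checkable because both $m$ and the fraction $e^{m/\alpha}/(e^{m/\alpha}+n-1)$ are increasing. On the second branch, $m-\alpha u$ has slope $1$. The two branches also agree at $m=\alpha(u+1)$, where both equal $\alpha$. Hence the minimum over $m\in[\varepsilon,1]$ is attained at $m=\varepsilon$, which yields exactly the stated formula. For $n=1$ we have $u=0$ and the bag is the single coordinate, so $\hat p=p_1\ge\varepsilon$ with equality attainable.

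The main obstacle is the feasibility check in the relaxation. I must confirm that the minimizer of Lemma~\ref{lem:minimum-base} at $p_1=\varepsilon$ really satisfies $\max_i p_i=\varepsilon$, so that the lower bound is attained and not merely approached. For the all-zero branch this is immediate. For the interior branch I will use $c=\varepsilon-\alpha(u+1)<\varepsilon$, which holds since $\alpha>0$ and $u\ge 0$. I also want to double-check that continuity of $\varphi$ at the breakpoint holds, so that monotonicity across the two branches is legitimate. This needs the identity $\alpha(u+1)e^{u+1}/(e^{u+1}+n-1)=\alpha$, which is equivalent to $ue^{u}=(n-1)/e$, the defining relation of $u$.
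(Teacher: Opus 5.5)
Your proposal is correct and follows the paper's proof: fix the largest coordinate at $m\ge\varepsilon$, invoke Lemma~\ref{lem:minimum-base} to get $\varphi(m)$, use that $\varphi$ is continuous and increasing (both branches equal $\alpha$ at $m=\alpha(u+1)$) to push $m$ down to $\varepsilon$, and appeal to permutation symmetry of $\hat p$. The feasibility check you flag as the main obstacle is automatic, because $\{p_1\ge\varepsilon\}\subseteq\{\max_i p_i\ge\varepsilon\}$, so the minimizer of Lemma~\ref{lem:minimum-base} at $p_1=\varepsilon$ is admissible whether or not $p_1$ is the largest coordinate.
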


\begin{proof}
Let's first consider the minimum of $\hat{p}$ under the constraint $p_1 \ge \varepsilon$, i.e., 
\[
\min_{\bm{p}\,:\,p_1 \ge \varepsilon} \hat{p}(\bm{p}).
\]
For any admissible $\bm{p}$, set $m = p_1 \ge \varepsilon$. Consider the family of vectors with a fixed value $m$ in the first coordinate and arbitrary remaining coordinates in $[0,1]$.  The minimum of $\hat{p}$ over this family is a function $\varphi(m)$, which by Lemma~\ref{lem:minimum-base} equals
\begin{align*}
    \varphi(m)
    &= \min_{\bm{p}\,:\,p_1 = m} \hat{p}(\bm{p}) \\
    &= 
\begin{cases}
\dfrac{m\, e^{m/\alpha}}{e^{m/\alpha}+n-1}, & m \le \alpha(u+1), \\[1.2em]
m - \alpha\, u, & m > \alpha(u+1),
\end{cases}
\end{align*}
where $u = W_0\bigl(\frac{n-1}{e}\bigr)$. 

Now $\varphi(m)$ is strictly increasing for $m\in(0,1]$.  Indeed, for $m<\alpha(u+1)$,
\begin{align*}
   \varphi'(m) = w\bigl[1 + \tfrac{m}{\alpha}(1-w)\bigr] > 0,\\
   w = \frac{e^{m/\alpha}}{e^{m/\alpha}+n-1}, 
\end{align*}

while for $m>\alpha(u+1)$ we have $\varphi'(m)=1>0$; the two branches meet continuously at $m=\alpha(u+1)$ where both give $\varphi=\alpha$.  Hence $\varphi$ is non‑decreasing, and its minimum on the interval $[\varepsilon,1]$ is attained at the left endpoint $m=\varepsilon$.  Consequently, 
\[
\min_{\bm{p}\,:\,p_1 \ge \varepsilon} \hat{p}(\bm{p}) = \min_{m\ge \varepsilon} \varphi(m) = \varphi(\varepsilon).
\]

By symmetry we note that the constraint $p_1 \ge \varepsilon$ and $\max p_i \ge \varepsilon$ are equivalent. Therefore,
\[
\min_{\bm{p}\,:\,\max p_i \ge \varepsilon} \hat{p}(\bm{p}) = 
\min_{\bm{p}\,:\,p_1 \ge \varepsilon} \hat{p}(\bm{p}) = \varphi(\varepsilon).
\]
This completes the proof.
\end{proof}

\begin{lemma}[Increment lower bound for SWS]\label{lem:increment}
    Let $\alpha>0$, $\varepsilon\in(0,1]$, and $N\in\mathbb{N}$ ($N\ge 1$).  
    For any $\bm{p}=(p_1,\dots,p_n)\in[0,1]^n$ with $n\le N$ and $p_1 \le 1-\varepsilon$, fix $p_2,\dots,p_n$ and define $\bm{p'}=(1,p_2,\dots,p_n)$.
    Define the softmax weighted sum $\hat{p}(\cdot)$ as in Lemma~\ref{lem:derivative}.
    Then the increment $\Delta = \hat{p}(\bm{p'}) - \hat{p}(\bm{p})$ admits the uniform positive lower bound
    \[
        \Delta \;\ge\; \delta(\varepsilon,\alpha,N) \;>\; 0,
    \]
    where
    \begin{align*}
        &\delta(\varepsilon,\alpha,N)\\=
        &\min\!\left\{
            \frac{1}{1+(N-1)e^{1/\alpha}},\;
            \frac{\varepsilon}{1+(N-1)e^{\varepsilon/\alpha}}
        \right\}.
    \end{align*}
\end{lemma}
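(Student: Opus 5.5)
The plan is to isolate the dependence on $p_1$ by collapsing the other coordinates into two scalars, write the increment in closed form, and reduce the lower bound to a one‑variable minimization. Put
\[
A=\sum_{i=2}^n e^{p_i/\alpha},\qquad q=\frac{1}{A}\sum_{i=2}^n p_i e^{p_i/\alpha}\in[0,1]
\]
when $n\ge 2$, and for $n=1$ set $A=0$ with any $q$. For $t\in[0,1]$ let $w(t)=e^{t/\alpha}/(e^{t/\alpha}+A)$, the softmax weight of the first coordinate when $p_1=t$. A direct rearrangement of the definition of $\hat p$ gives
\[
\hat p(t,p_2,\dots,p_n)=q+w(t)\,(t-q).
\]
Hence
\[
\Delta=w(1)(1-q)-w(p_1)(p_1-q).
\]
Lemma~\ref{lem:derivative} is not needed here. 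Integrating the derivative would be awkward because its sign can change, whereas this exact formula avoids the issue.

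The first key step is the bound $\Delta\ge w(p_1)(1-p_1)$, obtained by splitting on the sign of $q-p_1$. I use throughout that $w$ is increasing in $t$, so $w(1)\ge w(p_1)$.
\begin{itemize}
\item If $q\ge p_1$, both terms $w(1)(1-q)$ and $w(p_1)(q-p_1)$ are nonnegative. Replacing $w(1)$ by $w(p_1)$ gives $\Delta\ge w(p_1)\big[(1-q)+(q-p_1)\big]=w(p_1)(1-p_1)$.
\item If $q<p_1$, replacing $w(p_1)$ by $w(1)$ in the negative term gives $\Delta\ge w(1)\big[(1-q)-(p_1-q)\big]=w(1)(1-p_1)\ge w(p_1)(1-p_1)$.
\end{itemize}

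The second step removes the dependence on $p_2,\dots,p_n$. Since $A\le (n-1)e^{1/\alpha}\le (N-1)e^{1/\alpha}=:C$, we get
\[
\Delta\ge h(p_1),\qquad h(t):=\frac{(1-t)\,e^{t/\alpha}}{e^{t/\alpha}+C},\qquad t\in[0,1-\varepsilon].
\]
It then suffices to show that $h$ attains its minimum on $[0,1-\varepsilon]$ at an endpoint. The endpoint values are
\[
h(0)=\frac{1}{1+(N-1)e^{1/\alpha}},\qquad h(1-\varepsilon)=\frac{\varepsilon}{1+(N-1)e^{\varepsilon/\alpha}},
\]
where the second follows from dividing numerator and denominator by $e^{(1-\varepsilon)/\alpha}$. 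These are exactly the two entries of $\delta(\varepsilon,\alpha,N)$.

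The step I expect to need the most care is this endpoint claim, which I will handle through log‑concavity. Write $\log h(t)=\log(1-t)+t/\alpha-\log(e^{t/\alpha}+C)$. With $\sigma(t)=e^{t/\alpha}/(e^{t/\alpha}+C)$, differentiating twice gives
\[
(\log h)''(t)=-\frac{1}{(1-t)^2}-\frac{\sigma(1-\sigma)}{\alpha^2}<0 .
\]
So $\log h$ is concave and $h$ is quasi‑concave on $[0,1-\varepsilon]$. A quasi‑concave function on an interval is minimized at an endpoint, so $h(p_1)\ge\min\{h(0),h(1-\varepsilon)\}=\delta$. Strict positivity of $\delta$ is immediate since both entries are positive. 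The case $n=1$ is included, because then $A=0\le C$ and the argument goes through unchanged.
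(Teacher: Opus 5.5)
Your proof is correct, and it takes a different route from the paper's. The paper works with $h(x)=\hat p(x,p_2,\dots,p_n)$ directly. Using Lemma~\ref{lem:derivative}, it shows that every interior critical point of $h$ is a strict local minimum (there $h''=w_1/\alpha>0$). Hence $h(p_1)\le\max\{h(0),h(1-\varepsilon)\}$ and $\Delta\ge\min\{h(1)-h(0),\,h(1)-h(1-\varepsilon)\}$. It then computes these two differences in closed form and bounds each one separately, using $\sum_{i\ge2}p_ie^{p_i/\alpha}\le\sum_{i\ge2}e^{p_i/\alpha}\le(N-1)e^{1/\alpha}$.

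You reverse the order of the steps:
\begin{enumerate}
\item The convex-combination identity $\hat p=q+w(t)(t-q)$ gives the pointwise bound $\Delta\ge w(p_1)(1-p_1)$ for every admissible $p_1$.
\item A single estimate, $A\le(N-1)e^{1/\alpha}$, removes the other coordinates.
\item Only then do you make the endpoint argument, on the explicit function $(1-t)e^{t/\alpha}/(e^{t/\alpha}+C)$, via log-concavity rather than via classifying critical points of $\hat p$.
\end{enumerate}

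What your route buys:
\begin{itemize}
\item Your intermediate bound is more informative. Evaluated at $p_1=0$ and $p_1=1-\varepsilon$ (before replacing your $A$ by $C$), it gives exactly the two bounds the paper derives by separate computations, namely $w(0)$ and $\varepsilon\,w(1-\varepsilon)$. It thus unifies them in one formula and explains why the two entries of $\delta$ are endpoint values of one function.
\item It yields Corollary~\ref{cor:maximum} at once, since $w>0$.
\item The argument is fully elementary and needs no calculus on $\hat p$.
\end{itemize}

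What the paper's route buys: it establishes a structural fact about SWS itself, namely that $x\mapsto\hat p(x,p_2,\dots,p_n)$ has no interior maximum. It does so in the same derivative-based language of Lemma~\ref{lem:derivative} used throughout the appendix.
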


\begin{proof}
    Write $a=p_1\in[0,1-\varepsilon]$ and set
    \[
        A=\sum_{i=2}^n p_i e^{p_i/\alpha},\qquad 
        B=\sum_{i=2}^n e^{p_i/\alpha}
    \]
    (with $A=B=0$ when $n=1$). For any $x\in[0,1]$ define the auxiliary function
    \[
        h(x)=\frac{x e^{x/\alpha}+A}{e^{x/\alpha}+B},
    \]
    so that $\hat{p}(\bm{p})=h(a)$ and $\hat{p}(\bm{p}')=h(1)$.

    From Lemma~\ref{lem:derivative}, $h'(x)=\frac{w_1(x)}{\alpha}(\alpha+x-h(x))$ where $w_1(x)=e^{x/\alpha}/(e^{x/\alpha}+B)>0$.
    Every interior critical point satisfies $\alpha+x=h(x)$, and evaluating the second derivative there gives $h''=w_1/\alpha>0$; hence all interior critical points are strict local minima.
    Hence the maximum of $h$ on any subinterval must occur at an endpoint.  

    Consequently, for $a\in[0,1-\varepsilon]$ we have $h(a)\le \max\{h(0),h(1-\varepsilon)\}$, and therefore
    \begin{align*}
        \Delta &= h(1)-h(a) \\
        &\ge \min\!\bigl\{h(1)-h(0),\; h(1)-h(1-\varepsilon)\bigr\}.
    \end{align*}

    We now bound the two differences from below.

    \textbf{Bounding $h(1)-h(0)$.}
    A direct computation gives
    \[
        h(1)-h(0)
        =\frac{e^{1/\alpha}(1+B-A)+A}{(e^{1/\alpha}+B)(1+B)}.
    \]
    Since $p_i\in[0,1]$ implies $A\le B$ and $e^{1/\alpha}>1$, the numerator decreases with $A$.  
    Its minimum on the admissible region is attained at $A=B$, yielding
    \[
        h(1)-h(0)\ge\frac{1}{1+B}.
    \]
    From $p_i\le 1$ we have $B\le (n-1)e^{1/\alpha}\le (N-1)e^{1/\alpha}$, hence
    \[
        h(1)-h(0)\ge\frac{1}{1+(N-1)e^{1/\alpha}}.
    \]

    \textbf{Bounding $h(1)-h(1-\varepsilon)$.}
    Set $x=1-\varepsilon$. Algebraic simplification yields $h(1)-h(x)=$
    \[
        \frac{\varepsilon e^{x/\alpha}(e^{1/\alpha}+B) + (B-A)(e^{1/\alpha}-e^{x/\alpha})}
             {(e^{1/\alpha}+B)(e^{x/\alpha}+B)}.
    \]
    Since $A\le B$ and $e^{1/\alpha}>e^{x/\alpha}$, the second term in the numerator is non‑negative.  
    Hence the numerator is at least $\varepsilon e^{x/\alpha}(e^{1/\alpha}+B)$, and we obtain the lower bound
    \[
        h(1)-h(x)\ge\frac{\varepsilon e^{x/\alpha}}{e^{x/\alpha}+B}.
    \]
    Using $B\le (N-1)e^{1/\alpha}$ and $e^{x/\alpha}=e^{(1-\varepsilon)/\alpha}$,
    \[
        h(1)-h(x)\ge\frac{\varepsilon}{1+(N-1)e^{\varepsilon/\alpha}}.
    \]

    Taking the minimum of these two bounds establishes the claimed inequality.
    The strict positivity follows from $\varepsilon>0$ and the boundedness of all terms.
\end{proof}

\begin{corollary}[Maximum of SWS]\label{cor:maximum}
    Let $\alpha>0$, $n\ge 1$, and fix arbitrary $p_2,\dots,p_n\in[0,1]$.  
    Consider the function
    \[
        \varphi(x) = \hat{p}(x,p_2,\dots,p_n), \qquad x\in[0,1],
    \]
    where $\hat{p}$ is the softmax weighted sum defined in Lemma~\ref{lem:derivative}.
    Then $\varphi$ attains its maximum on $[0,1]$ uniquely at $x=1$.
\end{corollary}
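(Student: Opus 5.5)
This follows from the single-variable analysis already used in the proof of Lemma~\ref{lem:increment}. Set $A=\sum_{i\ge2}p_ie^{p_i/\alpha}$ and $B=\sum_{i\ge2}e^{p_i/\alpha}$, so that
\[
\varphi(x)=\frac{xe^{x/\alpha}+A}{e^{x/\alpha}+B},
\]
which is smooth on $[0,1]$.

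\textbf{Step 1 (no interior maxima).} By Lemma~\ref{lem:derivative},
\[
\varphi'(x)=\frac{w_1(x)}{\alpha}\bigl(\alpha+x-\varphi(x)\bigr), \qquad w_1(x)>0 .
\]
At any interior critical point we have $\alpha+x=\varphi(x)$. Differentiating $g(x)=\alpha+x-\varphi(x)$ there gives $g'=1-\varphi'=1$, so
\[
\varphi''=\frac{w_1}{\alpha}\,g'+\frac{w_1'}{\alpha}\,g=\frac{w_1}{\alpha}>0 .
\]
Every interior critical point is therefore a strict local minimum. Hence the maximum of $\varphi$ on $[0,1]$ can only be attained at $x=0$ or $x=1$.

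\textbf{Step 2 (compare the endpoints).} I must show $\varphi(1)>\varphi(0)$. This is exactly the bound $h(1)-h(0)\ge 1/(1+B)>0$ established in the proof of Lemma~\ref{lem:increment}, whose derivation did not use the constraint on $p_1$. Alternatively, I can check it directly. One has
\[
\varphi(1)-\varphi(0)=\frac{e^{1/\alpha}(1+B-A)+A}{(e^{1/\alpha}+B)(1+B)} .
\]
Since $A\le B$, the numerator is at least $e^{1/\alpha}>0$. For $n=1$ the claim is trivial because $\varphi(x)=x$.

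\textbf{Step 3 (uniqueness).} Since $\varphi(1)>\varphi(0)$ and no interior point can be a maximum, the maximum is attained only at $x=1$.

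The only delicate point is Step 1. It must exclude an interior maximum even where $\varphi'$ vanishes, and the second-derivative computation at critical points handles this. A continuity argument closes the remaining gap: $g$ can only cross zero upward, because $g'=1$ wherever $g=0$. So $\varphi'$ changes sign at most once, from negative to positive. This makes $\varphi$ quasi-convex, and its maximum lies at an endpoint.
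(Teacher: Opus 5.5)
Your proof is correct, but it is organized differently from the paper's. The paper's proof is a one-line application of Lemma~\ref{lem:increment}. For each $x\in[0,1)$ it sets $\varepsilon=1-x$ and applies that lemma (with $N=n$) to $(x,p_2,\dots,p_n)$, which gives the quantitative gap $\varphi(1)-\varphi(x)\ge\delta(1-x,\alpha,n)>0$.

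In this application $p_1=1-\varepsilon$ is the right endpoint of $[0,1-\varepsilon]$. So the critical-point part of that lemma is vacuous, and the work is really done by the explicit identity
\[
\varphi(1)-\varphi(x)=\frac{(1-x)e^{x/\alpha}(e^{1/\alpha}+B)+(B-A)(e^{1/\alpha}-e^{x/\alpha})}{(e^{1/\alpha}+B)(e^{x/\alpha}+B)}.
\]
Its numerator is visibly positive for every $x<1$, because $A\le B$.

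You instead use the other half of that lemma's machinery. Every interior critical point is a strict local minimum, so the maximum is pushed to an endpoint, and you then need only the single comparison $\varphi(1)>\varphi(0)$. Your route is self-contained and qualitative, and it needs only the simpler $x=0$ computation. The paper's route costs nothing extra because it reuses a quantitative bound it needs anyway: the same Lemma~\ref{lem:increment} feeds Lemma~\ref{lem:fn-quant}.

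Your closing quasi-convexity remark is redundant. Step~1 already rules out an interior maximum: an interior global maximum would be a critical point, hence a strict local minimum, which is impossible.
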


\begin{proof}
    For any $x\in[0,1)$, set $\varepsilon = 1-x > 0$.
    Applying Lemma~\ref{lem:increment} with $N=n$ (the actual size of the bag) to the vector
    $\bm{p} = (x,p_2,\dots,p_n)$ gives
    \begin{align*}
        \varphi(1) - \varphi(x) &= \hat{p}(1,p_2,\dots,p_n) - \hat{p}(x,p_2,\dots,p_n)\\
        &\ge \delta(\varepsilon,\alpha,n) \;>\; 0.
    \end{align*}
        
    Hence $\varphi(1) > \varphi(x)$ for every $x<1$, which means $x=1$ is the strict global maximum.
\end{proof}

\subsection{Proof of Theorem~\ref{thm:mil-minimize}}

\begin{lemma}[Loss decrease for correcting false negatives]\label{lem:mil-bias-cont-step1}
Let $\mathcal{X}$ be the instance space, $\mathcal{B}$ the bag space, and $\mathbb{P}$ a distribution over $\mathcal{B}$.
Let $f:\mathcal{X}\to[0,1]$ be an instance‑level classifier.
For a bag $B=(x_1,\dots,x_n)$ with label $y(B)\in\{0,1\}$, define the softmax weighted sum and the expected loss as
\begin{align*}
   \hat{p}(B)&=\mathrm{SWS}_\alpha\bigl(f(x_1),\dots,f(x_n)\bigr),\\
\mathcal{L}(f)&=\mathbb{E}_{B\sim\mathbb{P}}\bigl[L_{\mathrm{CE}}\bigl(\hat{p}(B),y(B)\bigr)\bigr]. 
\end{align*}
Let $S^+\subseteq\mathcal{X}^+$ be a set of positive instances satisfying $\mathbb{P}(\mathcal{C}^+(S^+))>0$,
and assume $f(x)<1$ for all $x\in S^+$.
Define a new classifier $f'$ by
\[
f'(x)=
\begin{cases}
1, & x\in S^+,\\
f(x), & \text{otherwise}.
\end{cases}
\]
Then the expected loss strictly decreases, i.e.\ $\mathcal{L}(f')<\mathcal{L}(f)$.
\end{lemma}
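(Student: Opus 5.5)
Since $S^+$ consists of positive instances, every bag containing an element of $S^+$ is positive. So changing $f$ to $f'$ only affects the predictions $\hat p(B)$ for bags $B\in\mathcal{C}^+(S^+)$. The plan is to compare the losses bag by bag, splitting $\mathcal{L}(f)-\mathcal{L}(f')$ into a contribution from $\mathcal{C}^+(S^+)$ and one from its complement.

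\textbf{Bags outside the cover set.} For $B\notin\mathcal{C}(S^+)$, every instance of $B$ lies outside $S^+$. Hence $f'(x_i)=f(x_i)$ for all $i$, so $\hat p'(B)=\hat p(B)$ and the losses coincide. Since $S^+\subseteq\mathcal{X}^+$, we also have $\mathcal{C}(S^+)=\mathcal{C}^+(S^+)$, so no negative bag is affected.

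\textbf{Bags in the cover set.} For $B\in\mathcal{C}^+(S^+)$ the label is $y(B)=1$, so the loss is $-\log\hat p(B)$, which is strictly decreasing in $\hat p$. Passing from $f$ to $f'$ raises to $1$ every coordinate $f(x_i)$ with $x_i\in S^+$ (at least one such coordinate exists), and leaves the others fixed. I would do these raises one coordinate at a time. Each raise changes one coordinate from a value $<1$ to $1$ with the remaining coordinates fixed, so Corollary~\ref{cor:maximum} applies at each stage and gives a strict increase of $\hat p$. Repeated instances of $S^+$ in the bag are handled the same way, one after another. Chaining the stages gives $\hat p'(B)>\hat p(B)$, hence $L_{\mathrm{CE}}(\hat p'(B),1)<L_{\mathrm{CE}}(\hat p(B),1)$ pointwise on $\mathcal{C}^+(S^+)$.

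\textbf{Integrating and the main obstacle.} The main obstacle is passing from this pointwise strict inequality to a strict inequality of expectations. It needs two things. First, the losses must be finite where it matters. If $\hat p(B)=0$ on a positive-probability set, then $\mathcal{L}(f)=\infty$, and I will treat that as a trivial case (a finite value is strictly smaller). Otherwise the difference of losses is a nonnegative measurable function, strictly positive on $\mathcal{C}^+(S^+)$. Second, we need $\mathbb{P}(\mathcal{C}^+(S^+))>0$, which holds by hypothesis. A nonnegative function that is strictly positive on a set of positive measure has a strictly positive integral, so $\mathcal{L}(f)-\mathcal{L}(f')>0$.

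If a quantitative version is wanted, Lemma~\ref{lem:increment} gives the gap $\delta$, but only with $\varepsilon=\min(1-f(x))$ bounded away from $0$. That is not assumed here, so the argument will rely on the qualitative positivity and not on a uniform gap. Measurability of $\mathcal{C}^+(S^+)$ and of the loss I take for granted, as the paper implicitly does.
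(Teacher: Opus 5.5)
Your proposal is correct and follows the paper's proof essentially verbatim: bags outside $\mathcal{C}^+(S^+)$ are unchanged, Corollary~\ref{cor:maximum} applied one coordinate at a time gives $\hat p'(B)>\hat p(B)$ on $\mathcal{C}^+(S^+)$, and a nonnegative integrand that is strictly positive on a set of positive probability yields the strict inequality. One caveat concerns your remark on infinite loss: if $\mathcal{L}(f)=\infty$, then $\mathcal{L}(f')$ can also be infinite (e.g.\ a positive-probability set of positive bags outside $\mathcal{C}^+(S^+)$ with all predictions $0$ keeps its infinite loss), so that case is not trivial but one where the strict inequality actually fails; like the paper, you should assume $\mathcal{L}(f)<\infty$, which then gives $\mathcal{L}(f')\le\mathcal{L}(f)<\infty$ and makes the difference a well-defined expectation.
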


\begin{proof}
By Corollary~\ref{cor:maximum}, replacing any $p_i<1$ by $1$ strictly increases $\hat{p}$.  
Applying this repeatedly to all $x\in S^+$ contained in a bag $B\in\mathcal{C}^+(S^+)$ gives $\hat{p}'(B)>\hat{p}(B)$. Hence $-\log\hat{p}'(B)<-\log\hat{p}(B)$ on $\mathcal{C}^+(S^+)$.  
All other bags are unaffected because $f'$ differs from $f$ only on $S^+$, which appears only in bags of $\mathcal{C}^+(S^+)$. Therefore
\[
\mathcal{L}(f)-\mathcal{L}(f') = \mathbb{E}\bigl[ \mathbf{1}_{\mathcal{C}^+(S^+)}\,(\log\hat{p}'-\log\hat{p}) \bigr] > 0,
\]
since the integrand is positive on a set of positive probability. Hence $\mathcal{L}(f')<\mathcal{L}(f)$.
\end{proof}

\begin{lemma}[Loss decrease for correcting false positives]\label{lem:mil-bias-cont-step2}
Let $\mathcal{X},\mathcal{B},\mathbb{P},\hat{p},\mathcal{L}$ be as defined in Lemma~\ref{lem:mil-bias-cont-step1}.  
Suppose that Assumption~\ref{ass:M-restated} (the $M$-assumption) holds.  
Let $N$ be the maximum bag size,  
$u = W_0\!\bigl(\frac{N-1}{e}\bigr)$, and $M$ the constant from that assumption.  
Let $S^-\subseteq\mathcal{X}^-$ be a set of negative instances with
$\mathbb{P}(\mathcal{C}(S^-))>0$, and assume that classifier $f$ satisfies
\begin{align*}
   f(x)&=1 \;\; &\forall x\in\mathcal{X}^+,\\
f(x)&\in(0,1] \;\; &\forall x\in S^-,\\
f(x)&=0 \;\; &\text{for all other } x\in\mathcal{X}^-.
\end{align*}

Define $f'$ by
\[
f'(x)=
\begin{cases}
0, & x\in S^-,\\[2pt]
f(x), & \text{otherwise}.
\end{cases}
\]
Then there exists a temperature $\alpha>0$ such that $\mathcal{L}(f')<\mathcal{L}(f)$.
\end{lemma}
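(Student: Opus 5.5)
The idea is to split the expected loss difference over the negative and positive parts of the cover set $\mathcal{C}(S^-)$. Bags outside $\mathcal{C}(S^-)$ are untouched by the change from $f$ to $f'$, so
\[
\mathcal{L}(f)-\mathcal{L}(f') = \mathbb{E}\bigl[\mathbf{1}_{\mathcal{C}^-(S^-)}(\ell_f-\ell_{f'})\bigr] + \mathbb{E}\bigl[\mathbf{1}_{\mathcal{C}^+(S^-)}(\ell_f-\ell_{f'})\bigr],
\]
where $\ell_f = L_{\mathrm{CE}}(\hat{p}(B),y(B))$.

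\textbf{Negative part.} On a negative bag $B\in\mathcal{C}^-(S^-)$, every instance is negative. Under $f'$ all its predictions are $0$, so $\hat p'(B)=0$ and $\ell_{f'}=0$. Under $f$ the bag contains some $x\in S^-$ with $f(x)>0$, hence $\hat p(B)>0$ and $\ell_f=-\log(1-\hat p(B))>0$. The gain is therefore strictly positive on this set.

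\textbf{Positive part.} On $B\in\mathcal{C}^+(S^-)$ the bag contains at least one positive instance, which has prediction $1$ under both $f$ and $f'$. Setting the $S^-$ entries to zero can only lower $\hat p$, by Lemma~\ref{lem:derivative}: the worst case is driving entries toward $0$. Lemma~\ref{lem:minimum-advanced} with $\varepsilon=1$ and $n\le N$ gives the uniform lower bound
\[
\hat p'(B)\ge \hat p_{\min}(1,\alpha,N) = 1-\alpha u \qquad \text{for } \alpha<1/(u+1),
\]
taking $u=W_0((N-1)/e)$ at $n=N$. This uses that $\hat p_{\min}$ is decreasing in $n$, which follows from monotonicity of $W_0$; alternatively one can simply bound $n\le N$ by padding. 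The loss increase on each positive bag is then at most $-\log(1-\alpha u)$, which tends to $0$ as $\alpha\to0$.

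\textbf{Comparing the two parts.} A gain on negative bags that is merely positive is not enough; it must dominate the loss on positive bags uniformly in $\alpha$. This is the main obstacle. The gain on a negative bag depends on how small $f$ is on $S^-$, and that could be arbitrarily close to $0$. The first thing I would try is to stratify $S^-$ into level sets $S_k=\{x\in S^-: f(x)\in(2^{-k-1},2^{-k}]\}$, or simply to compare per bag. A cleaner route is to compare bag by bag via the derivative. For a positive bag, the decrease of $\hat p$ when an entry $p_i$ is moved to $0$ is controlled by $w_i\cdot p_i$ times something. Since a positive bag contains an entry equal to $1$, the weights satisfy $w_i\le e^{(p_i-1)/\alpha}$, which is exponentially small in $1/\alpha$. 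So the loss increase on a positive bag is at most $C\,\max_{x\in B\cap S^-}f(x)\,e^{-(1-\max f)/\alpha}$, and for small $\alpha$ this is tiny relative to $\max f$ whenever $\max f\le 1-\eta$.

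On negative bags, by contrast, $\hat p(B)\ge \hat p_{\min}(\max f,\alpha,N)$ by Lemma~\ref{lem:minimum-advanced}. For $\varepsilon\le\alpha(u+1)$ this is of order $\varepsilon/N$, and otherwise it is at least $\varepsilon-\alpha u$, so the gain is at least of order $\varepsilon/N$. Using the $M$-Assumption on each level set $S_k$, the positive-bag mass is at most $M$ times the negative-bag mass. It then suffices to choose $\alpha$ so that $M\cdot(\text{per-bag positive loss})<(\text{per-bag negative gain})$ at every level $\varepsilon$.

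The delicate cases are predictions near $1$ on $S^-$, where the exponential factor fails. There I would instead use the bound $-\log(1-\alpha u)$ against the gain $-\log(1-\hat p)\ge-\log(1-(1-\alpha u))$, which is large, and take $\alpha$ small enough that $M\log\frac{1}{1-\alpha u}<\log\frac{1}{\alpha u}$. Summing over the levels then gives $\mathcal{L}(f')<\mathcal{L}(f)$.
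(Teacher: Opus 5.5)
Your per-bag estimates are sound once made rigorous. Integrate the derivative of Lemma~\ref{lem:derivative} along the zeroing path, using $\hat p\ge 1-\alpha u$ throughout (for $\alpha<1/(u+1)$) and $w_i\le e^{(p_i-1)/\alpha}$. A positive bag whose largest false positive is $\varepsilon$ then loses at most $\frac{(N-1)(u+1)}{1-\alpha u}\,\varepsilon\,e^{-(1-\varepsilon)/\alpha}$, and a negative bag with top value above $\varepsilon$ gains at least $\hat p_{\min}(\varepsilon,\alpha,N)\ge\varepsilon/N$.

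The gap is in how you combine them. With the disjoint dyadic sets $S_k$, the $M$-assumption bounds the total positive-bag loss by $M\sum_k \ell_k\,\mathbb{P}(\mathcal{C}^-(S_k))$, where $\ell_k,g_k$ are your per-bag loss and gain bounds at level $k$. The condition ``$M\ell_k<g_k$ at every level'' would finish the proof only if $\sum_k g_k\,\mathbb{P}(\mathcal{C}^-(S_k))$ were a lower bound on the actual negative-bag gain. It is not. The sets $\mathcal{C}^-(S_k)$ overlap, so a negative bag with false positives at several levels is charged once per level, while its gain is only guaranteed to be $g_{k_1}$ for its top level $k_1$.

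You need an extra step. One option is to apply the assumption to the nested sets $T_k=\{x\in S^-: f(x)>2^{-k-1}\}$ and sum by parts; then $M\ell_k<g_k$ does suffice, since $\ell_k$ is nonincreasing. Another is to keep the disjoint levels and use the geometric decay of $\ell_k$ to bound each negative bag's total charge by $2\ell_{k_1}$.

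Your top-level bound is also misstated. The gain $\log\frac{1}{\alpha u}$ requires a negative-bag prediction exactly equal to $1$. For a top value $\varepsilon<1$, Lemma~\ref{lem:minimum-advanced} gives only $\log\frac{1}{1-\varepsilon+\alpha u}$. That still suffices, because the positive-bag loss there is at most $\log\frac{1}{1-\alpha u}\to 0$, but the condition you wrote is not the right one.

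More fundamentally, your opening claim that zeroing the $S^-$ entries of a positive bag can only lower $\hat p$ is false, and its failure is exactly what the paper exploits. A positive bag keeps an entry equal to $1$, so $\hat p\ge 1-\alpha u$ along the whole path. By Lemma~\ref{lem:derivative}, every entry $p_x\le\varepsilon_1:=1-\alpha(u+1)$ then has $\alpha+p_x-\hat p\le 0$, so lowering it to $0$ weakly \emph{raises} $\hat p$. Hence positive bags whose false positives all lie below $\varepsilon_1$ lose nothing, and negative bags of that kind only add nonnegative gain.

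The $M$-assumption is therefore needed only once, for $S_1=\{x\in S^-:f(x)>\varepsilon_1\}$. There each negative bag gains at least $\log\frac{1}{\alpha(2u+1)}$ and each positive bag loses at most $\log\frac{1}{1-\alpha u}$. Choosing $\alpha$ with $-\log(\alpha(2u+1))+M\log(1-\alpha u)>0$ closes the argument. Strictness when $\mathbb{P}(\mathcal{C}(S_1))=0$ comes from the $M$-assumption on $S_2=S^-\setminus S_1$.

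This removes both the exponential-weight estimate and the stratification, which is exactly the part of your plan that is incomplete. Your route can be repaired, but it does substantially more work for the same conclusion.
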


\begin{proof}
Choose $\alpha>0$ small enough so that
\begin{equation}\label{eq:alpha-cond}
\begin{aligned}
     \frac{1}{2(u+1)} &> \alpha,\\
 -\log\!\bigl(\alpha(2u+1)\bigr) + M\log(1-\alpha u) &> 0.
\end{aligned}
\end{equation}
Such $\alpha$ exists because as $\alpha\to0^+$ the second term tends to $0$ while the first diverges to $+\infty$.
Set $\varepsilon_1 = 1-\alpha u-\alpha$ and partition $S^-$ according to the current predictions:
\begin{align*}
  &S_1 = \{x\in S^- : f(x)>\varepsilon_1\},\\
&S_2 = S^-\setminus S_1 \;\;(\text{so } f(x)\le\varepsilon_1\text{ on }S_2).  
\end{align*}

Only bags containing instances from $S^-$ change when $f$ is replaced by $f'$. We split those bags into four families:
\[
\begin{aligned}
A_1 &= \mathcal{C}^+(S_1), & A_2 &= \mathcal{C}^-(S_1),\\
A_3 &= \mathcal{C}^+(S_2)\setminus\mathcal{C}^+(S_1), & 
A_4 &= \mathcal{C}^-(S_2)\setminus\mathcal{C}^-(S_1).
\end{aligned}
\]
Denote by $\hat{p}(B),\hat{p}'(B)$ the softmax weighted sum under $f,f'$. The difference of expected losses decomposes as
\[
\mathcal{L}(f)-\mathcal{L}(f') = \Delta_1+\Delta_2+\Delta_3+\Delta_4,
\]
where
\begin{align*}
\Delta_1 &= \mathbb{E}\bigl[\mathbf{1}_{A_1}(\log\hat{p}'-\log\hat{p})\bigr], \\
\Delta_2 &= \mathbb{E}\bigl[\mathbf{1}_{A_2}\bigl(\log(1-\hat{p}')-\log(1-\hat{p})\bigr)\bigr],\\
\Delta_3 &= \mathbb{E}\bigl[\mathbf{1}_{A_3}(\log\hat{p}'-\log\hat{p})\bigr], \\
\Delta_4 &= \mathbb{E}\bigl[\mathbf{1}_{A_4}\bigl(\log(1-\hat{p}')-\log(1-\hat{p})\bigr)\bigr].
\end{align*}

\noindent\textbf{Bounding $\Delta_1+\Delta_2$.}
If $\mathbb{P}(A_1\cup A_2)=0$, then $\Delta_1=\Delta_2=0$. Otherwise the $M$-assumption gives $\mathbb{P}(A_2)>0$.  
For $B\in A_2$, by Lemma~\ref{lem:minimum-advanced} and $n\le N$,
\[
\hat{p}(B) \ge \varepsilon_1 - \alpha u \ge 1 - 2\alpha u - \alpha,
\]
so $1-\hat{p}(B)\le \alpha(2u+1)$. Since $f'$ assigns $0$ to all instances of $B$, $\hat{p}'(B)=0$, yielding
\begin{align*}
    & \log(1-\hat{p}'(B))-\log(1-\hat{p}(B)) \\
    &= -\log(1-\hat{p}(B)) \\
    &\ge -\log\!\bigl(\alpha(2u+1)\bigr).
\end{align*}

Taking expectations gives 
\[
\Delta_2 \ge \mathbb{P}(A_2)\bigl(-\log(\alpha(2u+1))\bigr).
\]

For $B\in A_1$, Lemma~\ref{lem:minimum-advanced} with $\varepsilon=1$ implies $\hat{p}(B),\hat{p}'(B)\ge 1-\alpha u$, hence
\[
\log\hat{p}'(B)-\log\hat{p}(B) \ge \log(1-\alpha u) 
\]
(note that $\log(1-\alpha u)<0$).
Thus 
\[
\Delta_1 \ge \mathbb{P}(A_1)\log(1-\alpha u).
\]

Combining the two bounds and using the $M$-assumption $\mathbb{P}(A_1)/\mathbb{P}(A_2)\le M$ together with $\log(1-\alpha u)<0$, we obtain
\begin{align*}
& \Delta_1+\Delta_2 \\
\ge &\mathbb{P}(A_2)\Bigl[-\log\!\bigl(\alpha(2u+1)\bigr) + \tfrac{\mathbb{P}(A_1)}{\mathbb{P}(A_2)}\log(1-\alpha u)\Bigr] \\
\ge &\mathbb{P}(A_2)\Bigl[-\log\!\bigl(\alpha(2u+1)\bigr) + M\log(1-\alpha u)\Bigr] \\
> &0,
\end{align*}
where the final positivity is exactly the second condition in~\eqref{eq:alpha-cond}.

\smallskip
\noindent\textbf{Positivity of $\Delta_3$.}
For $B\in A_3$, the bag contains a positive instance (predicted as $1$) and at least one instance from $S_2$ with $p_x\le\varepsilon_1$.  
Lemma~\ref{lem:minimum-advanced} again gives $\hat{p}(B)\ge 1-\alpha u$. Using Lemma~\ref{lem:derivative}, for any such $p_x$,
\[
\frac{\partial\hat{p}}{\partial p_x} = \frac{w}{\alpha}\bigl(\alpha+p_x-\hat{p}\bigr)
\le \frac{w}{\alpha}\bigl(\alpha+\varepsilon_1-(1-\alpha u)\bigr)=0.
\]
Hence lowering $p_x$ to $0$ does not decrease $\hat{p}$. Repeating for all $S_2$ instances in $B$ yields $\hat{p}'(B)\ge\hat{p}(B)$, so $\Delta_3\ge0$.

\smallskip
\noindent\textbf{Positivity of $\Delta_4$.}
For $B\in A_4$, $f'$ sets all predictions to $0$, so $\hat{p}'(B)=0$. Under $f$, at least one $S_2$ instance has a positive prediction, therefore $\hat{p}(B)>0$. Consequently,
\begin{align*}
    \log(1-\hat{p}'(B))-\log(1-\hat{p}(B)) 
    &= -\log(1-\hat{p}(B)) \\
    &> 0,
\end{align*}

which gives $\Delta_4\ge0$, and $\Delta_4>0$ whenever $\mathbb{P}(A_4)>0$.

\smallskip
\noindent\textbf{Combining the four terms.}
If $\mathbb{P}(A_1\cup A_2)>0$, we already have $\Delta_1+\Delta_2>0$ while $\Delta_3,\Delta_4\ge0$, so the sum is strictly positive.  
If $\mathbb{P}(A_1\cup A_2)=0$, then all bags affected by the change belong to $A_3\cup A_4$. The non‑triviality of $S^-$ implies $\mathbb{P}(A_3\cup A_4)=\mathbb{P}(\mathcal{C}(S^-))>0$. Applying the $M$-assumption to $S_2\subseteq\mathcal{X}^-$ yields $\mathbb{P}(A_3)/\mathbb{P}(A_4)\le M$. If $\mathbb{P}(A_4)=0$, this ratio would be infinite, a contradiction. Hence $\mathbb{P}(A_4)>0$ and consequently $\Delta_4>0$, making the total difference positive.

In either case $\mathcal{L}(f)-\mathcal{L}(f')>0$, i.e.\ $\mathcal{L}(f')<\mathcal{L}(f)$.
\end{proof}

\begin{theorem}[Unique global minimizer (Restated)]\label{thm:mil-minimize-restated}
Let $\mathcal{X},\mathcal{B},\mathbb{P},\hat{p},\mathcal{L}$ be as defined in Lemma~\ref{lem:mil-bias-cont-step1}.  
Suppose that Assumption~\ref{ass:M-restated} (the $M$-assumption) holds.  
Let $N$ be the maximum bag size,  
$u = W_0\!\bigl(\frac{N-1}{e}\bigr)$, and $M$ the constant from that assumption.  
Then there exists a sufficiently small temperature $\alpha > 0$ such that, up to a set of instances $S$ whose bag cover set has probability zero, $\mathcal{L}$ is uniquely minimized by the true label function:
\[
f^*(x) = y(x) \in \{0,1\}, \qquad \forall x \notin S.
\]
Equivalently, any global minimiser $f^*$ must satisfy $f^*(x)=y(x)$ for all $x\in\mathcal{X}$, except possibly on a set $S\subseteq\mathcal{X}$ with $\mathbb{P}(\mathcal{C}(S))=0$.
\end{theorem}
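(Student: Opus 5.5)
We argue by comparison: starting from an arbitrary classifier $f$ that differs from $y$ on a set whose bag cover set has positive probability, we build $f^*=y$ in two stages and show that each stage does not increase the loss, with at least one stage decreasing it strictly. Fix $\alpha$ satisfying \eqref{eq:alpha-cond} of Lemma~\ref{lem:mil-bias-cont-step2}; this choice depends only on $N$ (through $u$) and $M$, so a single temperature works uniformly for every $f$.

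\textbf{Stage one: positive instances.} Let $S^+=\{x\in\mathcal{X}^+: f(x)<1\}$. If $\mathbb{P}(\mathcal{C}^+(S^+))>0$, Lemma~\ref{lem:mil-bias-cont-step1} gives $f_1$, equal to $1$ on $S^+$, with $\mathcal{L}(f_1)<\mathcal{L}(f)$. Note that $\mathcal{C}(S^+)=\mathcal{C}^+(S^+)$, since every bag containing a positive instance is positive. So if this probability is zero, $S^+$ is already a null-cover set and setting $f_1=1$ there changes nothing in the loss. In either case $f_1=1$ on $\mathcal{X}^+$ and $\mathcal{L}(f_1)\le\mathcal{L}(f)$.

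\textbf{Stage two: negative instances.} Let $S^-=\{x\in\mathcal{X}^-: f_1(x)>0\}$. First we pass to an intermediate $\tilde f$ that agrees with $f_1$ on $S^-\cup\mathcal{X}^+$ and is $0$ on the rest of $\mathcal{X}^-$. Since $f_1$ is already $0$ there, $\tilde f=f_1$ and $\tilde f$ satisfies the hypotheses of Lemma~\ref{lem:mil-bias-cont-step2} exactly. If $\mathbb{P}(\mathcal{C}(S^-))>0$, that lemma yields $f^*=y$ with $\mathcal{L}(f^*)<\mathcal{L}(f_1)$. Otherwise the modification affects only a null set of bags, and the losses are equal.

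\textbf{Conclusion.} Combining the stages: if $f$ disagrees with $y$ on a set $S$ with $\mathbb{P}(\mathcal{C}(S))>0$, then $S=S^+_0\cup S^-_0$ with $S^+_0\subseteq\mathcal{X}^+$ and $S^-_0\subseteq\mathcal{X}^-$, and at least one part has a non-null cover. If the positive part has non-null cover, stage one is strict. If only the negative part does, then $f_1$ agrees with $f$ up to a null-cover modification and still mislabels that negative part with $f_1>0$, so stage two is strict. Hence $\mathcal{L}(y)<\mathcal{L}(f)$ for every such $f$. Conversely, any $f$ differing from $y$ only on a null-cover set has the same loss as $y$, because the loss depends only on predictions inside bags of positive probability. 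Therefore $y$ is the unique minimizer up to null-cover sets, which proves the restated form.

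\textbf{Main obstacle.} The main subtlety is measure-theoretic bookkeeping. Cover sets are not additive, and "up to a null-cover set" must be preserved under the modifications. We also need that a union of two null-cover sets is null-cover, which follows from $\mathcal{C}(S_1\cup S_2)=\mathcal{C}(S_1)\cup\mathcal{C}(S_2)$. A second point to handle is that $\mathcal{L}$ could be $+\infty$ for $f$. In that case the strict inequality is trivial, since $\mathcal{L}(y)<\infty$ when the classes are correctly predicted with $\hat p\in\{0\}\cup[1-\alpha u,1]$.
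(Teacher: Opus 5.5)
Your proof is correct and follows the paper's argument. You fix the $\alpha$ from Lemma~\ref{lem:mil-bias-cont-step2} (which depends only on $N$ and $M$), correct false negatives with Lemma~\ref{lem:mil-bias-cont-step1}, and then correct false positives with Lemma~\ref{lem:mil-bias-cont-step2}; the only differences are that you compare an arbitrary $f$ directly with $y$ instead of arguing by contradiction from an assumed global minimiser, which also shows that $y$ actually attains the minimum, and that you treat the case $\mathcal{L}(f)=+\infty$, which the paper leaves implicit.
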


\begin{proof}
Choose the temperature $\alpha>0$ such that
\begin{equation}\label{eq:alpha-cond}
\begin{split}
 &\alpha < \frac{1}{2(u+1)} ,\\
&-\log\!\bigl(\alpha(2u+1)\bigr) + M\log(1-\alpha u) > 0.
\end{split}
\end{equation}
Such $\alpha$ exists because the left‑hand side of the second inequality tends to $+\infty$ as $\alpha\to0^+$, while the first is an explicit positive upper bound.
We fix this $\alpha$ for the remainder of the proof; note that it depends only on $N$ and $M$.

Let $f$ be any global minimiser of $\mathcal{L}$. We show that $f$ must coincide with the true labels everywhere except on a set with zero bag‑cover probability.

\smallskip
\noindent\textbf{Step 1.  Correction of false negatives.}
Set $S^+ := \{ x\in\mathcal{X}^+ \mid f(x) < 1 \}$.
If $\mathbb{P}(\mathcal{C}(S^+)) > 0$, then Lemma~\ref{lem:mil-bias-cont-step1} (which holds for any $\alpha>0$, in particular for our chosen $\alpha$) applied to $S^+$ produces
$f_1(x) = 1$ for $x\in S^+$, $f_1(x)=f(x)$ otherwise, with $\mathcal{L}(f_1) < \mathcal{L}(f)$, contradicting optimality.
Hence $\mathbb{P}(\mathcal{C}(S^+)) = 0$.

Because the bag cover of $S^+$ has probability zero, modifying $f$ on $S^+$ does not change the expected loss.
Define
\[
\tilde{f}(x) := 
\begin{cases}
1, & x\in\mathcal{X}^+,\\
f(x), & x\in\mathcal{X}^-.
\end{cases}
\]
Then $\mathcal{L}(\tilde{f}) = \mathcal{L}(f)$, so $\tilde{f}$ is also a global minimiser.

\smallskip
\noindent\textbf{Step 2.  Correction of false positives.}
Consider the negative instances where $\tilde{f}$ is strictly positive:
$S^- := \{ x\in\mathcal{X}^- \mid \tilde{f}(x) > 0 \}$.
If $\mathbb{P}(\mathcal{C}(S^-)) = 0$, the claim already holds with $S = S^+\cup S^-$.
Assume therefore $\mathbb{P}(\mathcal{C}(S^-)) > 0$.
By construction, $\tilde{f}$ satisfies exactly the hypotheses of Lemma~\ref{lem:mil-bias-cont-step2}:
$\tilde{f}(x)=1$ on $\mathcal{X}^+$, $\tilde{f}(x)\in(0,1]$ on $S^-$, and $\tilde{f}(x)=0$ on $\mathcal{X}^-\setminus S^-$.
Moreover, our chosen $\alpha$ fulfills the two inequalities that are shown (in the proof of Lemma~\ref{lem:mil-bias-cont-step2}) to be sufficient for the strict decrease of the loss when the false positives in $S^-$ are set to $0$.
Consequently, the same construction applied to $\tilde{f}$ and $S^-$ with temperature $\alpha$ yields a classifier
\[
\tilde{f}'(x) = 
\begin{cases}
0, & x\in S^-,\\
\tilde{f}(x), & \text{otherwise},
\end{cases}
\]
satisfying $\mathcal{L}(\tilde{f}') < \mathcal{L}(\tilde{f})$.
This contradicts the global optimality of $\tilde{f}$ (and hence of $f$).
Thus $\mathbb{P}(\mathcal{C}(S^-)) = 0$ as well.

\smallskip
\noindent\textbf{Conclusion.}
Let $S = S^+ \cup S^-$.  We have $\mathbb{P}(\mathcal{C}(S)) = 0$.
For $x\notin S$, if $x\in\mathcal{X}^+$ then $f(x)=1$ (otherwise $x$ would belong to $S^+$), and if $x\in\mathcal{X}^-$ then $f(x)=0$ (otherwise $x\in S^-$).
Thus $f(x)=y(x)$ for all $x\notin S$, which is exactly the statement of the theorem.
\end{proof}

\subsection{Proof of Theorem~\ref{thm:mil-converge}}

\begin{lemma}[Quantitative loss decrease for correcting false negatives]\label{lem:fn-quant}
    Let $\alpha>0$ and $N\in\mathbb{N}$ ($N\ge 1$) be the temperature and maximum bag size.
    Let $\mathcal{X}$, $\mathcal{B}$, $\mathbb{P}$, $\hat{p}(\cdot)$ and $\mathcal{L}$ be as in Lemma~\ref{lem:mil-bias-cont-step1}.
    Let $f$ be a classifier. For a given $\varepsilon\in(0,1]$, let $S^+\subseteq\mathcal{X}^+$ be a set of positive instances such that
    $f(x)\le 1-\varepsilon$ for all $x\in S^+$, and $\mathbb{P}\bigl(\mathcal{C}^+(S^+)\bigr)\ge \delta>0$.
    Define a new classifier $f'$ by
    \[
        f'(x)=
        \begin{cases}
            1, & x\in S^+,\\
            f(x), & \text{otherwise}.
        \end{cases}
    \]
    Then the expected loss decreases by at least a constant times $\delta$:
    \[
        \mathcal{L}(f)-\mathcal{L}(f') \;\ge\; \delta\cdot \Lambda_{\mathrm{FN}}(\varepsilon,\alpha,N) \;>\; 0,
    \]
    where
    \begin{align*}
        &\Lambda_{\mathrm{FN}}(\varepsilon,\alpha,N)=\log\\
        &\left(1+
        \min\!\left\{
            \frac{1}{1+(N-1)e^{1/\alpha}},\;
            \frac{\varepsilon}{1+(N-1)e^{\varepsilon/\alpha}}
        \right\}\right).
    \end{align*}
\end{lemma}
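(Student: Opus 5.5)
The argument is a quantitative version of Lemma~\ref{lem:mil-bias-cont-step1}, with Corollary~\ref{cor:maximum} replaced by the uniform increment bound of Lemma~\ref{lem:increment}. Since $f'$ differs from $f$ only on $S^+\subseteq\mathcal{X}^+$, only bags containing an instance of $S^+$ are affected, and every such bag is positive. Hence the loss difference is supported on $\mathcal{C}^+(S^+)$:
\[
\mathcal{L}(f)-\mathcal{L}(f')=\mathbb{E}\bigl[\mathbf{1}_{\mathcal{C}^+(S^+)}\bigl(\log\hat{p}'(B)-\log\hat{p}(B)\bigr)\bigr].
\]
It therefore suffices to show that $\log\hat{p}'(B)-\log\hat{p}(B)\ge\Lambda_{\mathrm{FN}}(\varepsilon,\alpha,N)$ for every $B\in\mathcal{C}^+(S^+)$. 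Integrating then gives $\mathbb{P}(\mathcal{C}^+(S^+))\,\Lambda_{\mathrm{FN}}\ge\delta\,\Lambda_{\mathrm{FN}}$. Positivity of $\Lambda_{\mathrm{FN}}$ follows from positivity of $\delta(\varepsilon,\alpha,N)$ in Lemma~\ref{lem:increment}.

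Fix such a bag $B$. Reorder its coordinates so that one instance from $S^+$ comes first; its prediction $p_1$ satisfies $p_1\le 1-\varepsilon$. Pass from $f$ to $f'$ in two stages.

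First, raise the other $S^+$ coordinates of $B$ to $1$ one at a time. By Corollary~\ref{cor:maximum}, each such step does not decrease $\hat{p}$, so after this stage the intermediate value satisfies $\hat{p}(B)\le\hat{p}^{\mathrm{mid}}$.

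Second, raise $p_1$ from a value at most $1-\varepsilon$ to $1$, keeping the remaining coordinates fixed. Lemma~\ref{lem:increment} applies here, since $n\le N$, $p_1\le 1-\varepsilon$, and it places no restriction on the other coordinates. It gives $\hat{p}'(B)-\hat{p}^{\mathrm{mid}}\ge\delta(\varepsilon,\alpha,N)$. Combining the two stages yields
\[
\hat{p}'(B)\ge\hat{p}(B)+\delta(\varepsilon,\alpha,N).
\]

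To convert this additive gap into a logarithmic one, use $\hat{p}(B)\le 1$:
\[
\log\hat{p}'(B)-\log\hat{p}(B)=\log\Bigl(1+\frac{\hat{p}'(B)-\hat{p}(B)}{\hat{p}(B)}\Bigr)\ge\log\bigl(1+\delta(\varepsilon,\alpha,N)\bigr)=\Lambda_{\mathrm{FN}}.
\]
One subtlety arises if $\hat{p}(B)=0$. In that case $-\log\hat{p}(B)=+\infty$, so the bag's loss difference is infinite and the inequality holds trivially. Alternatively, one notes that $\hat{p}>0$ whenever some coordinate is positive.

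The step needing most care is the ordering in the two-stage argument. The uniform bound of Lemma~\ref{lem:increment} must be applied to a coordinate that still satisfies $p_1\le 1-\varepsilon$, which is why that coordinate is raised last. The earlier monotone steps must not be required to be strict, which is handled by Corollary~\ref{cor:maximum}.
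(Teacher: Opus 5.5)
Your proposal is correct and matches the paper's proof: restrict the loss difference to $\mathcal{C}^+(S^+)$, get a per-bag additive gain of at least $\delta(\varepsilon,\alpha,N)$ from Lemma~\ref{lem:increment}, turn it into $\log(1+\delta(\varepsilon,\alpha,N))$ using $\hat{p}(B)\le 1$, and integrate. Your two-stage ordering is harmless but unnecessary: the paper applies Lemma~\ref{lem:increment} to every replacement, which works in any order, because raising one coordinate leaves the other $S^+$ coordinates at their original values $\le 1-\varepsilon$ and the lemma places no condition on the remaining coordinates.
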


\begin{proof}
    Denote by $\delta_0=\delta_0(\varepsilon,\alpha,N)$ the increment lower bound from Lemma~\ref{lem:increment}, i.e.
    \begin{align*}
        \delta_0 &= \min\!\left\{
            \frac{1}{1+(N-1)e^{1/\alpha}},\;
            \frac{\varepsilon}{1+(N-1)e^{\varepsilon/\alpha}}
        \right\}\\
        &>0.
    \end{align*}
    Let $B=(x_1,\dots,x_n)\in\mathcal{C}^+(S^+)$. 
    By definition $B$ is a positive bag ($y(B)=1$) and contains at least one instance $x_i\in S^+$ with $f(x_i)\le 1-\varepsilon$.
    Write $\bm{p}=(f(x_1),\dots,f(x_n))$ and let $\bm{p}'$ be the vector obtained after replacing every such $f(x_i)$ by $1$.
    Repeated application of Lemma~\ref{lem:increment}
    shows that
    \[
        \hat{p}'(B) - \hat{p}(B) \;\ge\; \delta_0,
    \]
    because each single replacement increases the softmax weighted sum by at least $\delta_0$, and the sum is non‑decreasing under further replacements.
    
    Since the bag is positive, its contribution to the loss is $-\log\hat{p}(B)$ under $f$ and $-\log\hat{p}'(B)$ under $f'$.
    The decrease in loss for this bag is
    \begin{align*}
        \log\frac{\hat{p}'(B)}{\hat{p}(B)}
        &= \log\!\left(1+\frac{\hat{p}'(B)-\hat{p}(B)}{\hat{p}(B)}\right) \\
        &\ge \log\!\left(1+\delta_0\right),
    \end{align*}
    where we used $\hat{p}(B)\le 1$ and the monotonicity of the logarithm.
    
    Bags outside $\mathcal{C}^+(S^+)$ are left unchanged by $f'$, hence their loss contribution is identical.
    Taking expectations,
    \begin{align*}
        \mathcal{L}(f)-\mathcal{L}(f')
        &= \mathbb{E}\Bigl[\mathbf{1}_{\mathcal{C}^+(S^+)}(B)\,
            \bigl(\log\frac{\hat{p}'(B)}{\hat{p}(B)}\bigr)\Bigr]\\
        &\ge \mathbb{P}\bigl(\mathcal{C}^+(S^+)\bigr)\cdot \log(1+\delta_0).
    \end{align*}

    By hypothesis $\mathbb{P}(\mathcal{C}^+(S^+))\ge\delta$, so
    \[
        \mathcal{L}(f)-\mathcal{L}(f') \;\ge\; \delta\cdot\log(1+\delta_0)= \delta\cdot\Lambda_{\mathrm{FN}}(\varepsilon,\alpha,N).
    \]
    The strict positivity follows from $\delta_0>0$ and $\delta>0$.
\end{proof}

\begin{lemma}[Quantitative loss decrease for correcting false positives]\label{lem:fp-quant}
    Let $\mathcal{X},\mathcal{B},\mathbb{P},\hat{p},\mathcal{L}$ be as in Lemma~\ref{lem:mil-bias-cont-step1}.  
    Suppose Assumption~\ref{ass:M-restated} (the $M$-assumption) holds.  
    Let $N$ be the maximum bag size, $u = W_0\bigl(\frac{N-1}{e}\bigr)$, and $M$ the constant from that assumption.  
    Let $S^-\subseteq\mathcal{X}^-$ be a set of negative instances with
    $\mathbb{P}(\mathcal{C}(S^-))>0$, and assume that the classifier $f$ satisfies
    \begin{align*}
        f(x) &= 1 && \forall x\in\mathcal{X}^+,\\
        f(x) &\in (0,1] && \forall x\in S^-,\\
        f(x) &= 0 && \text{for all other } x\in\mathcal{X}^-.
    \end{align*}
    For a given $\varepsilon\in(0,1]$, define the subset
    $\tilde{S} := \{x\in S^- \mid f(x)\ge \varepsilon\}$,
    and suppose that its bag cover set has probability
    $\delta := \mathbb{P}\bigl(\mathcal{C}(\tilde{S})\bigr) > 0$.
    Define the corrected classifier
    \[
        f'(x)=
        \begin{cases}
            0, & x\in S^-,\\
            f(x), & \text{otherwise}.
        \end{cases}
    \]
    Then there exists a temperature $\alpha>0$ depending only on $N$ and $M$ such that
    \[
        \mathcal{L}(f)-\mathcal{L}(f') \;\ge\; \delta\cdot\Lambda_{\mathrm{FP}}(\varepsilon,\alpha,N,M) \;>\; 0,
    \]
    where $\Lambda_{\mathrm{FP}}(\varepsilon,\alpha,N,M) = $
    \begin{align*}
    \frac{1}{1+M}
    \cdot
    \min\Bigl\{
    &-\log\!\bigl(\alpha(2u+1)\bigr)
      + M\log(1-\alpha u),
    \\
    &-\log\!\bigl(
      1-\hat{p}_{\min}(\varepsilon,\alpha,N)
    \bigr)
    \Bigr\},
    \end{align*}
    and $\hat{p}_{\min}(\varepsilon,\alpha,N)$ is the minimum softmax weighted sum under the constraint $\max_i p_i\ge\varepsilon$,
    given by Lemma~\ref{lem:minimum-advanced}.
\end{lemma}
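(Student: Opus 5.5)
The plan is to rerun the decomposition from the proof of Lemma~\ref{lem:mil-bias-cont-step2}, tracking constants instead of only signs. Fix the same $\alpha$ as in~\eqref{eq:alpha-cond}; it depends only on $N$ and $M$. Set $\varepsilon_1=1-\alpha u-\alpha$ and split $S^-$ into $S_1=\{x\in S^-: f(x)>\varepsilon_1\}$ and $S_2=S^-\setminus S_1$. Define the families $A_1,\dots,A_4$ and the terms $\Delta_1,\dots,\Delta_4$ as before, so that $\mathcal{L}(f)-\mathcal{L}(f')=\Delta_1+\Delta_2+\Delta_3+\Delta_4$. Write
\[
K_1:=-\log\bigl(\alpha(2u+1)\bigr)+M\log(1-\alpha u)>0,\qquad K_2:=-\log\bigl(1-\hat p_{\min}(\varepsilon,\alpha,N)\bigr)>0 .
\]

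The qualitative proof already gives three per-term bounds, and I would reuse them as they stand.
\begin{itemize}
\item Every bag in $A_2$ gains at least $-\log(\alpha(2u+1))$, and every bag in $A_1$ loses at most $-\log(1-\alpha u)$. Hence
\[
\Delta_1+\Delta_2\ge \mathbb{P}(A_2)\bigl[-\log(\alpha(2u+1))\bigr]+\mathbb{P}(A_1)\log(1-\alpha u).
\]
Applying the $M$-assumption to $S_1$ gives $\mathbb{P}(A_1)\le M\,\mathbb{P}(A_2)$, and therefore $\Delta_1+\Delta_2\ge \mathbb{P}(A_2)\,K_1$.
\item $\Delta_3\ge 0$, by the derivative-sign argument (Lemma~\ref{lem:derivative}).
\item For $\Delta_4$, I use the threshold $\varepsilon$. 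Take a bag $B\in A_4\cap\mathcal{C}(\tilde S)$. It is negative and contains an instance with $f\ge\varepsilon$, so Lemma~\ref{lem:minimum-advanced} gives $\hat p(B)\ge \hat p_{\min}(\varepsilon,\alpha,n)$. Since $\hat p'(B)=0$, the gain on $B$ is at least $-\log(1-\hat p_{\min})$.
\end{itemize}

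To pass from bag size $n$ to the uniform $N$, I need $\hat p_{\min}(\varepsilon,\alpha,n)\ge \hat p_{\min}(\varepsilon,\alpha,N)$ for $n\le N$. The argument is as follows. Append to an $n$-vector an extra coordinate equal to its current SWS value. The softmax-weighted average is then unchanged, and the constraint $\max_i p_i\ge\varepsilon$ is preserved. So the minimum over $n+1$ coordinates is at most the minimum over $n$ coordinates, i.e. it is non-increasing in $n$. This yields $\Delta_4\ge \mathbb{P}(A_4\cap\mathcal{C}(\tilde S))\,K_2$.

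The main obstacle is that $\Delta_3$ contributes nothing quantitatively. The positive bags in $\mathcal{C}^+(\tilde S)$ that avoid $S_1$ give no usable gain, so the bound must be charged to negative bags only. Here I would apply the $M$-assumption once more, now to $\tilde S\subseteq\mathcal{X}^-$ itself:
\[
\delta=\mathbb{P}(\mathcal{C}^+(\tilde S))+\mathbb{P}(\mathcal{C}^-(\tilde S))\le (1+M)\,\mathbb{P}(\mathcal{C}^-(\tilde S)).
\]
Every bag in $\mathcal{C}^-(\tilde S)$ is a negative bag meeting $S^-$. It therefore lies either in $A_2$ (if it meets $S_1$) or in $A_4\cap\mathcal{C}(\tilde S)$ (otherwise). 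Hence $\mathbb{P}(A_2)+\mathbb{P}(A_4\cap\mathcal{C}(\tilde S))\ge \mathbb{P}(\mathcal{C}^-(\tilde S))$.

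Combining the pieces gives
\begin{align*}
\mathcal{L}(f)-\mathcal{L}(f')&\ge \mathbb{P}(A_2)K_1+\mathbb{P}(A_4\cap\mathcal{C}(\tilde S))K_2\\
&\ge \min\{K_1,K_2\}\,\mathbb{P}(\mathcal{C}^-(\tilde S))\\
&\ge \frac{\delta}{1+M}\min\{K_1,K_2\},
\end{align*}
which is exactly $\delta\cdot\Lambda_{\mathrm{FP}}$. Strict positivity follows because $K_1>0$ by the choice of $\alpha$, and $K_2>0$ because $\hat p_{\min}>0$ by the explicit formula in Lemma~\ref{lem:minimum-advanced}.
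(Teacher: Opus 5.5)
Your proposal is correct and follows the paper's proof essentially step for step. It uses the same choice of $\alpha$, the same split of $S^-$ into $S_1,S_2$, the same bounds $\Delta_1+\Delta_2\ge\mathbb{P}(A_2)K_1$, $\Delta_3\ge 0$ and $\Delta_4\ge\mathbb{P}(A_4\cap\mathcal{C}(\tilde S))K_2$, the same covering of $\mathcal{C}^-(\tilde S)$ by $A_2$ and $A_4$, and the same final application of the $M$-assumption to $\tilde S$. The differences are cosmetic: you use an inclusion where the paper writes an exact partition, and you give a valid justification (appending a coordinate equal to the current SWS value) for the monotonicity of $\hat p_{\min}$ in $n$, which the paper only asserts.
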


\begin{proof}
    Choose $\alpha>0$ small enough so that
    \begin{align*}
        &\alpha<\frac{1}{2(u+1)},\\
        &\Lambda_0:=-\log\!\bigl(\alpha(2u+1)\bigr)+M\log(1-\alpha u)>0.
    \end{align*}
    Such an $\alpha$ exists because as $\alpha\to0^+$ the second term tends to $0$ while the first diverges to $+\infty$.
    Set $\varepsilon_1 = 1-\alpha u-\alpha$ and partition $S^-$ into
    $S_1 = \{x\in S^-:f(x)>\varepsilon_1\}$ and $S_2 = S^-\setminus S_1$.

    Following the decomposition in Lemma~\ref{lem:mil-bias-cont-step2}, the change in expected risk is written as
    \[
        \mathcal{L}(f)-\mathcal{L}(f') = \Delta_1+\Delta_2+\Delta_3+\Delta_4,
    \]
    where the four terms correspond to the bag families
    \begin{align*}
        A_1 &= \mathcal{C}^+(S_1), & A_2 &= \mathcal{C}^-(S_1),\\
        A_3 &= \mathcal{C}^+(S_2)\setminus\mathcal{C}^+(S_1), &
        A_4 &= \mathcal{C}^-(S_2)\setminus\mathcal{C}^-(S_1),
    \end{align*}
    defined with respect to $S_1,S_2$.

    \noindent\textbf{Bounds for $\Delta_1,\Delta_2,\Delta_3$.}
    Exactly as in the proof of Lemma~\ref{lem:mil-bias-cont-step2}, the conditions on $f$ guarantee
    \begin{align}
        \Delta_1+\Delta_2 &\ge \mathbb{P}(A_2)\,\Lambda_0, \label{eq:fp-quant-d12a2}\\
        \Delta_3 &\ge 0. \label{eq:fp-quant-d3a2}
    \end{align}
    Since $\mathbb{P}(A_2) \ge \mathbb{P}(A_2\cap\mathcal{C}^-(\tilde{S}))$ and $\Lambda_0 > 0$, we further have
    \begin{equation}\label{eq:fp-quant-d12b2}
        \Delta_1+\Delta_2 \;\ge\; \mathbb{P}\bigl(A_2\cap\mathcal{C}^-(\tilde{S})\bigr)\,\Lambda_0.
    \end{equation}

    \noindent\textbf{Bound for $\Delta_4$.}
    As analysed in Lemma~\ref{lem:mil-bias-cont-step2}, for any $B\in A_4$, 
    \begin{align*}
        \log(1-\hat{p}'(B))-\log(1-\hat{p}(B)) 
        &= -\log(1-\hat{p}(B)) \\
        &> 0.
    \end{align*}
    Hence $\Delta_4 = \mathbb{E}[\mathbf{1}_{A_4}(B)\,(-\log(1-\hat{p}(B)))]\ge 0$.

    Now consider any $B\in A_4\cap\mathcal{C}^-(\tilde{S})$. Such a bag additionally contains at least one instance from $\tilde{S}$, so $\max_{x\in B} f(x)\ge\varepsilon$.
    Applying Lemma~\ref{lem:minimum-advanced} to the predictions on $B$ gives
    \[
        \hat{p}(B) \;\ge\; \hat{p}_{\min}(\varepsilon,\alpha,n) \;\ge\; \hat{p}_{\min}(\varepsilon,\alpha,N),
    \]
    where $n\le N$ is the size of $B$ and $\hat{p}_{\min}$ is non‑increasing in $n$.
    Consequently,
    \[
        -\log(1-\hat{p}(B)) \;\ge\; -\log\!\bigl(1-\hat{p}_{\min}(\varepsilon,\alpha,N)\bigr) > 0.
    \]
    Restricting the expectation defining $\Delta_4$ to the subset $A_4\cap\mathcal{C}^-(\tilde{S})$ yields
    \begin{equation}\label{eq:fp-quant-d4}
    \begin{aligned}
        &\Delta_4 \;\ge\; \mathbb{P}(A_4 \cap  \mathcal{C}^-(\tilde{S}))\,\Lambda_1,\\
        &\Lambda_1:=-\log\!\bigl(1-\hat{p}_{\min}(\varepsilon,\alpha,N)\bigr).
    \end{aligned}
    \end{equation}

    \noindent\textbf{Combining the estimates.}
    From \eqref{eq:fp-quant-d12b2}, \eqref{eq:fp-quant-d3a2} and \eqref{eq:fp-quant-d4} we obtain
    \begin{align*}
        &\mathcal{L}(f)-\mathcal{L}(f')\\
        \ge &\min\{\Lambda_0,\Lambda_1\}\,
           \bigl(\mathbb{P}(A_2\cap\mathcal{C}^-(\tilde{S})) + \mathbb{P}(A_4\cap\mathcal{C}^-(\tilde{S}))\bigr)\\
        = &\min\{\Lambda_0,\Lambda_1\}\,
           \mathbb{P}\bigl(\mathcal{C}^-(\tilde{S})\bigr).
    \end{align*}

    Finally, the $M$-assumption applied to $\tilde{S}\subseteq\mathcal{X}^-$ gives
    \[
        \mathbb{P}\bigl(\mathcal{C}(\tilde{S})\bigr) \;\le\; (1+M)\,\mathbb{P}\bigl(\mathcal{C}^-(\tilde{S})\bigr),
    \]
    i.e.\ $\mathbb{P}(\mathcal{C}^-(\tilde{S})) \ge \delta/(1+M)$.  Substituting back,
    \begin{align*}
        \mathcal{L}(f)-\mathcal{L}(f^*) &\ge
        \frac{\delta}{1+M}\,\min\{\Lambda_0,\Lambda_1\}\\
        &=\delta\cdot\Lambda_{\mathrm{FP}}(\varepsilon,\alpha,N,M) \;>\;0,
    \end{align*}
    which completes the proof.
\end{proof}

\begin{theorem}[Convergence in probability (Restated)]\label{thm:mil-convergence-restated}
    Let Assumption~\ref{ass:M} hold. 
    Let $N$ be the maximum allowed bag size,
    $u = W_0\bigl(\frac{N-1}{e}\bigr)$, and $M$ the constant from the $M$-assumption.
    Let $f^*(x)=y(x)$ be the Bayes optimal classifier, and let $\{f_l\}$ be a sequence of instance classifiers.
    Define the expected loss $\mathcal{L}(f)$ as in Lemma~\ref{lem:mil-bias-cont-step1}.
    Then there exists a sufficiently small temperature $\alpha > 0$ such that,
    if $\lim_{l\to\infty} \mathcal{L}(f_l) = \mathcal{L}(f^*)$,
    then $\{f_l\}$ converges to $f^*$ in probability.
    Equivalently, for any $\varepsilon,\delta > 0$,
    \[
        \lim_{l\to\infty} \mathbb{P}\Bigl(\mathcal{C}\bigl(\{x:|f_l(x)-f^*(x)|\ge \varepsilon\}\bigr)\Bigr) = 0.
    \]
\end{theorem}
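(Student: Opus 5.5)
We argue by contradiction, using the quantitative lemmas (Lemma~\ref{lem:fn-quant} and Lemma~\ref{lem:fp-quant}). Fix $\alpha$ as in the proof of Theorem~\ref{thm:mil-minimize-restated}, i.e.\ $\alpha<\frac{1}{2(u+1)}$ and $\Lambda_0=-\log(\alpha(2u+1))+M\log(1-\alpha u)>0$; this choice depends only on $N$ and $M$. Suppose there are $\varepsilon,\delta>0$ and a subsequence (still written $f_l$) along which
\[
\mathbb{P}\bigl(\mathcal{C}(\{x:|f_l(x)-f^*(x)|\ge\varepsilon\})\bigr)\ge\delta .
\]
Split the bad set into a false-negative part $S_l^+=\{x\in\mathcal{X}^+: f_l(x)\le 1-\varepsilon\}$ and a false-positive part $\tilde S_l=\{x\in\mathcal{X}^-: f_l(x)\ge\varepsilon\}$. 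By subadditivity of cover probabilities, one of them has cover probability at least $\delta/2$ infinitely often. Positive instances only appear in positive bags, so $\mathcal{C}(S_l^+)=\mathcal{C}^+(S_l^+)$. The aim is a uniform gap
\[
\mathcal{L}(f_l)-\mathcal{L}(f^*)\ge c(\varepsilon,\delta,\alpha,N,M)>0,
\]
which contradicts $\mathcal{L}(f_l)\to\mathcal{L}(f^*)$.

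For this we use a two-stage correction chain $f_l\to g_l\to h_l\to f^*$, in the spirit of the proof of Theorem~\ref{thm:mil-minimize-restated}:
\begin{itemize}
\item $g_l$ sets every positive instance to $1$.
\item $h_l$ additionally sets every negative instance with prediction in $[0,\varepsilon)$ to $0$.
\item From $h_l$, Lemma~\ref{lem:fp-quant} applies with $S^-=\{x\in\mathcal{X}^-:h_l(x)>0\}$, and its correction is exactly $f^*$.
\end{itemize}
It then suffices to show each step does not increase the loss, and that at least one step yields a drop of order $\delta$.

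\textbf{First step.} Monotonicity follows from Corollary~\ref{cor:maximum}, since raising predictions to $1$ in positive bags raises $\hat p$. In the false-negative case, Lemma~\ref{lem:fn-quant} applied to $S_l^+$, followed by raising the remaining positives (again non-increasing by Corollary~\ref{cor:maximum}), gives
\[
\mathcal{L}(f_l)-\mathcal{L}(g_l)\ge \tfrac{\delta}{2}\,\Lambda_{\mathrm{FN}}(\varepsilon,\alpha,N).
\]

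\textbf{Second step, $g_l\to h_l$.} This is the step I expect to be the main obstacle. Zeroing small false positives is harmless in negative bags, because $\hat p$ only drops toward $0$ there; indeed it suffices that all predictions become $0$ or that $\hat p$ does not increase. In positive bags, however, we need $\hat p$ not to decrease. The tool is Lemma~\ref{lem:derivative}: a positive bag now contains a $1$, so $\hat p\ge 1-\alpha u$ by Lemma~\ref{lem:minimum-advanced}. For any coordinate $p_x\le \varepsilon_1=1-\alpha u-\alpha$ the partial derivative is therefore non-positive, as in the $\Delta_3$ argument of Lemma~\ref{lem:mil-bias-cont-step2}. Lowering such a coordinate keeps $\hat p\ge1-\alpha u$, so the sign condition persists along the whole path.

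This requires $\varepsilon\le\varepsilon_1$. We may assume it without loss of generality, since shrinking $\varepsilon$ only enlarges the bad set. In negative bags, zeroing a coordinate can momentarily increase $\hat p$ if the other predictions are large. To avoid this, I would fold this step into Lemma~\ref{lem:fp-quant} itself: apply it directly to $g_l$ with $S^-=\{x\in\mathcal{X}^-:g_l(x)>0\}$. Its hypotheses are then met, and its correction sets all of $S^-$ to $0$ at once, yielding $f^*$ directly. This removes the need for $h_l$.

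\textbf{Conclusion.} In the false-positive case, Lemma~\ref{lem:fp-quant} with $\tilde S=\tilde S_l$ (cover probability at least $\delta/2$) gives
\[
\mathcal{L}(g_l)-\mathcal{L}(f^*)\ge\tfrac{\delta}{2}\,\Lambda_{\mathrm{FP}}(\varepsilon,\alpha,N,M).
\]
In the false-negative case, Lemma~\ref{lem:fp-quant} (or, if the relevant cover is null, Theorem~\ref{thm:mil-minimize-restated}) gives $\mathcal{L}(g_l)\ge\mathcal{L}(f^*)$. Combining the two stages,
\[
\mathcal{L}(f_l)-\mathcal{L}(f^*)\ge\tfrac{\delta}{2}\min\{\Lambda_{\mathrm{FN}},\Lambda_{\mathrm{FP}}\}
\]
infinitely often. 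This contradicts $\mathcal{L}(f_l)\to\mathcal{L}(f^*)$, so the bad cover probabilities tend to $0$, which is the claimed convergence in probability. The constants do not depend on $l$ because $\Lambda_{\mathrm{FN}}$ and $\Lambda_{\mathrm{FP}}$ depend only on $(\varepsilon,\alpha,N,M)$.
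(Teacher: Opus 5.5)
Your proposal is correct and follows essentially the same route as the paper. You fix $\alpha$ depending only on $N$ and $M$, raise all positive predictions to $1$ (gaining $\Lambda_{\mathrm{FN}}$ from Lemma~\ref{lem:fn-quant}), then apply Lemma~\ref{lem:fp-quant} with $S^-=\{x\in\mathcal{X}^-: g_l(x)>0\}$ so the corrected classifier is exactly $f^*$; the differences are cosmetic (contradiction with a $\delta/2$ split instead of the paper's direct bound $\delta_1\Lambda_1+\delta_2\Lambda_2\ge\delta\min\{\Lambda_1,\Lambda_2\}$, and an $h_l$ detour with its $\varepsilon\le\varepsilon_1$ reduction that you rightly drop).
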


\begin{proof}
    Choose a temperature $\alpha>0$ small enough to satisfy both
    \begin{align}
        \alpha &< \frac{1}{2(u+1)},\label{eq:alpha-main}\\
        \Lambda_0 &:= -\log\!\bigl(\alpha(2u+1)\bigr) + M\log(1-\alpha u) > 0. \nonumber
    \end{align}
    Such $\alpha$ exists because $\Lambda_0\to+\infty$ as $\alpha\to0^+$ while the first bound is an explicit positive constant.
    In the sequel we work with this fixed $\alpha$.
    Define the two positive constants
    \begin{align*}
        \Lambda_1(\varepsilon) &:= 
        \Lambda_{\mathrm{FN}}(\varepsilon,\alpha,N),\\
        \Lambda_2(\varepsilon) &:= 
        \Lambda_{\mathrm{FP}}(\varepsilon,\alpha,N,M),
    \end{align*}
    where $\Lambda_{\mathrm{FN}}$ and $\Lambda_{\mathrm{FP}}$ are given by Lemma~\ref{lem:fn-quant} and Lemma~\ref{lem:fp-quant} respectively.
    Both $\Lambda_1(\varepsilon)$ and $\Lambda_2(\varepsilon)$ are strictly positive for every $\varepsilon\in(0,1]$.

    Fix arbitrary $\varepsilon,\delta > 0$ and set
    \[
        \xi := \delta \cdot \min\{\Lambda_1(\varepsilon),\,\Lambda_2(\varepsilon)\} > 0.
    \]
    We show that for any classifier $f$,
    \begin{equation}\label{eq:key-ineq}
    \begin{aligned}
        &\mathbb{P}\Bigl(\mathcal{C}\bigl(\{x:|f(x)-f^*(x)|\ge \varepsilon\}\bigr)\Bigr) \ge \delta \\
        &\Longrightarrow
        \mathcal{L}(f) - \mathcal{L}(f^*) \ge \xi.
    \end{aligned}
    \end{equation}
    Once this implication is proved, the convergence follows immediately:
    if $\mathcal{L}(f_l)\to\mathcal{L}(f^*)$, then for sufficiently large $l$ we have
    $\mathcal{L}(f_l)-\mathcal{L}(f^*) < \xi$, which forces the left‑hand side of~\eqref{eq:key-ineq} to be smaller than $\delta$.
    Hence the bag‑cover probability of the $\varepsilon$-deviating instances tends to $0$.

    \smallskip
    \noindent\textit{Proof of~\eqref{eq:key-ineq}.}
    Let $f$ be a classifier with $\mathbb{P}(\mathcal{C}(S))\ge\delta$, where $S:=\{x:|f(x)-f^*(x)|\ge\varepsilon\}$.
    Decompose $S$ into false negatives and false positives:
    \begin{align*}
        S_{\mathrm{FN}} &:= \{x\in\mathcal{X}^+ : f(x)\le 1-\varepsilon\},\\
        S_{\mathrm{FP}} &:= \{x\in\mathcal{X}^- : f(x)\ge \varepsilon\}.
    \end{align*}
    Clearly $S = S_{\mathrm{FN}}\cup S_{\mathrm{FP}}$ and the union is disjoint.
    Denote $\delta_1 = \mathbb{P}(\mathcal{C}(S_{\mathrm{FN}}))$ and $\delta_2 = \mathbb{P}(\mathcal{C}(S_{\mathrm{FP}}))$.
    By subadditivity of $\mathbb{P}(\mathcal{C}(\cdot))$,
    \[
        \delta \le \mathbb{P}(\mathcal{C}(S)) \le \delta_1+\delta_2.
    \]

    We now modify $f$ to the Bayes classifier $f^*$ in three steps and track the loss decrease.

    \textbf{Step 1: Correct significant false negatives.}
    Define $f_1$ by setting $f_1(x)=1$ for $x\in S_{\mathrm{FN}}$ and $f_1(x)=f(x)$ otherwise.
    If $\delta_1>0$, Lemma~\ref{lem:fn-quant} (applicable because $f(x)\le 1-\varepsilon$ on $S_{\mathrm{FN}}$) yields
    \[
        \mathcal{L}(f) - \mathcal{L}(f_1) \;\ge\; \delta_1\,\Lambda_1(\varepsilon).
    \]
    If $\delta_1=0$ the inequality trivially holds as $0\ge0$.

    \textbf{Step 2: Correct remaining false negatives.}
    Define $f_2$ by $f_2(x)=1$ for all $x\in\mathcal{X}^+$ and $f_2(x)=f_1(x)$ for $x\in\mathcal{X}^-$.
    Since $f_2$ only raises predictions of positive instances from possibly values $>1-\varepsilon$ to $1$,
    Lemma~\ref{lem:mil-bias-cont-step1} implies $\mathcal{L}(f_2)\le\mathcal{L}(f_1)$.
    Hence
    \[
        \mathcal{L}(f) - \mathcal{L}(f_2) \;\ge\; \delta_1\,\Lambda_1(\varepsilon).
    \]

    \textbf{Step 3: Correct false positives.}
    Now $f_2$ satisfies precisely the prerequisites of Lemma~\ref{lem:fp-quant}:
    $f_2(x)=1$ on $\mathcal{X}^+$, $f_2(x)\in(0,1]$ on $S^-:=\{x\in\mathcal{X}^-:f_2(x)>0\}$,
    and $f_2(x)=0$ on the rest of $\mathcal{X}^-$.
    The set $S_{\mathrm{FP}}$ is exactly $\tilde{S}$ in that lemma,
    and its bag‑cover probability is $\delta_2$.
    Applying Lemma~\ref{lem:fp-quant} with the already chosen $\alpha$ (which satisfies~\eqref{eq:alpha-main}) gives
    \[
        \mathcal{L}(f_2) - \mathcal{L}(f') \;\ge\; \delta_2\,\Lambda_2(\varepsilon),
    \]
    where $f'$ is the classifier obtained by setting all instances in $S^-$ to $0$.
    Note that $f'$ assigns $1$ to all positive instances and $0$ to all negative ones,
    i.e.\ $f'=f^*$.  Hence $\mathcal{L}(f')=\mathcal{L}(f^*)$.

    \textbf{Combining the three steps.}
    Summing the inequalities,
    \begin{align*}
        \mathcal{L}(f) - \mathcal{L}(f^*) &\;\ge\;
        \delta_1\,\Lambda_1(\varepsilon) + \delta_2\,\Lambda_2(\varepsilon)\\
        &\;\ge\; (\delta_1+\delta_2)\,\min\{\Lambda_1(\varepsilon),\Lambda_2(\varepsilon)\}\\
        &\;\ge\; \delta\,\min\{\Lambda_1(\varepsilon),\Lambda_2(\varepsilon)\}\\
        &= \xi.
    \end{align*}
    This establishes~\eqref{eq:key-ineq} and completes the proof.
\end{proof}

\end{document}